\documentclass{article}

\usepackage{microtype}
\usepackage{graphicx}
\usepackage{subcaption}
\usepackage{booktabs} 
\usepackage{kotex}
\usepackage{hyperref}
\usepackage{subcaption} 

\usepackage[accepted]{icml2026}
\usepackage{amsmath}
\usepackage{amssymb}
\usepackage{mathtools}
\usepackage{amsthm}
\usepackage{multirow}
\usepackage{enumitem}   
\usepackage{lipsum}
\usepackage{pifont}
\newcommand{\cmark}{\ding{51}}%
\newcommand{\xmark}{\ding{55}}%

\usepackage[capitalize,noabbrev]{cleveref}

\theoremstyle{plain}
\newtheorem{theorem}{Theorem}[section]
\newtheorem{proposition}[theorem]{Proposition}
\newtheorem{lemma}[theorem]{Lemma}

\theoremstyle{definition}

\newtheorem{assumption}[theorem]{Assumption}
\theoremstyle{remark}

\usepackage[textsize=tiny]{todonotes}

\usepackage{soul}
\usepackage{marginnote}

\newcounter{jccomment}
\todostyle{jcinline}{%
  inline,
  backgroundcolor=red!8!white,
  bordercolor=red!20!white,
  linecolor=red!20!white,
  inlinewidth=\linewidth,
  noinlinepar,
}
\todostyle{jcinlineshort}{%
  inline,
  backgroundcolor=red!8!white,
  bordercolor=red!20!white,
  linecolor=red!20!white,
  inlinewidth=0.5\linewidth,
  noinlinepar,
}
\todostyle{jcfancy}{%
  fancyline,
  backgroundcolor=red!12!white,
  bordercolor=red!20!white,
  linecolor=red!20!white
}

\makeatletter
\if@todonotes@disabled
  
\else
  
\fi
\makeatother

\icmltitlerunning{How Much Memory Do We Need? Adaptive Memory Gate for Neural Operators}

\begin{document}

\twocolumn[
  \icmltitle{How Much Memory Do We Need? Adaptive Memory Gate for Neural Operators}
  %



  \icmlsetsymbol{equal}{*}

  \begin{icmlauthorlist}
    \icmlauthor{Jihyeon Hur}{KAIST}
    \icmlauthor{Yongseok Kwon}{KAIST}
    \icmlauthor{Min-Gi Jo}{KAIST}
    \icmlauthor{Jeongwhan Choi}{KAIST}
    \icmlauthor{Noseong Park}{KAIST}
  \end{icmlauthorlist}

  \icmlaffiliation{KAIST}{KAIST, Republic of Korea}

  \icmlcorrespondingauthor{Noseong Park}{noseong@kaist.ac.kr}

  \icmlkeywords{Machine Learning, ICML}

  \vskip 0.3in
]



\printAffiliationsAndNotice{}  

\begin{abstract}
Neural operators have emerged as a powerful data-driven approach for solving time-dependent PDEs. Among recent advances, memory-augmented neural operators explicitly incorporate past states and have achieved remarkable performance under low-resolution observation settings. However, existing approaches apply a fixed memory weight regardless of observation conditions, such as resolution or physical parameters, limiting their adaptability. Our preliminary experiments reveal that optimal memory weight varies with resolution and viscosity, implying that a fixed memory weight cannot simultaneously optimize performance across diverse settings. We propose AMGFNO, which dynamically modulates memory weight through a learnable gate. On the Kuramoto-Sivashinsky and Burgers' equations, AMGFNO achieves 55--79\% nRMSE reduction over at low resolution, with the learned gate value automatically decreasing from $\bar{g} \approx 0.7$ to near-zero as resolution increases.
\end{abstract}
\section{Introduction}

Neural operators have emerged as a powerful data-driven alternative to classical numerical solvers for partial differential equations (PDEs)~\cite{li2021fourier,kovachki2023neural,lu2019deeponet}. Classical methods such as finite difference, finite volume, and finite element methods~\cite{FDM,FVM,FEM} suffer from high computational cost in high dimensions and sensitivity to initial conditions. Neural operators address these limitations by learning solution operators from data, enabling fast inference after a single training phase. Among the most practically important PDE families are time-dependent PDEs, which govern physical phenomena ranging from fluid dynamics to reaction-diffusion systems~\cite{takamoto2022pdebench}. For such PDEs, neural operators commonly adopt the Markovian assumption, predicting the next state only from the current state~\cite{FFNO,lippe2023pde}.

The Markovian assumption, however, breaks down when PDE solutions are observed at low resolution~\cite{S4FFNO}. As resolution decreases, the Nyquist-Shannon sampling theorem limits the number of observable Fourier modes, causing significant loss of high-frequency information. Mori-Zwanzig theory formalizes partial observation as inducing strongly non-Markovian behavior, making the memory term — accumulated information from past states — decisive for accurate prediction~\cite{Mori-Zwanzig}. 
\begin{table}[t]
    \centering
    \small
    \begin{tabular}{lccc}\toprule
        Method & Memory & Long-range & Adaptive \\\midrule
        FFNO                & \xmark & \xmark & \xmark \\
        Multi-Input FFNO    & \cmark & \xmark & \xmark  \\
        S4FFNO              & \cmark & \cmark & \xmark  \\
        \cmidrule(lr){1-4}
        \textbf{AMGFNO (Ours)}       & \cmark & \cmark & \cmark  \\
        \bottomrule
    \end{tabular}
    \caption{Comparison of neural operator architectures across three properties. Memory: incorporates past states beyond the current one; Long-range: spans the full trajectory rather than a fixed-length window; Adaptive: adjusts memory weight based on input conditions. AMGFNO is the first to achieve all three.}
    \label{tab:comp}
\end{table}

\begin{figure*}[h!]
    \centering
    \begin{subfigure}[t]{0.245\textwidth}
        \centering
        \includegraphics[width=\linewidth]{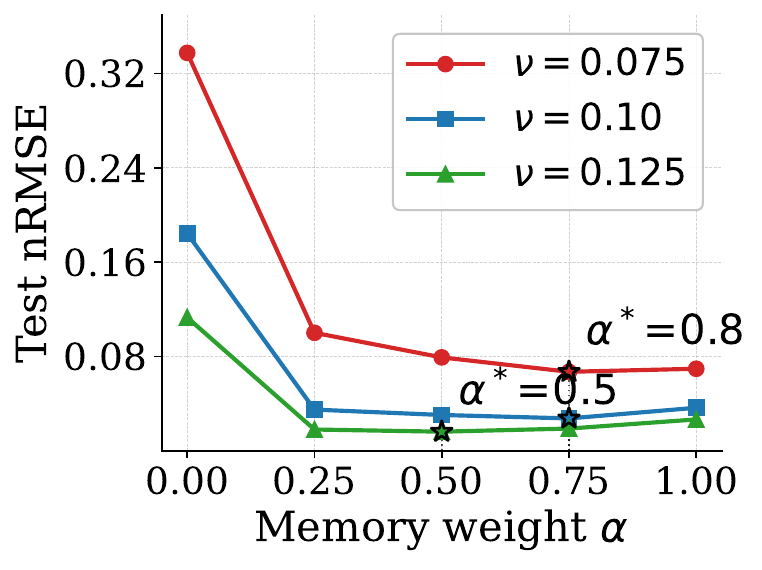}
        \caption{$f = 32$}
    \end{subfigure}\hfill
    \begin{subfigure}[t]{0.245\textwidth}
        \centering
        \includegraphics[width=\linewidth]{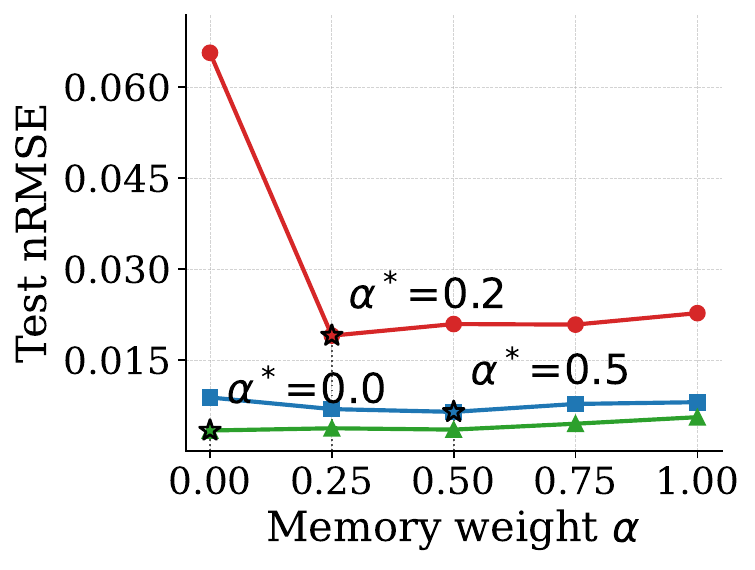}
        \caption{$f = 64$}
    \end{subfigure}\hfill
    \begin{subfigure}[t]{0.245\textwidth}
        \centering
        \includegraphics[width=\linewidth]{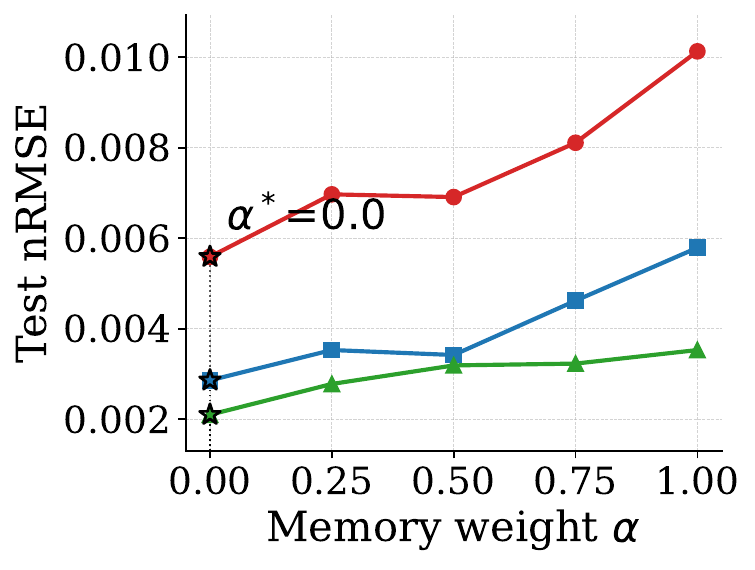}
        \caption{$f = 128$}
    \end{subfigure}\hfill
    \begin{subfigure}[t]{0.225\textwidth}
        \centering
        \includegraphics[width=\linewidth]{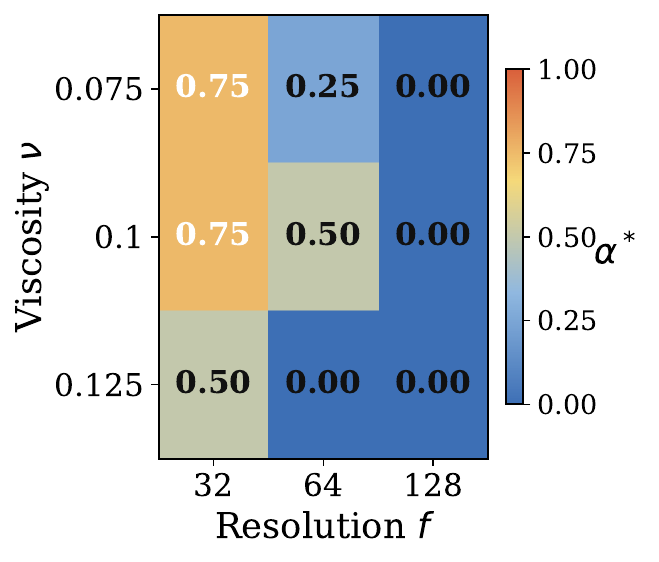}
        \caption{Optimal $\alpha^*$}
    \end{subfigure}
    \caption{Effect of memory weight $\alpha$ on nRMSE across resolutions $f$ and viscosity $\nu$ values on the KS equation. Panels (a-c) show nRMSE as a function of $\alpha$ for three viscosity values ($\nu \in \{0.075, 0.10, 0.125\}$) at a fixed resolution, with stars indicating optimal memory weight $\alpha^*$. Panel (d) summarizes the optimal memory weight $\alpha^*$ across all settings. 
    }
    \label{fig:motivation}
\end{figure*}

\paragraph{Fixed memory weight cannot adapt to varying conditions.} Explicitly modeling memory can therefore compensate for the information lost at low resolution. Existing memory-augmented neural operators use a fixed memory weight that controls how much memory is used, regardless of observation conditions~\cite{S4FFNO}. However, the utility of memory varies with resolution and physical parameters. Section~\ref{subsec:fixedalpha} presents a preliminary study showing that the optimal memory weight varies with resolution and viscosity, and that no single fixed memory weight simultaneously optimizes performance across all settings. 
\paragraph{Main idea.} We propose AdaptiveMemoryGateFNO (\textbf{AMGFNO}), a neural operator that dynamically controls memory weight via a adaptive memory gate. As summarized in Table~\ref{tab:comp}, FFNO uses no memory, Multi-Input FFNO captures only short-range history, and S4FFNO~\cite{S4FFNO} introduces long-range memory but lacks adaptive control. 
AMGFNO is the first to combine all three properties: (1) using past states as memory beyond the current timestep, (2) capturing long-range temporal dependencies across the full trajectory rather than a fixed window, and (3) adaptively adjusting memory weight based on the current input condition rather than using a fixed weight.
Throughout this paper, we use $\alpha$ to denote the manually-tuned memory weight in our ablation study, $\alpha^*$ for the optimal memory weight that minimizes error per condition, and $\bar{g}$ for AMGFNO's learned mean memory gate value.
The contributions of AMGFNO are summarized as follows:
\begin{itemize}[leftmargin=*, topsep=2pt]
    \item \textbf{Motivation.} We empirically show that the optimal memory weight $\alpha^*$ varies with resolution and viscosity (Section~\ref{subsec:fixedalpha}), and theoretically show that the gain of memory can converge to zero in the high-resolution limit (Section~\ref{sec:theory}).
    
    \item \textbf{Method.} We propose AMGFNO (Section~\ref{sec:method}), which dynamically modulates memory weight via an adaptive memory gate. The learned memory gate value decreases monotonically from $\bar{g} \approx 0.7$ to near-zero as resolution increases, consistent with theoretical predictions.
    
    \item \textbf{Results.} On the KS equation across three viscosity values and three resolutions, as well as Burgers' equation, AMGFNO achieves 55--79\% nRMSE reduction over the best fixed-memory baseline (S4FFNO) at low resolution ($f=32$) and 25--40\% at high resolution ($f=128$), demonstrating effective memory adaptation across all experimental settings (Section~\ref{sec:experiment}).
\end{itemize}






\section{Preliminaries and Motivation}
\label{sec:preliminary}
\subsection{Time-dependent PDE Problem Definition}
\label{sec:pde-definition}

We consider a time-dependent partial differential equation defined on a spatial domain
$\Omega\subset\mathbb R^d$ and a time interval $[0,T]$:
\begin{equation}
    \frac{\partial u}{\partial t}(t,x)
    =
    \mathcal L_{\rho}[u](t,x),
    \qquad
    (t,x)\in[0,T]\times\Omega.
    \label{eq:general-pde}
\end{equation}
The PDE is equipped with an initial condition
\begin{equation}
    u(0,x)=u_0(x),
    \qquad
    x\in\Omega,
    \label{eq:general-ic}
\end{equation}
and a boundary condition
\begin{equation}
    \mathcal B[u|_{\partial\Omega}](t)=0,
    \qquad
    t\in[0,T].
    \label{eq:general-bc}
\end{equation}
Here $u(t,\cdot):\Omega\to\mathbb R^V$ denotes the PDE state at time $t$, $x$ is the spatial coordinate, V is the channel dimension of the input feature space, $\mathcal L_{\rho}$ is a differential operator in $x$ parameterized by physical parameters $\rho$, and $\mathcal B$ specifies the boundary condition. In our one-dimensional experiments, we use periodic boundary conditions.
Let $\Phi_t$ denote the solution map induced by the PDE, so that
\[
    u(t,\cdot)=\Phi_t(u_0).
\]
The learning problem considered in this paper is to approximate the time evolution of this solution map from discretized observations of the PDE state. The resolved-state and spectral-truncation notation used in the theory is introduced in Section~\ref{sec:theory}. Dataset-specific PDE equations, parameter values, and data-generation details are deferred to Appendix~\ref{app:exp-details}.

\subsection{Preliminary Study: How Much Memory Should Be Used?}
\label{subsec:fixedalpha}
To analyze when memory is beneficial, we conduct an ablation study 
that manually controls the memory weight in S4FFNO~\cite{S4FFNO}. 
S4FFNO follows a \textsc{ffno--ffno--s4--ffno--ffno} layer structure, 
where the S4 memory layer is inserted at the middle position 
($\ell^* = 3$). Let $\mathbf{h}_t^{(\ell^*-1)} \in \mathbb{R}^{N \times H}$ 
denote the hidden state at the output of the second FFNO layer, and 
let $\mathbf{z} = \mathcal{M}(\mathbf{u}_t) \in \mathbb{R}^{N \times H}$ 
denote the output of the S4 memory branch. We replace the original 
residual fusion with a convex combination that explicitly 
controls the relative weight of memory:
%
\begin{equation}
    \mathbf{h}_{t+1} \leftarrow \alpha \cdot \mathbf{z} + (1-\alpha) \cdot \mathbf{h}_t,
    \qquad \alpha \in [0, 1],
\end{equation}
where the updated $\mathbf{h}_{t+1}$ is passed to the third FFNO layer.
Here, $\alpha$ is the memory weight: $\alpha = 0$ 
corresponds to a pure Markovian model that bypasses the memory 
branch, $\alpha = 1$ fully replaces the residual with the memory 
output. We sweep $\alpha \in \{0, 0.25, 0.5, 0.75, 1.0\}$ on the Kuramoto--Sivashinsky (KS) equation across three spatial resolutions ($f \in \{32, 64, 128\}$) and three viscosity values ($\nu \in \{0.075, 0.10, 0.125\}$). Results are shown in Figure~\ref{fig:motivation}. The same observations hold under an additive parameterization $\mathbf{h}_{t+1} \leftarrow \alpha \cdot \mathbf{z} + \mathbf{h}_t$; see Appendix~\ref{app:additive_motivation} for details.


\paragraph{Observation 1: Optimal memory weight varies significantly 
across conditions.}
Figure~\ref{fig:motivation}(a-c) shows test nRMSE as a function of 
$\alpha$ for different resolutions and viscosities, with stars marking 
the optimal memory weight $\alpha^*$. At low resolution ($f=32$), using optimal 
$\alpha^* \in \{0.5, 0.75\}$ reduces error by up to 41\% compared to pure memory ($\alpha = 1$): from 0.070 to 0.067 at $\nu = 0.075$, from 
0.037 to 0.028 at $\nu = 0.10$, and from 0.027 to 0.016 at 
$\nu = 0.125$. At high resolution ($f=128$), $\alpha^*=0$ across all 
viscosities. At intermediate resolution ($f=64$), $\alpha^*$ varies 
from $0$ to $0.5$ depending on viscosity. As the heatmap in 
Figure~\ref{fig:motivation}(d) shows, optimal $\alpha^*$ spans 
$\{0.0, 0.25, 0.5, 0.75\}$ across the 9 settings---no single fixed 
value is optimal, motivating adaptive memory gate. This trend holds 
under both convex and additive fusion (Appendix~\ref{app:additive_motivation}).

\paragraph{Observation 2: Fixed $\alpha$ fails to generalize across setting.}
As the heatmap in Figure~\ref{fig:motivation}(d) shows, a fixed 
$\alpha$ optimized for one condition performs poorly in others. For 
instance, $\alpha=0.75$ works well at $f=32$ (high viscosity) but 
degrades performance at $f=128$, while $\alpha=0$ underexploits memory 
at low resolution. At intermediate resolution ($f=64$, 
Figure~\ref{fig:motivation}(b)), optimal $\alpha^*$ varies widely---from 
$0$ to $0.5$ across viscosities---illustrating that fine-grained, 
condition-dependent control is necessary.

\paragraph{Motivation.}
These findings reveal that memory weight should adapt dynamically to the simulation condition. A fixed $\alpha$ treats memory as a static architectural choice, ignoring that optimal memory usage depends on the observation setting (e.g., resolution, viscosity). We propose an 
Adaptive Memory Gate Block in Section~\ref{subsec:gate} that learns to modulate memory based on the current hidden state, enabling the model to automatically adjust across different observation  settings without manual tuning.





\subsection{Theoretical Motivation}
\label{sec:theory}

\paragraph{Intuition.} Memory should be treated as a condition-dependent resource. At low
observation resolution, unresolved Fourier modes may still influence the
next resolved state, making the observed dynamics effectively
non-Markovian and temporal history informative. As the resolution
increases, this projection-induced hidden influence diminishes; a fixed
memory path then has decreasing approximation benefit while still
contributing additional finite-sample complexity.

\paragraph{Setup.} We formalize this mechanism under the ideal Fourier projection analyzed
in Appendix~\ref{app:theory}. Let $m_f:=\lfloor f/2\rfloor$, let $P_f$
denote projection onto Fourier modes $|n|\le m_f$, and let $Q_f=I-P_f$
denote the unresolved tail. For a trajectory
$u_{t+1}^{\rho}=\Phi_{\Delta t}^{\rho}(u_t^\rho)$ under physical
condition $\rho$, write
\[
\begin{gathered}
    u_{t,f}^{\rho}:=P_f u_t^\rho, \quad
    Y_t^{f,\rho}:=P_fu_{t+1}^{\rho}, \\
    \mathcal U_t^{f,\rho}:=(u_{0,f}^{\rho},\ldots,u_{t,f}^{\rho}).
\end{gathered}
\]
Here $u_{t,f}^{\rho}$ is the resolved state,
$\mathcal U_t^{f,\rho}$ is the resolved history, and
$Y_t^{f,\rho}$ is the one-step resolved target. The influence of the
unresolved tail $Q_f u_t^\rho$ on the next resolved state is measured by
\begin{equation}
    \delta_{t,f,\rho}
    :=
    P_f\Phi_{\Delta t}^{\rho}(u_t^\rho)
    -
    P_f\Phi_{\Delta t}^{\rho}(P_f u_t^\rho).
    \label{eq:main-projection-residual}
\end{equation}
Under the finite-horizon regularity and local Lipschitz assumptions in
Appendix~\ref{app:theory},
\begin{equation}
    \mathbb E\|\delta_{t,f,\rho}\|^2
    \le
    \kappa_{{\rm tail},\rho}f^{-2\beta}.
    \label{eq:main-residual-decay}
\end{equation}
Thus, as $f$ grows, the resolved dynamics become increasingly close to a
Markovian system in the current resolved state.

\paragraph{From residual decay to memory-gain bounds.}
To connect this residual decay to neural-operator approximation, let
$\mathcal T_{f,\rho}(\phi)=P_f\Phi_{\Delta t}^{\rho}(\phi)$ denote the
projected Markovian target, and let
$b_{f,\rho}(\mathcal U_t^{f,\rho})$ denote the history-conditioned Bayes
target. We write $\mathcal E^{\mathcal C}_{f,\rho}$ for the
approximation error of a model class $\mathcal C$ to this Bayes target,
and $\mathcal A^F_{f,\rho}$ for FFNO's approximation error to the
projected Markovian target. Formal definitions are given in
Appendix~\ref{app:theory}.

The distinction between $\mathcal A^F_{f,\rho}$ and
$\mathcal E^F_{f,\rho}$ is important. The assumption is imposed only on
the projected Markovian target $\mathcal T_{f,\rho}$ through
$\mathcal A^F_{f,\rho}$; it does not assume that FFNO can approximate the
history-conditioned Bayes target. Theorem~\ref{thm:gain-convergence}
uses the residual decay in Eq.~\eqref{eq:main-residual-decay} to transfer
projected-target approximation into a bound on the actual Bayes-target
error $\mathcal E^F_{f,\rho}$.

Define the memory gains over FFNO by
\begin{equation}
    \Gamma^{S4}_{f,\rho}
    :=
    \mathcal E^F_{f,\rho}
    -
    \mathcal E^{S4}_{f,\rho},
    \qquad
    \Gamma^{AMG}_{f,\rho}(\lambda)
    :=
    \mathcal E^F_{f,\rho}
    -
    \mathcal E^{AMG}_{f,\rho}(\lambda).
    \label{eq:main-memory-gains}
\end{equation}

\begin{theorem}[Convergence of projection-induced memory gain]
\label{thm:gain-convergence}
Under the assumptions~\ref{ass:wellposed-sobolev}--\ref{ass:ffno-projected-approx},
\begin{equation}
    \mathcal E^F_{f,\rho}
    \le
    2\mathcal A^F_{f,\rho}
    +
    2\kappa_{{\rm tail},\rho}f^{-2\beta}.
    \label{eq:main-ffno-bayes-bound}
\end{equation}
Consequently, $\mathcal E^F_{f,\rho}\to0$ as $f\to\infty$. Since the
memory-augmented classes contain the FFNO fallback,
\begin{equation}
    0
    \le
    \Gamma^{S4}_{f,\rho}
    \le
    \mathcal E^F_{f,\rho},
    \qquad
    0
    \le
    \Gamma^{AMG}_{f,\rho}(\lambda)
    \le
    \mathcal E^F_{f,\rho}
    \label{eq:main-gain-convergence}
\end{equation}
for every $\lambda\ge0$. Hence the additional approximation gain of
memory induced by spectral projection vanishes in the high-resolution
limit.
\end{theorem}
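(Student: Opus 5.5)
The plan is to treat $\mathcal E^{\mathcal C}_{f,\rho}$ as a mean-squared approximation error to the Bayes target. Concretely, I take it to be $\inf_{G\in\mathcal C}\mathbb E\|G(\mathcal U_t^{f,\rho})-b_{f,\rho}(\mathcal U_t^{f,\rho})\|^2$, with FFNO acting only on the last entry $u_{t,f}^{\rho}$ of the history. I take $\mathcal A^F_{f,\rho}$ to be the analogous infimum $\inf_{G\in F}\mathbb E\|G(u_{t,f}^\rho)-\mathcal T_{f,\rho}(u_{t,f}^\rho)\|^2$. If the appendix definitions are slightly different, for example a norm rather than a squared norm, the constants change but the argument does not.

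The first step is to relate the Bayes target to the projected Markovian target. Since $b_{f,\rho}(\mathcal U)=\mathbb E[Y_t^{f,\rho}\mid\mathcal U_t^{f,\rho}]$, we can write $Y_t^{f,\rho}=\mathcal T_{f,\rho}(u_{t,f}^\rho)+\delta_{t,f,\rho}$. Because $\mathcal T_{f,\rho}(u_{t,f}^\rho)$ is $\mathcal U_t^{f,\rho}$-measurable,
\[
b_{f,\rho}-\mathcal T_{f,\rho}(u_{t,f}^\rho)=\mathbb E[\delta_{t,f,\rho}\mid\mathcal U_t^{f,\rho}].
\]
Conditional Jensen together with Eq.~\eqref{eq:main-residual-decay} then gives
\[
\mathbb E\|b_{f,\rho}-\mathcal T_{f,\rho}(u_{t,f}^\rho)\|^2\le\mathbb E\|\delta_{t,f,\rho}\|^2\le\kappa_{{\rm tail},\rho}f^{-2\beta}.
\]

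The second step is the triangle inequality. For any $G\in F$, $(a+b)^2\le2a^2+2b^2$ gives
\[
\mathbb E\|G-b_{f,\rho}\|^2\le2\,\mathbb E\|G-\mathcal T_{f,\rho}\|^2+2\kappa_{{\rm tail},\rho}f^{-2\beta}.
\]
Taking the infimum over $G$ yields Eq.~\eqref{eq:main-ffno-bayes-bound}. If the infimum is not attained, I take an $\epsilon$-minimizer and let $\epsilon\to0$. The convergence $\mathcal E^F_{f,\rho}\to0$ then follows once Assumption~\ref{ass:ffno-projected-approx} gives $\mathcal A^F_{f,\rho}\to0$ as $f\to\infty$, since the tail term decays for $\beta>0$.

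The third step is the gain bounds. The upper bound $\Gamma\le\mathcal E^F_{f,\rho}$ holds because every $\mathcal E^{\mathcal C}_{f,\rho}\ge0$. The lower bound $\Gamma\ge0$ needs the inclusion $F\subseteq S4$ and $F\subseteq\mathrm{AMG}(\lambda)$, meaning that each FFNO is realized inside the memory class. For AMGFNO this comes from the gate outputting $0$ (or, with a sigmoid gate, from the memory branch being zeroed). For S4FFNO it comes from zero S4 output under the residual fusion. Inclusion implies that the infimum over the larger class is no larger.

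The step I expect to be the main obstacle is making this fallback inclusion hold uniformly in $\lambda$. If $\lambda$ is a regularization weight, $\mathcal E^{AMG}_{f,\rho}(\lambda)$ may be defined as a penalized objective. In that case the penalty must vanish at the FFNO-fallback configuration, for instance a penalty on the gate or memory magnitude that is zero when memory is off, so that the penalized infimum is still at most $\mathcal E^F_{f,\rho}$ for every $\lambda\ge0$. I would check the appendix definition and, if needed, state this as the content of the containment assumption.
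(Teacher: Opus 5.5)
The bound \eqref{eq:main-ffno-bayes-bound} and $\mathcal E^F_{f,\rho}\to0$ are proved exactly as in the paper (Lemma~\ref{lem:bayes-target-convergence-final}). You write $b_{f,\rho}=\mathcal T_{f,\rho}(u^\rho_{t,f})+\mathbb E[\delta_{t,f,\rho}\mid\mathcal U^{f,\rho}_t]$, apply conditional Jensen, use $\|a+b\|^2\le2\|a\|^2+2\|b\|^2$, and take the infimum over FFNO. The one input you take for granted is the residual decay \eqref{eq:main-residual-decay}. The paper derives it from the first two assumptions in Lemma~\ref{lem:residual-decay-final}, via $\|\delta_{t,f,\rho}\|\le L_{\rho,R}\|Q_fu^\rho_t\|_{H^r}$ and $\|Q_fu\|_{H^r}^2\le C_\beta f^{-2\beta}\|u\|_{H^{r+\beta}}^2$. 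Since the theorem is stated under those assumptions, you should cite or reproduce that derivation.

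The fallback step, which you rightly flagged as delicate, does not go through as written for AMGFNO. The gate is $\chi_{f,\rho}\,\sigma(\cdot)$, which is strictly positive whenever $\chi_{f,\rho}>0$. So ``the gate outputting $0$'' is unavailable, and you are left with zeroing the memory branch. Under the convex fusion, however, $z_{t,\theta}=0$ gives $(1-g_{t,\theta})\odot h_{t,\theta}$, not the FFNO feature $h_{t,\theta}$.

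More decisively, $\lambda$ in the paper is a hard budget, not a penalty weight: $\mathcal C^{AMG}_{f,\rho}(\lambda)=\{\psi:\Lambda_{f,\rho}(\psi)\le\lambda\}$ with $\Lambda_{f,\rho}(\psi)=\mathbb E[\|g_{t,\theta}\|_F^2/(SH)]$. For $\lambda=0$ no sigmoid-gated network is admissible at all, whatever the memory output. Hence no exact realization can place FFNO inside $\mathcal C^{AMG}_{f,\rho}(0)$.

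The paper handles this by identifying every class with its $L^2$ closure and sending the gate biases to $-\infty$. Then $g_{t,\theta}\to0$, the fused feature converges to $h_{t,\theta}$, and the limiting predictor $\mathrm{Dec}_\theta(h_{t,\theta})$ has zero budget (Proposition~\ref{prop:markovian-fallback-final}). This is exactly your requirement that the ``penalty'' vanish at the fallback, and it gives the inclusion for every $\lambda\ge0$ at once.

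With that repair, the rest of your argument coincides with the paper:
\begin{itemize}
\item exact zero S4 readout gives the FFNO fallback inside S4FFNO;
\item nonnegativity of $\mathcal E^{S4}_{f,\rho}$ and $\mathcal E^{AMG}_{f,\rho}(\lambda)$ gives the upper bounds on the gains;
\item squeezing gives their convergence to zero.
\end{itemize}
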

The proof is given in Appendix~\ref{app:proof-thm21}.

AMGFNO is designed to track this convergence through its frequency-aware
gate. For a full reference state $\phi$ with Fourier coefficients
$a_n(\phi)$, define the unresolved spectral-energy ratio
\begin{equation}
    \omega_f(\phi)
    :=
    \frac{
        \sum_{|n|>m_f}|a_n(\phi)|^2
    }{
        \sum_{n\in\mathbb Z}|a_n(\phi)|^2
    },
    \label{eq:main-omega-state}
\end{equation}
with value zero when the denominator is zero. For condition $(f,\rho)$,
define the population unresolved-energy ratio
\begin{equation}
    \omega^{\rm pop}_{f,\rho}
    :=
    \mathbb E_{u_0,t}\left[\omega_f(u_t^\rho)\right].
    \label{eq:main-omega-pop}
\end{equation}
In experiments, $\omega^{\rm pop}_{f,\rho}$ is replaced by the empirical
training-set estimate $\widehat\omega_{f,\rho}$, computed from
high-resolution training trajectories and held fixed for the condition
$(f,\rho)$. This quantity is used only as a condition-level calibration
constant; no high-resolution test information is used.

The scalar frequency gate is
\begin{equation}
    \chi_{f,\rho}
    :=
    \begin{cases}
    \sigma\!\left(w_1\log \omega^{\rm pop}_{f,\rho}+b_\omega\right),
        & \omega^{\rm pop}_{f,\rho}>0,\\
    0,
        & \omega^{\rm pop}_{f,\rho}=0,
    \end{cases}
    \qquad
    w_1>0.
    \label{eq:main-frequency-gate}
\end{equation}
Lemma~\ref{lem:gate-convergence-final} shows that
$\omega^{\rm pop}_{f,\rho}\to0$, and therefore
$\chi_{f,\rho}\to0$. Since the content gate is bounded entrywise by one,
the induced AMGFNO memory budget is bounded by
\begin{equation}
    \lambda_{f,\rho}:=\chi_{f,\rho}^2,
    \label{eq:main-amg-budget}
\end{equation}
which also converges to zero.

Finally, the adaptive memory gate also controls finite-sample complexity. Let
$R_m(\mathfrak L)$ denote the Rademacher complexity of a loss class, and
let $U^F_{f,\rho}$, $U^{S4}_{f,\rho}$, and
$U^{AMG}_{f,\rho}(\lambda)$ denote the ERM upper bounds defined in
Appendix~\ref{app:theory-finite-sample}.

\begin{theorem}[Adaptive high-resolution comparison]
\label{thm:main-amg-highres}
Let $\lambda_{f,\rho}=\chi_{f,\rho}^2$. Under the 
assumptions~\ref{ass:wellposed-sobolev}--\ref{ass:trajectory-sampling}
and the sufficient conditions of
Proposition~\ref{prop:amg-complexity-convergence},
\begin{equation}
    R_m(\mathfrak L^{AMG}_{f,\rho}(\lambda_{f,\rho}))
    -
    R_m(\mathfrak L^F_{f,\rho})
    =
    O(\sqrt{\lambda_{f,\rho}})
    \to0.
    \label{eq:main-amg-complexity-convergence}
\end{equation}
Thus,
\begin{equation}
    U^{AMG}_{f,\rho}(\lambda_{f,\rho};\eta)
    -
    U^F_{f,\rho}(\eta)
    \to0.
\end{equation}
If fixed-fusion S4FFNO retains a positive asymptotic
Rademacher-complexity gap over FFNO, then for all sufficiently large
$f$,
\begin{equation}
    U^{AMG}_{f,\rho}(\lambda_{f,\rho};\eta)
    <
    U^{S4}_{f,\rho}(\eta).
\end{equation}
\end{theorem}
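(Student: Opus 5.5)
The plan is to read Theorem~\ref{thm:main-amg-highres} as a corollary of three ingredients: Proposition~\ref{prop:amg-complexity-convergence}, the gate convergence in Lemma~\ref{lem:gate-convergence-final}, and the additive form of the ERM bounds $U$. The ERM bounds are taken in the standard form
\[
U^{\mathcal C}_{f,\rho}(\eta)=\mathcal E^{\mathcal C}_{f,\rho}+c_1 R_m(\mathfrak L^{\mathcal C}_{f,\rho})+c_2\sqrt{\log(1/\eta)/m},
\]
with the confidence term shared by all classes. The proof then splits into a complexity step and an approximation step.

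\textbf{Complexity step.} Write the AMGFNO hidden update at layer $\ell^*$ as the FFNO path plus a gated memory term $G\odot\mathbf z$. Here the gate satisfies $\|G\|_\infty\le\chi_{f,\rho}=\sqrt{\lambda_{f,\rho}}$, because the content gate is bounded entrywise by one.
\begin{itemize}
\item Assuming the remaining layers and the loss are Lipschitz on the bounded input set (Assumption~\ref{ass:trajectory-sampling} together with the boundedness conditions), every AMG predictor differs from an FFNO predictor by at most $L\sqrt{\lambda}\,\sup\|\mathbf z\|$.
\item By the sub-additivity of Rademacher complexity and Talagrand's contraction lemma, $R_m(\mathfrak L^{AMG}(\lambda))\le R_m(\mathfrak L^F)+C\sqrt{\lambda}$.
\item The reverse inequality $R_m(\mathfrak L^{AMG})\ge R_m(\mathfrak L^F)$ comes from the FFNO fallback at gate zero, which is contained in the class.
\end{itemize}
This is exactly the content of Proposition~\ref{prop:amg-complexity-convergence}, which I invoke with $\lambda=\lambda_{f,\rho}$. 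Lemma~\ref{lem:gate-convergence-final} then gives $\omega^{\rm pop}_{f,\rho}\to0$. Since $w_1>0$, $\log\omega^{\rm pop}_{f,\rho}\to-\infty$, so $\chi_{f,\rho}\to0$ and hence $\lambda_{f,\rho}\to0$. This yields \eqref{eq:main-amg-complexity-convergence}.

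\textbf{Approximation step and first conclusion.} For the approximation parts, use $0\le\mathcal E^F-\mathcal E^{AMG}(\lambda)=\Gamma^{AMG}\le\mathcal E^F_{f,\rho}$ and $0\le\Gamma^{S4}\le\mathcal E^F_{f,\rho}$. Both follow from Theorem~\ref{thm:gain-convergence}, which also gives $\mathcal E^F_{f,\rho}\to0$ by \eqref{eq:main-ffno-bayes-bound}. Therefore
\[
\bigl|U^{AMG}-U^F\bigr|\le \mathcal E^F_{f,\rho}+c_1 C\sqrt{\lambda_{f,\rho}}\to0.
\]

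\textbf{Strict comparison with S4FFNO.} Suppose $\liminf_f \bigl(R_m(\mathfrak L^{S4})-R_m(\mathfrak L^F)\bigr)\ge\gamma>0$. Then
\[
U^{S4}-U^F\ge -\mathcal E^F_{f,\rho}+c_1\gamma,
\]
whose limit inferior is at least $c_1\gamma$. Since $U^{AMG}-U^F\to0$, for all large $f$ we get $U^{AMG}<U^F+c_1\gamma/2<U^{S4}$.

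\textbf{Main obstacle.} The hardest step is making the Lipschitz propagation in the complexity step rigorous through the downstream FFNO layers. The difficulty is that the S4 branch output $\mathbf z$ depends on the whole history, so its sup-norm bound must hold uniformly in $t$. I would try to obtain this from the stability/boundedness conditions assumed in Proposition~\ref{prop:amg-complexity-convergence}, rather than reprove it. A second subtlety is ensuring $m$ is fixed or grows independently of $f$, so that the constants $c_1$ and $C$ do not deteriorate as $f\to\infty$.
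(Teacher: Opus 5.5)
Your proposal is correct and follows the paper's proof essentially step for step. It uses $\lambda_{f,\rho}\to0$ from Lemma~\ref{lem:gate-convergence-final}, the $O(\sqrt{\lambda_{f,\rho}})$ complexity gap from Proposition~\ref{prop:amg-complexity-convergence}, the sandwich $0\le\Gamma\le\mathcal E^F_{f,\rho}\to0$ from Theorem~\ref{thm:gain-convergence}, and cancellation of the shared terms in the $U$-differences. Your $\liminf$ hypothesis on the S4FFNO gap is a harmless weakening of the paper's assumed limit $\Delta^{S4}_{m,\rho}>0$.

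There is one small slip in your sketch of the Proposition. The fused feature minus the FFNO feature is $g_{t,\theta}\odot(z_{t,\theta}-h_{t,\theta})$, not a gated $z$ alone, so the constant involves $B_h+B_z$. Also, the boundedness and Lipschitz hypotheses come from the Proposition, not from Assumption~\ref{ass:trajectory-sampling}. Neither point matters, since you invoke the Proposition directly.
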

The proof is given in Appendix~\ref{app:proof-thm22}.

The analysis isolates the memory benefit induced by projection-based loss
of Fourier modes. It does not rule out persistent memory gains from other
sources of partial observability, such as latent parameters, stochastic
forcing, measurement noise, or temporal subsampling. The theory controls
one-step resolved prediction risk, while accumulated autoregressive
rollout error is evaluated empirically in Section~\ref{sec:experiment}.




\begin{figure*}[t]
    \centering
    \includegraphics[width=1\textwidth]{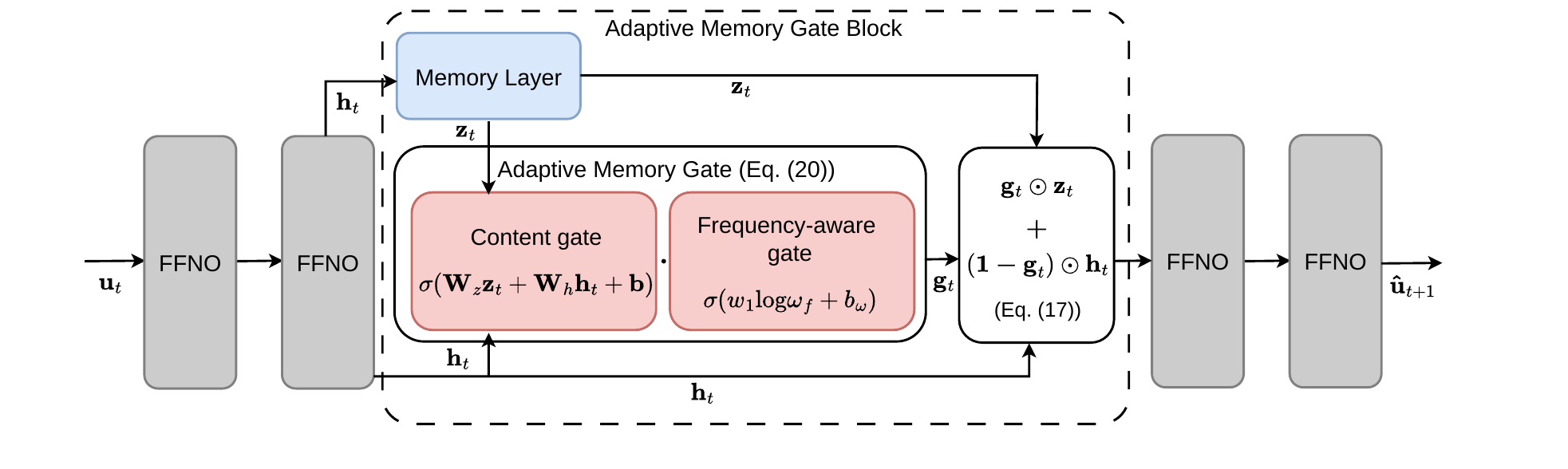}
    \caption{Overview of the AMGFNO architecture. The model consists of five layers 
with an Adaptive Memory Gate (AMG) Block at the middle layer ($\ell=3$). 
The AMG Block combines the memory layer output $\mathbf{z}_t$ and 
the Markovian state $\mathbf{h}_t$ (output of the second FFNO layer, $\ell=2$) using an adaptive memory gate $\mathbf{g}_t$ 
that is decomposed into two components: a content gate 
$\sigma(\mathbf{W}_z\mathbf{z}_t + \mathbf{W}_h\mathbf{h}_t + \mathbf{b})$ 
for input-adaptive control and a frequency-aware gate 
$\sigma(w_1 \log \omega_f + b_\omega)$ for conditional priors. 
The final output is computed as 
$\mathbf{g}_t \odot \mathbf{z}_t + (1-\mathbf{g}_t) \odot \mathbf{h}_t$.}
    \label{fig:overview}
\end{figure*}

\section{Proposed Method}\label{sec:method}

AMGFNO builds on the FFNO backbone by inserting an Adaptive Memory Gate (AMG) Block at the middle layer, as illustrated in Figure~\ref{fig:overview}. The AMG Block consists of a memory layer and an adaptive memory gate (AMG). The AMG consists of content gate and frequency-aware gate. The input PDE state 
$\mathbf{u}_t\in \mathbb{R}^{S \times V}$ is first encoded into a 
latent representation 
$\mathbf{h}_t^{(0)} = \mathrm{Enc}(\mathbf{u}_t, \mathbf{s}) 
\in \mathbb{R}^{S \times H}$, where $\mathbf{s} \in \mathbb{R}^{S 
\times 1}$ is the spatial grid and $H$ is the hidden dimension. The 
latent state then passes through $L=5$ layers in the order 
\textsc{ffno--ffno--amg--ffno--ffno}:
\begin{equation}
    \mathbf{h}_t^{(\ell)} =
    \begin{cases}
        \mathcal{F}_\ell\!\left(\mathbf{h}_t^{(\ell-1)}\right)
            + \mathbf{h}_t^{(\ell-1)},
        & \ell \neq \ell^{*} \\[6pt]
        \mathbf{g}_t \odot \mathbf{z}_t
            + \left(1 - \mathbf{g}_t\right) \odot \mathbf{h 
            }_t^{(\ell-1)},
        & \ell = \ell^{*}
    \end{cases}
    \label{eq:layer}
\end{equation}
where $\ell^{*}=3$ is the index of the Adaptive Memory Gate (AMG) Block and $\mathbf{z}_t = \mathcal{M}(\mathbf{h}_{1:t}^{(\ell^{*}-1)})$ is the memory layer output. At $\ell \neq \ell^{*}$, each layer applies an FFNO operation with a residual connection. At $\ell = \ell^{*}$, the AMG Block combines the memory output $\mathbf{z}_t$ and the Markovian hidden state $\mathbf{h}_t^{(\ell^*-1)}$ via the adaptive memory gate $\mathbf{g}_t \in [0,1]^{S \times H}$, which is computed from the current feature state. The gate dynamically adjusts memory weight across different simulation conditions, allowing the model to modulate memory usage based on the learned representation. The final prediction is $\hat{\mathbf{u}}_{t+1} = \mathrm{Dec}(\mathbf{h}_t^{(L)})$.
\subsection{Factorized Fourier Neural Operator (FFNO) Layer}
Each FFNO layer $\mathcal{F}_\ell$ operates along the \emph{spatial} axis and combines a spectral path with a local linear path:
\begin{equation}
    \mathcal{F}_\ell(\mathbf{h})
    = \mathcal{F}^{-1}\!\left(
        \mathbf{W}_\ell^{\mathrm{freq}} \cdot
        \left[\mathcal{F}(\mathbf{h})\right]_{1:K}
      \right)
    + \mathbf{W}_\ell^{\mathrm{loc}}\,\mathbf{h},
\end{equation}
where $\mathcal{F}$ is the discrete Fourier transform along the spatial dimension, $K$ is the number of retained Fourier modes, and $\mathbf{W}_\ell^{\mathrm{freq}},\, \mathbf{W}_\ell^{\mathrm{loc}}$ are learnable weight matrices.

\subsection{Memory Layer}
The memory output $\mathbf{z}_t = \mathcal{M}(\mathbf{h}_{1:t})$ is computed by applying S4 (Structured State Space Model) along the \emph{temporal} axis. For each spatial position $p \in \{1,\dots,S\}$ independently:
\begin{equation}
    \mathbf{v}_t^{(p)} = \mathbf{A}\,\mathbf{v}_{t-1}^{(p)} + \mathbf{B}\,\mathbf{h}_t^{(p)},
    \qquad
    \mathbf{z}_t^{(p)} = \mathbf{C}\,\mathbf{v}_t^{(p)},
\end{equation}
where $\mathbf{v}_t^{(p)} \in \mathbb{R}^{N}$ is the S4 internal state that accumulates temporal context up to time $t$, and $\mathbf{A},\, \mathbf{B},\, \mathbf{C}$ are learnable parameters initialized with HiPPO~\cite{gu2020hippo}. The spatial dimension is treated as a batch axis, so all positions share the same SSM parameters.

\subsection{Adaptive Memory Gate}
\label{subsec:gate}
As shown in Figure~\ref{fig:overview}, the adaptive memory gate $\mathbf{g}_t \in (0,1)^{S \times H}$ controls how much the memory output $\mathbf{z}_t$ contributes relative to the current hidden state $\mathbf{h}_t$. The gate consists of a content gate and a frequency-aware gate:
\begin{equation}
    \mathbf{g}_t
    = \underbrace{
        \sigma\!\left(
            \mathbf{W}_z\,\mathbf{z}_t
            + \mathbf{W}_h\,\mathbf{h}_t
            + \mathbf{b}
        \right)
      }_{\text{content gate} \;\in\;(0,1)^{S\times H}}
    \;\cdot\;
    \underbrace{
        \sigma\!\left(w_1 \log \omega_f + b_\omega\right)
      }_{\text{frequency-aware gate} \;\in\;(0,1)}.
\end{equation}

\paragraph{Content gate.}
The content gate $\sigma(\mathbf{W}_z \mathbf{z}_t + \mathbf{W}_h \mathbf{h}_t + \mathbf{b}) \in (0,1)^{S \times H}$ provides input-adaptive control that varies per sample, channel, and spatial location. It takes the memory output $\mathbf{z}_t$ and the current hidden state $\mathbf{h}_t$ as input and learns to modulate memory weight based on their learned representations. The parameters $\mathbf{W}_z$, $\mathbf{W}_h$, and $\mathbf{b}$ are initialized to zero, allowing the content gate to learn data-driven corrections during training while deferring to the frequency-aware prior at initialization.

\paragraph{Frequency-aware gate.}
The frequency-aware gate $\sigma(w_1 \log \omega_f + b_\omega) \in (0,1)$ supplies a conditional prior based on the observation condition $(f, \rho)$. Here $\omega_f$ is the high-frequency energy ratio defined in Eq.~\eqref{eq:main-omega-pop}, computed as the mean over the training set and held fixed during training and inference. We found empirically that this fixed $\omega_f$ estimate consistently outperforms per-batch computation by 5--20\% across settings (see Appendix~\ref{app:omega_ablation}), suggesting that $\omega_f$ serves best as a calibration constant characterizing the observation condition rather than a dynamic per-sample feature. The parameters $w_1 = 2.0$ and $b_\omega = 3.8$ are analytically initialized from two anchor conditions ($\omega_{32}$ and $\omega_{128}$) to provide a resolution-aware prior that the content gate refines during training.

\begin{table*}[th!]
\centering
\resizebox{\textwidth}{!}{%
\begin{tabular}{cccccccccccccc}
\toprule
\multirow{3}{*}{Architecture} & \multirow{3}{*}{Uses memory} 
& \multicolumn{9}{c}{Kuramoto--Sivashinsky} 
& \multicolumn{3}{c}{Burgers'} \\
\cmidrule(lr){3-11}\cmidrule(lr){12-14}
& & \multicolumn{3}{c}{$f=32$} 
& \multicolumn{3}{c}{$f=64$} 
& \multicolumn{3}{c}{$f=128$}
& \multirow{2}{*}{$f=32$} & \multirow{2}{*}{$f=64$} & \multirow{2}{*}{$f=128$} \\
\cmidrule(lr){3-5}\cmidrule(lr){6-8}\cmidrule(lr){9-11}
& & $\nu{=}.075$ & $\nu{=}.10$ & $\nu{=}.125$ 
  & $\nu{=}.075$ & $\nu{=}.10$ & $\nu{=}.125$
  & $\nu{=}.075$ & $\nu{=}.10$ & $\nu{=}.125$
  & & & \\
\midrule
Factformer (1D) & \xmark 
  & 0.436 & 0.391 & 0.149
  & 0.195 & 0.086 & 0.022
  & 0.058 & 0.030 & 0.017
  & 0.190 & 0.162 & 0.117 \\
GKT & \xmark 
  & 0.588 & 0.601 & 0.314
  & 0.401 & 0.120 & 0.016
  & 0.028 & 0.013 & 0.007
  & 0.356 & 0.349 & 0.307 \\
U-Net & \xmark 
  & 0.542 & 0.511 & 0.249
  & 0.147 & 0.062 & 0.022
  & 0.033 & 0.027 & 0.014
  & 0.188 & 0.171 & 0.112 \\
FFNO & \xmark 
  & 0.500 & 0.446 & 0.187
  & 0.107 & 0.033 & 0.004
  & \textbf{0.006} & 0.004 & \textbf{0.002}
  & 0.207 & 0.146 & 0.099 \\
Multi Input FFNO & \cmark 
  & 0.364 & 0.308 & 0.092
  & 0.108 & 0.046 & 0.005
  & 0.057 & 0.052 & 0.023
  & 0.099 & 0.054 & 0.028 \\
S4FFNO & \cmark 
  & 0.139 & 0.108 & 0.031
  & 0.036 & 0.011 & 0.004
  & 0.008 & 0.005 & 0.003
  & 0.053 & 0.037 & 0.030 \\
\textbf{AMGFNO (Ours)} & \cmark 
  & \textbf{0.062} & \textbf{0.023} & \textbf{0.014}
  & \textbf{0.021} & \textbf{0.007} & \textbf{0.003}
  & \textbf{0.006} & \textbf{0.003} & \textbf{0.002}
  & \textbf{0.024} & \textbf{0.018} & \textbf{0.020} \\
\bottomrule
\end{tabular}
}
\caption{nRMSE values at different resolutions for KS and Burgers' equations. 
AMGFNO consistently outperforms all baselines across resolutions, achieving up to 79\% 
nRMSE reduction over S4FFNO at $f=32$ while suppressing memory at $f=128$ to match 
or surpass FFNO.}
\label{tab:main}
\end{table*}

\section{Experiments}
\label{sec:experiment}

\begin{table}[t]
\centering
\caption{Converged mean gate value $\bar{g}$ (mean of last 20 epochs) for AMGFNO across different resolution and viscosity settings.}
\label{tab:omega_gate}
\footnotesize
\begin{tabular}{lcccccc}
\toprule
\multirow{2}{*}{Dataset} & \multirow{2}{*}{$\nu$} & \multicolumn{3}{c}{$\bar{g}$ by Resolution $f$} \\
\cmidrule(lr){3-5}
& & $f=32$ & $f=64$ & $f=128$ \\
\midrule
\multirow{3}{*}{KS} 
 & 0.075 & 0.6915 & 0.4720 & 0.0030 \\
 & 0.10   & 0.7036 & 0.2150 & 0.0008 \\
 & 0.125 & 0.6976 & 0.0721 & 0.0002 \\
\midrule
Burgers' & 0.001 & 0.0267 & 0.0165 & 0.0020 \\
\bottomrule
\end{tabular}
\end{table}

\subsection{Experimental Setup}
\label{sec:setup}

\paragraph{Benchmarks and observation settings.}
We evaluate on two 1D time-dependent PDEs with periodic boundary conditions.
For the Kuramoto--Sivashinsky (KS) equation, we consider three viscosity values $\nu \in \{0.075, 0.1, 0.125\}$ to span different spectral conditions.
Lower viscosity leads to more chaotic dynamics with significant high-frequency content, while higher viscosity smooths the solution.
For Burgers' equation, we use viscosity $\nu = 0.001$, which maintains strong nonlinear advection.
To assess how memory usage should adapt across observation conditions, we consider three spatial resolutions $f\in\{32,64,128\}$, representing low-, medium-, and high-resolution, respectively.
At low resolution, many high-frequency Fourier modes are unobserved; at high resolution, most spectral information is directly observable.
PDE-specific parameters, final simulation times, and data generation procedures are detailed in Appendix~\ref{app:exp-details}.

\paragraph{Training objective and procedure.}
All models are trained with a one-step prediction objective.
Given the current resolved state $u_j^{f,(i)}$ on spatial grid $S_f$ at time $t_j$, the model predicts the next state $u_{j+1}^{f,(i)}$ at time $t_{j+1}$.
During training, we employ teacher forcing: each step receives the ground-truth state as input rather than the model's own prediction from the previous step.
For memory-augmented models, the internal memory state is updated using the same teacher-forced sequence.
The training loss is the mean squared error averaged over mini-batches $\mathcal{B}$ and all timesteps within each trajectory:
\begin{equation}
    \mathcal{L}_{\mathrm{train}}(\theta)
    =
    \frac{1}{|\mathcal{B}|J}
    \sum_{i\in\mathcal{B}}
    \sum_{j=0}^{J-1}
    \left\|
        \mathcal{G}_\theta(u_j^{f,(i)}, g_f)
        -
        u_{j+1}^{f,(i)}
    \right\|_2^2,
\end{equation}
where $\mathcal{G}_\theta$ is the neural operator, $g_f$ is the grid encoding, and $J$ is the number of timesteps per trajectory.
All models use the same optimizer settings and are trained for 200 epochs.

\paragraph{Evaluation protocol.}
At test time, models are evaluated via autoregressive rollout.
Only the initial state $u_0^{f,(i)}$ is provided as ground truth; subsequent predictions are generated by repeatedly feeding the model's own output back as input:
\begin{equation}
\begin{aligned}
    \widetilde{u}_0^{f,(i)} &= u_0^{f,(i)}, \\
    \widetilde{u}_{j+1}^{f,(i)} &= \mathcal{G}_\theta(\widetilde{u}_j^{f,(i)}, g_f), \quad j=0,\ldots,J-1.
\end{aligned}
\end{equation}
This autoregressive evaluation measures the model's ability to maintain accurate predictions over long rollouts, where one-step errors can accumulate.
Performance is measured by the normalized root mean squared error (nRMSE), averaged over all test trajectories and rollout timesteps:
\begin{equation}
    \mathrm{nRMSE}(f)
    =
    \frac{1}{N_{\mathrm{test}}J}
    \sum_{i=1}^{N_{\mathrm{test}}}
    \sum_{j=1}^{J}
    \frac{
    \left\|\widetilde{u}_j^{f,(i)} - u_j^{f,(i)}\right\|_2
    }{
    \left\|u_j^{f,(i)}\right\|_2
    },
\end{equation}
where $N_{\mathrm{test}}$ is the number of test trajectories.
Lower nRMSE indicates more accurate trajectory prediction.

\paragraph{Baseline comparisons.}
We compare AMGFNO against two categories of baselines.
Markovian neural operators use only the current state without temporal memory: Factformer~\cite{factformer}, Galerkin Transformer(GKT)~\cite{cao2021choose}, U-Net~\cite{unet_pde},and  FFNO~\cite{FFNO}.
Memory-augmented operators incorporate information from past states: Multi-Input FFNO~\cite{S4FFNO}, which concatenates the previous four timesteps as input, and S4FFNO~\cite{S4FFNO}, which uses a state-space memory layer with fixed fusion.
All baselines use comparable model capacity and are trained with identical hyperparameters for fair comparison.
Architectural details and implementation specifics are provided in Appendix~\ref{app:exp-details}.



\subsection{Main Result}
\label{subsec:main_result}
Table~\ref{tab:main} shows nRMSE results for the KS equation across three viscosity values and three resolutions, as well as for Burgers' equation.
AMGFNO consistently outperforms all baselines across all resolutions and viscosities, with the largest performance gap at low resolution and the smallest at high resolution.

\paragraph{Low resolution ($f=32$).}
AMGFNO significantly outperforms all baselines across all viscosities at resolution 32.
Compared to the previous best-performing model S4FFNO, nRMSE on KS decreases from 0.139 to 0.062 at $\nu=0.075$, from 0.108 to 0.023 at $\nu=0.1$, and from 0.031 to 0.014 at $\nu=0.125$.
On Burgers', nRMSE decreases from 0.053 to 0.024 compared to S4FFNO.

AMGFNO improves over baselines by adaptively adjusting the blend between the current state and temporal memory. At low resolution, the gate increases memory weight to compensate for unobserved high-frequency modes, whereas S4FFNO uses a fixed blend ratio regardless of the observation condition.
As shown in Table~\ref{tab:omega_gate} at resolution 32, the $\bar{g}$ values are larger compared to resolutions 64 and 128.
Since $\bar{g}$ represents the degree of memory usage via the adaptive memory gate, this confirms that the gate adjusts to increase memory usage at low resolution.
For KS, $\bar{g} \approx 0.70$, and for Burgers', $\bar{g} = 0.0267$, which is larger than the 0.0165 observed at resolution 64.

\paragraph{Medium resolution ($f=64$).}
AMGFNO outperforms all baselines across all viscosities at resolution 64.
Compared to the previous best, S4FFNO, nRMSE on KS decreases from 0.036 to 0.021 at $\nu=0.075$, from 0.011 to 0.007 at $\nu=0.1$, and from 0.004 to 0.003 at $\nu=0.125$.
On Burgers', nRMSE decreases from 0.037 to 0.018 compared to S4FFNO.
At resolution 64, the adaptive memory gate $\bar{g}$ varies substantially with viscosity, adjusting memory usage appropriately.
At $\nu=0.075$, $\bar{g} = 0.4720$, meaning approximately 47\% of the memory path and 53\% of the Markovian path are utilized as shown in Figure~\ref{fig:overview}.
At $\nu=0.125$, $\bar{g} = 0.0721$, meaning the memory path contributes only about 7\% while the Markovian path contributes about 93\%; that is, at $\nu=0.125$ the adaptive memory gate partially closes with a low $\bar{g}$ value.
This is consistent with the finding in Section~\ref{subsec:fixedalpha} that optimal memory weight depends on viscosity: Table~\ref{tab:omega_gate} shows that the adaptive memory gate $\bar{g}$ modulates accordingly across different $\nu$ values.

\paragraph{High resolution ($f=128$).}
At high resolution, the adaptive memory gate value is nearly closed---0.003 at $\nu=0.075$, 0.0008 at $\nu=0.1$, and 0.0002 at $\nu=0.125$---effectively guiding AMGFNO into the Markovian condition.
S4FFNO uses fixed memory and at resolution 128 actually performs slightly worse than FFNO: at $\nu=0.075$, S4FFNO's nRMSE increases from 0.006 to 0.008 compared to FFNO, indicating that memory usage is harmful at high resolution.
AMGFNO addresses this limitation by incorporating an adaptive memory gate to regulate memory usage, achieving performance comparable to or better than FFNO across all viscosities on KS.
On Burgers', AMGFNO achieves the lowest nRMSE of 0.020, compared to 0.099 for FFNO, 0.028 for Multi Input FFNO, and 0.030 for S4FFNO.

\begin{figure}[t]
    \centering
    \begin{subfigure}[t]{0.49\columnwidth}
        \centering
        \includegraphics[width=\linewidth]{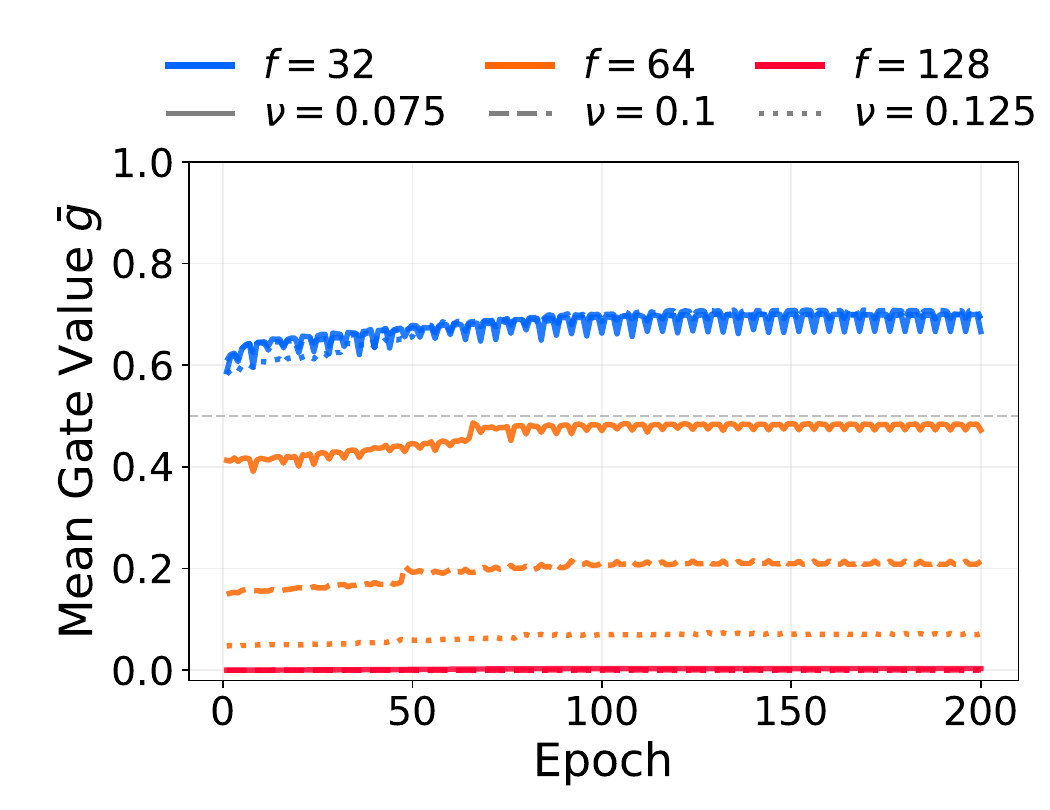}
        \caption{KS}
    \end{subfigure}\hfill
    \begin{subfigure}[t]{0.49\columnwidth}
        \centering
        \includegraphics[width=\linewidth]{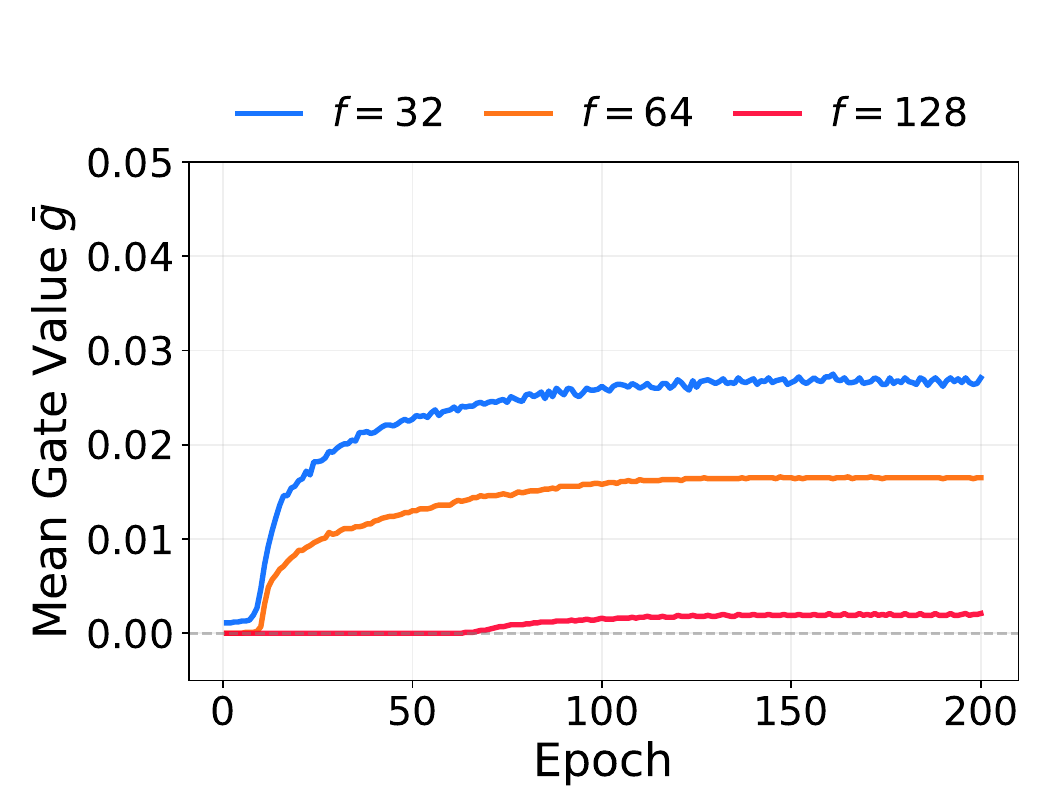}
        \caption{Burgers ($\nu=0.001$)}
    \end{subfigure}
    \caption{Adaptive memory gate behavior. Mean gate value $\bar{g}$ during training across resolutions $f$.}
    \label{fig:gate_behavior}
\end{figure}

\subsection{Additional Analysis}
\paragraph{Resolution and viscosity-dependent adaptation.}
\label{sebsec:additional}
As shown in Table~\ref{tab:omega_gate}, the learned gate values adapt to both resolution and viscosity.
For KS, the gate actively leverages memory at $f=32$ with $\bar{g} \approx 0.70$, then nearly closes at $f=128$ with $\bar{g} \leq 0.003$.
At medium resolution ($f=64$), viscosity-dependent variation is pronounced: $\bar{g}=0.472$ at $\nu=0.075$ versus $\bar{g}=0.072$ at $\nu=0.125$, demonstrating that the gate selectively opens in low-viscosity settings where high-frequency components remain significant and closes when the solution is well captured at the observation resolution.
For Burgers', gate values are relatively small across all resolutions ($\bar{g} \leq 0.027$), reflecting the PDE's intrinsic characteristic where the viscous term $\nu u_{xx}$ acts as a low-pass filter that monotonically attenuates high-frequency components.

\paragraph{Gate evolution during training.}
Figure~\ref{fig:gate_behavior} shows gate value evolution over epochs.
For the KS equation (left panel), the gate values are initialized differently by resolution and viscosity, showing distinct patterns from epoch 0.
At low resolution ($f=32$), the gates start large ($\bar{g} \approx 0.60$--$0.70$) across all viscosities.
At medium resolution ($f=64$), viscosity-dependent initialization emerges: $\bar{g} \approx 0.47$ for $\nu=0.075$, $\bar{g} \approx 0.21$ for $\nu=0.1$, and $\bar{g} \approx 0.07$ for $\nu=0.125$.
At high resolution ($f=128$), the gates are initialized near zero regardless of viscosity.
During training, these initial values are fine-tuned incrementally. 

The right panel shows gate evolution for Burgers' equation at $\nu=0.001$.
At epoch 0, all gate values start near zero regardless of resolution.
As training progresses, the gate values separate by resolution: the low-resolution gate increases substantially, while the high-resolution gate remains near zero.
This confirms that gate values are automatically learned without manual tuning per condition. Appendix~\ref{app:gate_loss} shows the co-evolution of gate values and training loss across all conditions.

\paragraph{Role of the frequency-aware gate.}
In our implementation, we optionally include a frequency-aware gate $\sigma(w_1 \log \omega_f + b_\omega)$ in addition to the content gate, which provides a conditional prior at initialization.
While the content gate alone can learn adaptive memory modulation across most settings, an ablation study (Appendix~\ref{app:ablation-scale-gate}) shows that the frequency-aware gate is most beneficial at high resolution in KS, where $\omega_f \to 0$ and it correctly suppresses unnecessary memory usage.

\begin{figure}[t]
    \centering
    \includegraphics[width=0.75\columnwidth]{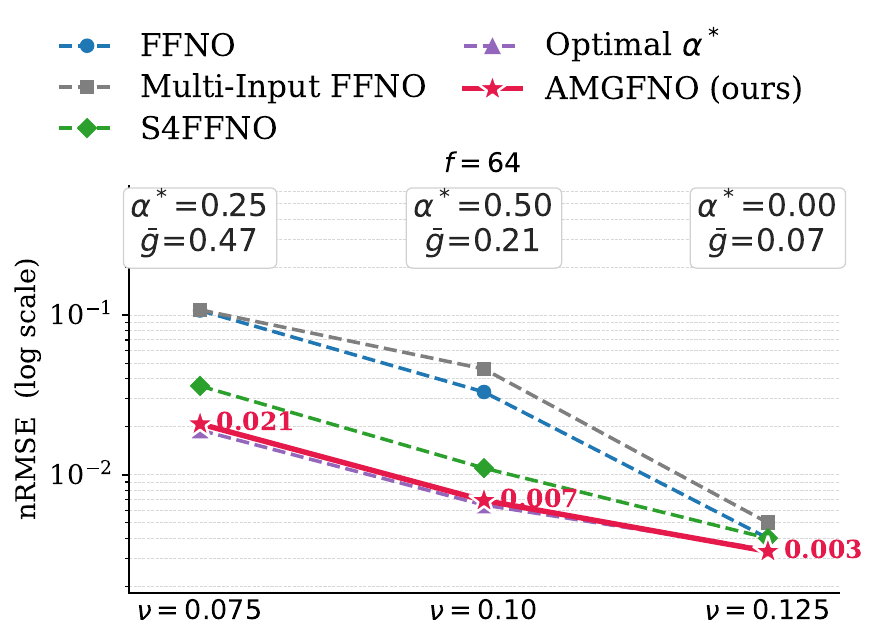}
    \caption{Performance comparison on the Kuramoto-Sivashinsky (KS) equation at resolution 64. FFNO, Multi-Input FFNO, S4FFNO baselines and optimal memory weight ($\alpha^*$) are compared to the proposed AMGFNO across different viscosities ($\nu \in \{0.075, 0.1, 0.125\}$). $\bar{g}$ denotes the adaptive memory gate value learned by AMGFNO.}
    \label{fig:compare_64}
    \vspace{-1em}
\end{figure}

\paragraph{Comparison with Optimal Fixed Scaling}
Figure~\ref{fig:compare_64} demonstrates the necessity of adaptive gating by comparing against the optimal fixed memory weight $\alpha^*$ from Section~\ref{subsec:fixedalpha} at resolution $f=64$ on the Kuramoto-Sivashinsky (KS) equation.
The optimal $\alpha^*$ varies across KS viscosities: $\alpha^*=0.25$ at $\nu=0.075$, $\alpha^*=0.50$ at $\nu=0.10$, and $\alpha^*=0.00$ at $\nu=0.125$, confirming that no single fixed value works across conditions.
AMGFNO's learned gate values $\bar{g}=0.47$, $0.21$, and $0.07$ also adapt across these viscosity conditions, achieving nRMSE of 0.021, 0.007, and 0.003 that closely matches the performance of the manually-tuned optimal at each condition.
Crucially, while finding $\alpha^*$ requires exhaustive search per condition, AMGFNO discovers the appropriate memory weight automatically through training on the KS dataset.
Results for resolutions 32 and 128 are in Appendix~\ref{app:compare_all_resolutions}.

\section{Related Work}
\paragraph{Neural operators for PDEs.}
The Fourier Neural Operator (FNO)~\cite{li2021fourier} and its factorized 
variant FFNO~\cite{FFNO} have become standard Markovian baselines for 
PDE operator learning. Other Markovian operators explore alternative 
inductive biases, including multi-scale spatial 
structure~\cite{unet_pde}, kernel-based attention~\cite{cao2021choose}, 
and transformer-style architectures with factorized or axial 
attention~\cite{li2022transformer, factformer}. Despite their 
architectural diversity, these models share the Markovian assumption, 
predicting the next state from the current one alone — a design that 
becomes inadequate when the current state does not fully resolve the 
underlying dynamics.
\paragraph{Memory in PDE models.}
Memory-augmented neural operators incorporate past states as additional 
input to relax the Markovian assumption. The earliest such variant, Multi-Input 
FFNO, simply concatenates a short fixed-length 
window of past states as additional input channels, providing only 
short-range, uncompressed memory. MemNO~\cite{S4FFNO} extends this idea 
by replacing the input window with an S4 layer~\cite{gu2022efficiently} 
that compresses the entire past trajectory along the time axis, and 
provides theoretical and empirical evidence that such long-range memory 
is beneficial under partial observation. Concretely, MemNO inserts a 
single S4 layer between Markovian FFNO layers and adds the resulting 
temporal-memory representation back via a residual connection. In both 
designs, however, the memory weight is fixed regardless of the 
input — yet, as we show, the optimal memory weight depends on 
observation conditions.
\paragraph{Gating mechanisms.}
The variation in optimal memory weight points to a need for 
input-dependent control of memory usage.
Such control is well established in sequence modeling, where 
multiplicative gates have a long history — from 
LSTM~\cite{hochreiter1997lstm} and GRU~\cite{cho2014gru} to Highway 
Networks~\cite{srivastava2015highway} and selective state-space models 
such as Mamba~\cite{gu2023mamba}; in all cases, gates regulate 
information flow as a function of the input. In neural operators, by 
contrast, memory has been integrated by a fixed, input-independent 
addition~\cite{S4FFNO}.
\paragraph{Position of our work.}
AMGFNO bridges these two lines of work. We inherit the temporal-memory 
architecture of MemNO but replace its fixed fusion with an adaptive 
gate that combines a content gate and frequency-aware gate. Unlike prior memory-augmented neural operators, AMGFNO modulates modulates memory weight through a learnable gate. 
\section{Conclusion and Future Work}
We demonstrate the necessity of adaptive memory control in neural operators for time-dependent PDEs.
Preliminary studies reveal that optimal memory weight varies across observation conditions.
AMGFNO incorporates an adaptive memory gate that learns to modulate memory usage based on the current condition.
At low resolution where unobserved high-frequency modes are significant, AMGFNO achieves up to 2--5$\times$ error reduction compared to S4FFNO.
At high resolution, where most spectral information is observable, the gate reduces memory usage, matching or exceeding purely Markovian models.
The gate adapts without manual tuning, with learned values adjusting according to resolution and spectral characteristics through training.

Future work includes extending AMGFNO to higher-dimensional PDEs, exploring alternative condition descriptors beyond spectral energy ratios, and evaluating robustness under observational noise and distribution shift.

\section*{Acknowlegments}
This work was partly supported by the Institute for Information \& Communications Technology Planning \& Evaluation (IITP) grants funded by the Korean government (MSIT) 
(No. RS-2026-25526850, High-Efficiency Neural Networks for Artificial General Intelligence (HERMES-Net)), 
Samsung Electronics Co., Ltd. (No. G01240136, KAIST Semiconductor Research Fund (2nd))
and the Korea Advanced Institute of Science and Technology (KAIST) grant funded by the Korea government (MSIT) (No. G04240001, Physics-inspired Deep Learning).
J.~Choi was additionally supported by the KAIST Jang Young Sil Fellow Program.

\bibliography{references}
\bibliographystyle{icml2026}

\newpage
\appendix
\onecolumn
\section{Preliminaries Study: Memory Ablation under Additive Fusion}
\label{app:additive_motivation}
\begin{figure*}[h!]
    \centering
    \begin{subfigure}[t]{0.245\textwidth}
        \centering
        \includegraphics[width=\linewidth]{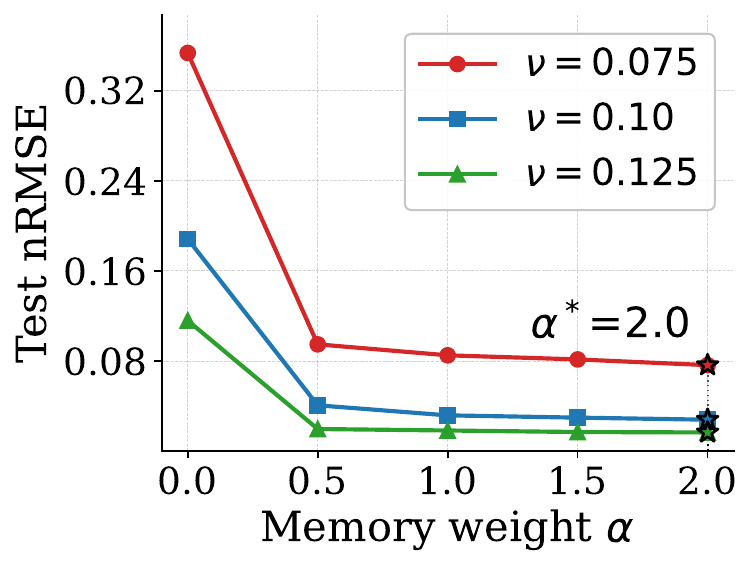}
        \caption{$f = 32$}
    \end{subfigure}\hfill
    \begin{subfigure}[t]{0.245\textwidth}
        \centering
        \includegraphics[width=\linewidth]{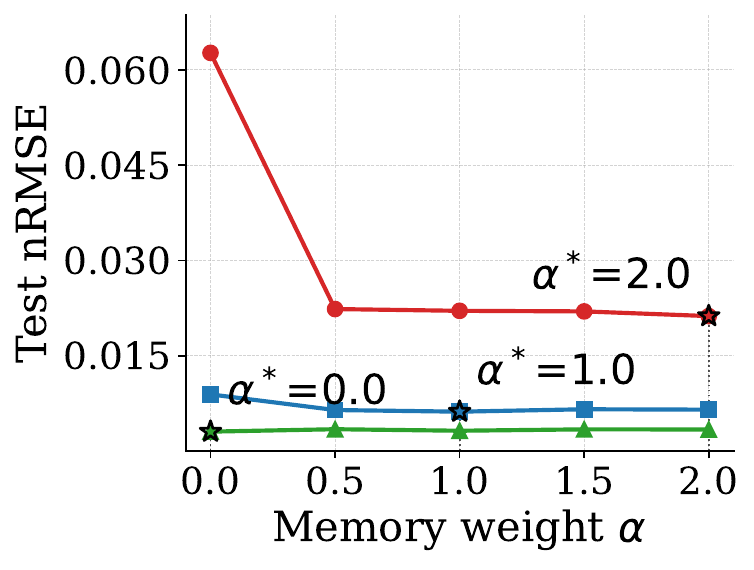}
        \caption{$f = 64$}
    \end{subfigure}\hfill
    \begin{subfigure}[t]{0.245\textwidth}
        \centering
        \includegraphics[width=\linewidth]{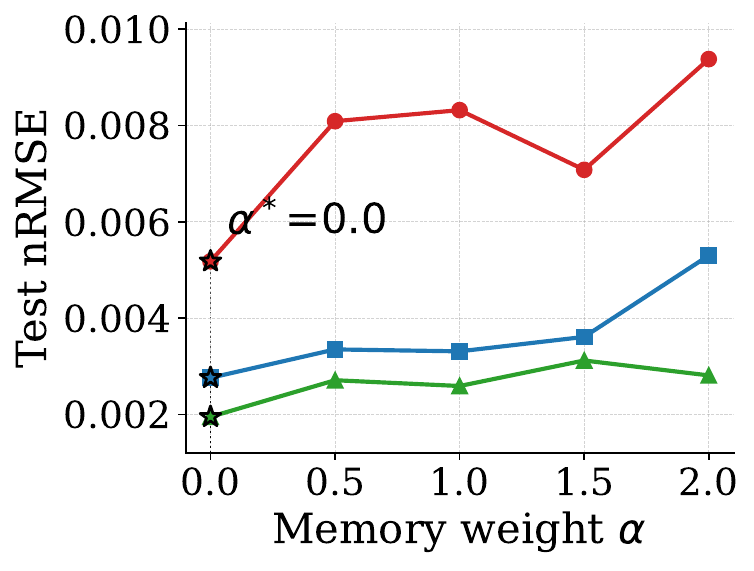}
        \caption{$f = 128$}
    \end{subfigure}\hfill
    \begin{subfigure}[t]{0.225\textwidth}
        \centering
        \includegraphics[width=\linewidth]{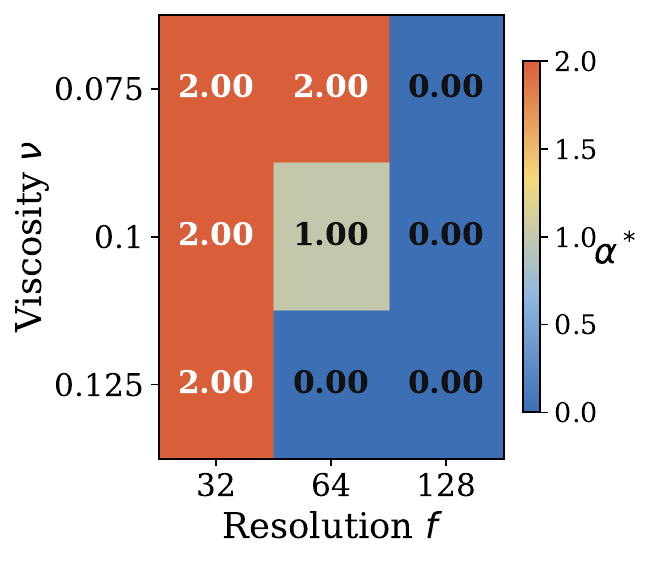}
        \caption{Optimal $\alpha^*$}
    \end{subfigure}
    \caption{Effect of memory weight $\alpha$ on nRMSE across resolutions and viscosity values on the KS equation. Each panel shows nRMSE as a function of $\alpha$ for three viscosity values ($\nu \in \{0.075, 0.10, 0.125\}$). The rightmost heatmap summarizes optimal memory weight $\alpha^*$ per setting.}
    \label{fig:motivation_additive}
\end{figure*}

The motivation analysis in Section~\ref{subsec:fixedalpha} parameterizes the memory weight via a convex blend, which is form-consistent with the adaptive memory gate of our main method. For completeness, we also report the same ablation under an additive parameterization that more directly mirrors the original residual fusion of S4FFNO~\cite{S4FFNO}.

Using the same notation as in Section~\ref{subsec:fixedalpha}, let $\mathbf{h}_t \in \mathbb{R}^{N \times H}$ be the hidden state at the output of the second FFNO layer and  $\mathbf{z}_t = \mathcal{M}(\mathbf{h}_{1:t}^{(\ell^{*}-1)})$ the output of the S4 memory branch. We replace the convex blend with the additive update
\begin{equation}
    \mathbf{h}_{t+1} \leftarrow \alpha \cdot \mathbf{z}_t + \mathbf{h}_t,
\end{equation}
which preserves the residual path in full and scales the memory weight by $\alpha$. Under this parameterization, S4FFNO is recovered exactly at $\alpha=1$, and $\alpha=0$ again corresponds to a pure Markovian model that bypasses the memory branch. Allowing $\alpha > 1$ amplifies memory beyond its default contribution.

We sweep $\alpha \in \{0.0, 0.5, 1.0, 1.5, 2.0\}$ on the Kuramoto--Sivashinsky equation under the same setting as Section~\ref{subsec:fixedalpha} ($f \in \{32, 64, 128\}$, $\nu \in \{0.075, 0.1, 0.125\}$).

\paragraph{Observation 1: More memory is beneficial at low resolution.}
At $f=32$, $\alpha^*=2.0$ across all three viscosity values, and larger $\alpha$ consistently reduces nRMSE.

\paragraph{Observation 2: Memory isn't useful at high resolution.}
At $f=128$, $\alpha^*=0.0$ for all viscosity values, and larger $\alpha$ monotonically degrades performance.

\paragraph{Observation 3: The optimal memory weight $\alpha^*$ depends jointly on resolution and viscosity.}
At $f=64$, $\alpha^*$ varies across viscosity values: $\alpha^*=2.0$ at $\nu=0.075$, $\alpha^*=1.0$ at $\nu=0.1$, and $\alpha^*=0.0$ at $\nu=0.125$. Higher viscosity reduces the high-frequency content of the PDE solution, diminishing the practical benefit of memory.

\paragraph{Implications.}
The same three qualitative conclusions emerge under both the convex (Section~\ref{subsec:fixedalpha}) and additive parameterizations: memory helps most at low resolution, hurts at high resolution, and its optimal weight depends jointly on resolution and viscosity. The robustness of these observations to the choice of fusion form supports our main motivation that no single fixed memory weight is appropriate across operating conditions, and that the gate must be learned and conditioned on the high-frequency structure of the input.

\section{Choice of $\omega_f$ Computation: Fixed vs.\ Per-Batch}
\label{app:omega_ablation}
We compare two strategies for obtaining $\omega_f$: (i) the proposed
\emph{fixed} variant, which estimates $\omega_f$ once as the training-set
mean and holds it constant; and (ii) a \emph{per-batch} variant that
recomputes $\omega_f$ from each training and inference batch on the fly.
Both variants share the same gate architecture, initialization, and
training schedule.
Table~\ref{tab:omega_ablation} reports final test nRMSE on the KS equation across all nine $(f, \nu)$ settings. The fixed variant consistently outperforms per-batch recomputation, with performance degradation ranging from $5.8\%$ to $20.1\%$ when $\omega_f$ is recomputed per batch. This supports our design choice in Section~\ref{subsec:gate} to use a fixed, training-set-averaged $\omega_f$ rather than dynamic per-batch estimation.

We hypothesize that per-batch $\omega_f$ introduces noisy, sample-dependent signals that destabilize gate learning. In contrast, the fixed variant provides a stable global measure of spectral complexity tied to the resolution--viscosity condition, allowing the gate to learn consistent memory modulation patterns. The fixed approach also reduces computational overhead by eliminating redundant FFT operations at each forward pass.

\begin{table}[t]
\centering
\small
\caption{Final test nRMSE on KS for fixed vs.\ per-batch $\omega_f$. The fixed variant outperforms per-batch recomputation in all 9 $(f, \nu)$ settings, with degradation ranging from $5.8\%$ to $20.1\%$, supporting our design choice in Section~3.3.}
\label{tab:omega_ablation}
\begin{tabular}{cc|cc|c}
\toprule
$f$ & $\nu$ & fixed & per-batch & $\Delta\%$ \\
\midrule
32 & 0.075 & 0.0616 & 0.0651 & $+5.8$ \\
32 & 0.10 & 0.0226 & 0.0249 & $+10.1$ \\
32 & 0.125 & 0.0142 & 0.0168 & $+18.2$ \\
64 & 0.075 & 0.0208 & 0.0250 & $+20.1$ \\
64 & 0.10 & 0.0069 & 0.0077 & $+11.9$ \\
64 & 0.125 & 0.0033 & 0.0039 & $+17.2$ \\
128 & 0.075 & 0.0058 & 0.0068 & $+18.5$ \\
128 & 0.10 & 0.0030 & 0.0033 & $+11.1$ \\
128 & 0.125 & 0.0019 & 0.0020 & $+5.8$ \\
\bottomrule
\end{tabular}
\end{table}

\section{Ablation: Effect of the Frequency-Aware Gate}
\label{app:ablation-scale-gate}

\begin{table*}[t]
\centering
\caption{Ablation of the frequency-aware gate in AMGFNO. Bold indicates best performance per setting.}
\label{tab:ablation}
\small
\setlength{\tabcolsep}{3.5pt}
\begin{tabular}{l ccc ccc ccc ccc}
\toprule
& \multicolumn{9}{c}{Kuramoto--Sivashinsky} & \multicolumn{3}{c}{Burgers}\\
\cmidrule(lr){2-10}\cmidrule(lr){11-13}
                              & \multicolumn{3}{c}{$f=32$}                            & \multicolumn{3}{c}{$f=64$}                            & \multicolumn{3}{c}{$f=128$}                          &                  &                  &                  \\
\cmidrule(lr){2-4}\cmidrule(lr){5-7}\cmidrule(lr){8-10}
Method                        & $\nu{=}.075$    & $\nu{=}.10$     & $\nu{=}.125$    & $\nu{=}.075$    & $\nu{=}.10$     & $\nu{=}.125$    & $\nu{=}.075$    & $\nu{=}.10$     & $\nu{=}.125$    & $f{=}32$         & $f{=}64$         & $f{=}128$        \\
\midrule
AMGFNO                        & 0.0616          & \textbf{0.0226} & \textbf{0.0142} & 0.0208          & 0.0069          & 0.0033          & \textbf{0.0058} & \textbf{0.0030} & \textbf{0.0019} & \textbf{0.0244} & 0.0179          & 0.0202          \\
\quad w/o frequency-aware          & \textbf{0.0573} & 0.0231          & 0.0151          & \textbf{0.0185} & \textbf{0.0058} & \textbf{0.0033} & 0.0066          & 0.0041          & 0.0027          & 0.0253          & \textbf{0.0140} & \textbf{0.0122} \\
\bottomrule
\end{tabular}
\end{table*}

Table~\ref{tab:ablation} reports the effect of removing the frequency-aware gate from AMGFNO, retaining only the content gate $\mathbf{g}^{(c)}_t$. The frequency-aware gate $\sigma(\omega_f \cdot w_1 + b_\omega)$ provides a spectral prior that modulates the baseline gate budget based on the fraction of unobserved high-frequency modes. We evaluate whether this prior offers a consistent advantage or whether the content gate alone can learn appropriate memory modulation from data.

\paragraph{KS: Frequency-aware gate helps at high resolution.}
On KS, $\omega_f$ varies dramatically across resolutions: $\omega_{32}=0.584$, $\omega_{64}=0.083$, $\omega_{128}=0.0003$. At high resolution ($f=128$), AMGFNO consistently outperforms w/o frequency-aware across all three viscosity settings. This aligns with Theorem~\ref{thm:main-amg-highres}: when $\omega_f \to 0$ (minimal high-frequency loss), the frequency-aware gate correctly drives the gate budget toward zero, suppressing unnecessary memory usage and avoiding the complexity cost of maintaining memory when it provides no benefit.

At low and mid resolution ($f \in \{32, 64\}$), results are mixed, with w/o frequency-aware winning in 4 out of 6 settings. However, AMGFNO still outperforms at the most challenging low-viscosity setting ($\nu=0.075$, $f=32$), where unobserved high-frequency modes are most prevalent and memory is essential.

\paragraph{Burgers: Frequency-aware gate has limited impact.}
On Burgers', $\omega_f$ is uniformly small across all resolutions: $\omega_{32}=0.0062$, $\omega_{64}=0.0012$, $\omega_{128}=0.0002$. Here, w/o frequency-aware matches or outperforms AMGFNO at mid and high resolutions ($f \in \{64, 128\}$). When $\omega_f$ provides little differentiation across conditions, the frequency-aware gate offers minimal additional signal beyond what the content gate learns from data.

\paragraph{Implications.}
The frequency-aware gate is most valuable when $\omega_f$ varies meaningfully across observation conditions, as demonstrated at high resolution on KS where it consistently improves performance. When $\omega_f$ is uniformly small (as in Burgers'), the content gate alone suffices. This suggests a direction for future work: PDE-adaptive initialization of $w_1$ and $b_\omega$ based on the dataset's $\omega_f$ distribution could improve robustness across diverse PDE families. For instance, setting $w_1 \propto \max_f \omega_f - \min_f \omega_f$ would amplify the frequency-aware gate's influence when spectral variation is high and suppress it when variation is negligible.

\label{app:compare_all_resolutions}
\begin{figure}[t]
    \centering
    \begin{subfigure}[t]{0.32\textwidth}
        \centering
        \includegraphics[width=\textwidth, height=7cm, keepaspectratio]{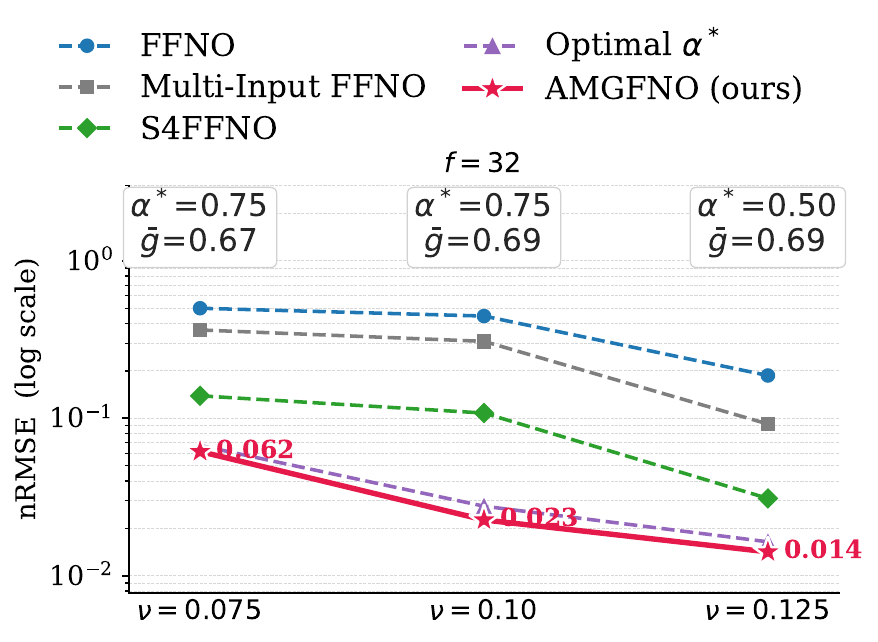}
        \caption{KS $f=32$}
        \label{fig:ks_n32}
    \end{subfigure}
    \hfill
    \begin{subfigure}[t]{0.32\textwidth}
        \centering
        \includegraphics[width=\textwidth, height=7cm, keepaspectratio]{fig/compare_64.pdf}
        \caption{KS $f=64$}
        \label{fig:ks_n64}
    \end{subfigure}
    \hfill
    \begin{subfigure}[t]{0.32\textwidth}
        \centering
        \includegraphics[width=\textwidth, height=7cm, keepaspectratio]{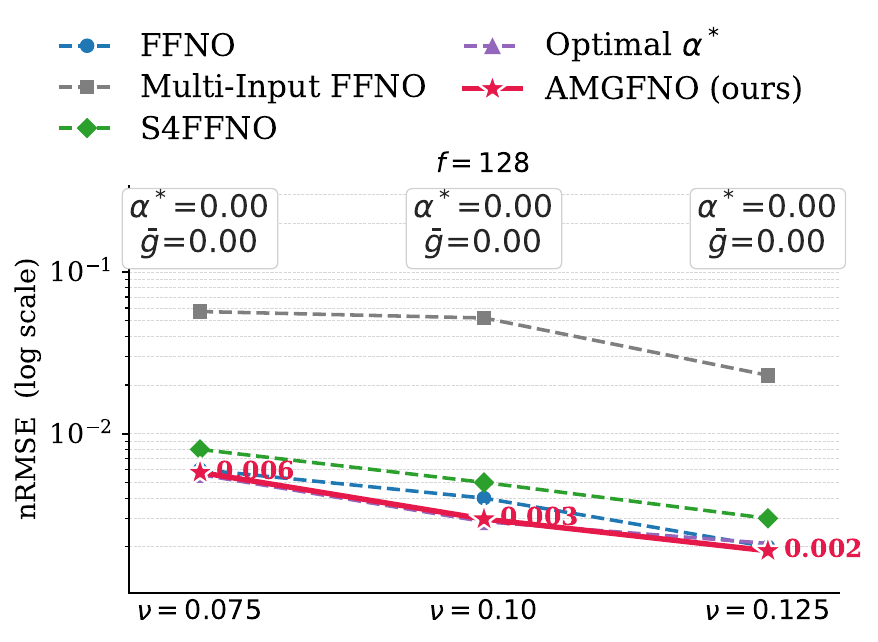}
        \caption{KS $f=128$}
        \label{fig:ks_n128}
    \end{subfigure}
    \caption{Performance comparison on KS across resolutions $f \in \{32, 64, 128\}$ and viscosities $\nu \in \{0.075, 0.10, 0.125\}$. Optimal $\alpha^*$ denotes a model trained with the manually tuned memory weight $\alpha^*$ that achieves best performance per setting.}
    \label{fig:compare_all}
\end{figure}

\section{Performance Comparison Across Resolutions}
Figure~\ref{fig:compare_all} extends Section~\ref{sebsec:additional} across all three resolutions. Optimal $\alpha^*$ represents an oracle baseline where the memory weight is manually tuned for each $(f, \nu)$ condition, providing an upper bound on what can be achieved with a fixed memory weight.
AMGFNO achieves performance comparable to Optimal $\alpha^*$ across all conditions by learning adaptive memory gate values through training. At $f=32$, where substantial high-frequency content is unobserved, AMGFNO learns gate values $\bar{g} = 0.67$--$0.69$ and achieves nRMSE of 0.062, 0.023, and 0.014 across the three viscosities, closely matching the oracle performance. At $f=64$, the learned gate values adapt across viscosities ($\bar{g} = 0.47, 0.21, 0.07$), achieving nRMSE of 0.021, 0.007, and 0.003. At $f=128$, where minimal high-frequency information is lost, AMGFNO learns $\bar{g} \approx 0.00$ and achieves nRMSE of 0.006, 0.003, and 0.002.

These results demonstrate that AMGFNO automatically learns condition-appropriate memory usage without requiring the exhaustive hyperparameter search needed to obtain Optimal $\alpha^*$. The learned gate behavior is consistent with the Mori-Zwanzig principle: opening when unobserved high-frequency modes are prevalent and closing when the Markovian assumption suffices.

\section{Relationship Between Gate Value and Training Loss}
\label{app:gate_loss}

\begin{figure}[h!]
    \centering
    \begin{subfigure}[t]{\textwidth}
        \centering
        \begin{subfigure}[t]{0.32\textwidth}
            \centering
            \includegraphics[width=\textwidth]{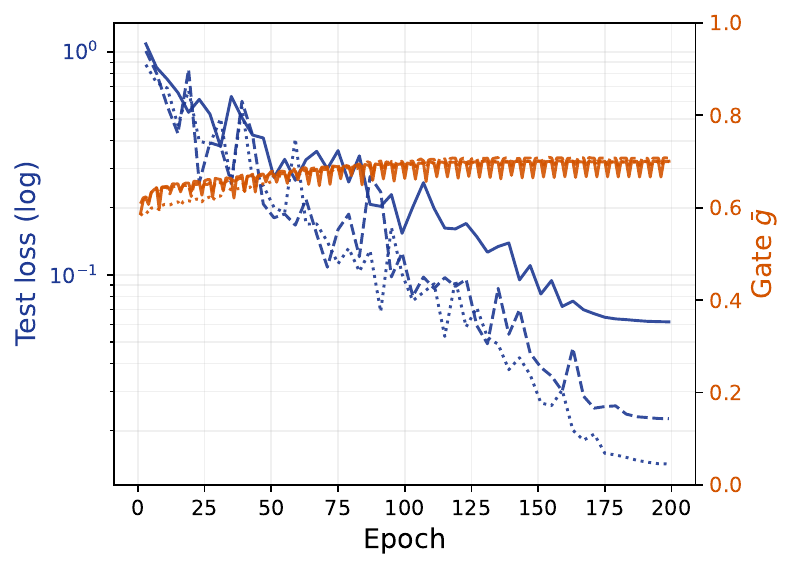}
            \caption{$f=32$}
        \end{subfigure}
        \hfill
        \begin{subfigure}[t]{0.32\textwidth}
            \centering
            \includegraphics[width=\textwidth]{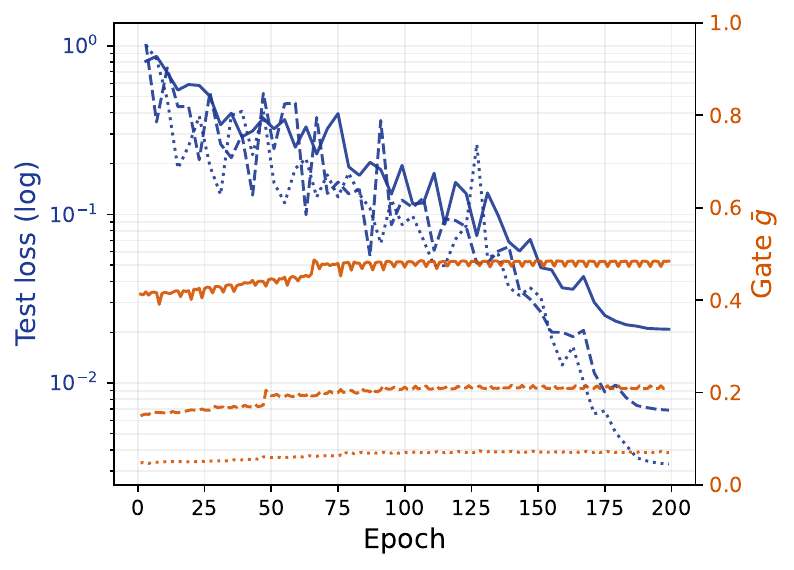}
            \caption{$f=64$}
        \end{subfigure}
        \hfill
        \begin{subfigure}[t]{0.32\textwidth}
            \centering
            \includegraphics[width=\textwidth]{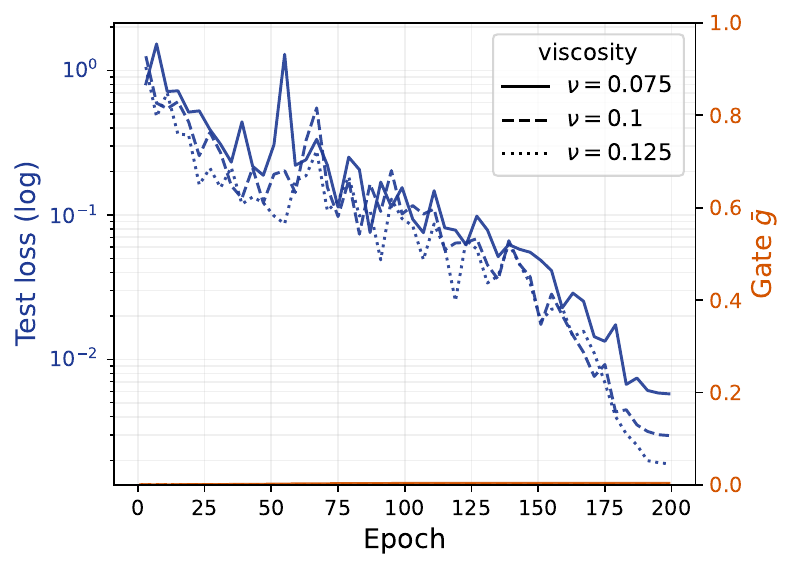}
            \caption{$f=128$}
        \end{subfigure}
        \label{fig:gate_loss_ks}
    \end{subfigure}

    \vspace{1em}

    \begin{subfigure}[t]{\textwidth}
        \centering
        \begin{subfigure}[t]{0.32\textwidth}
            \centering
            \includegraphics[width=\textwidth]{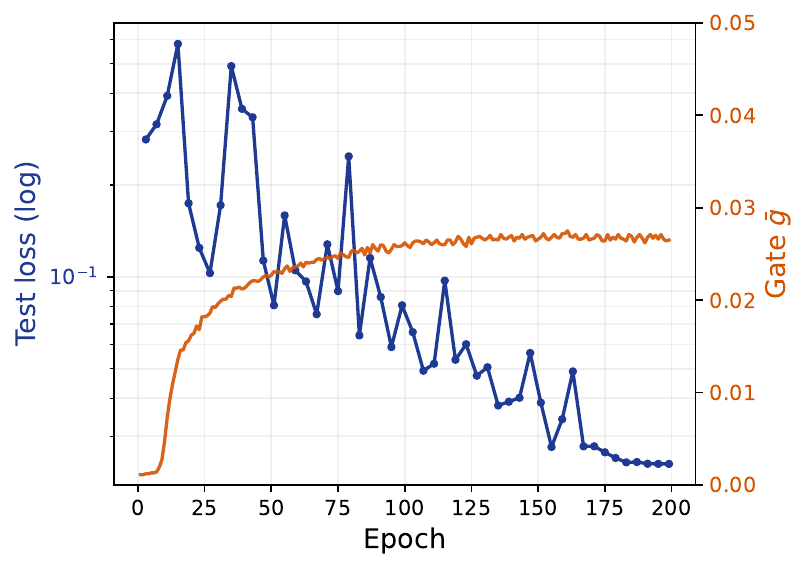}
            \caption{$f=32$}
        \end{subfigure}
        \hfill
        \begin{subfigure}[t]{0.32\textwidth}
            \centering
            \includegraphics[width=\textwidth]{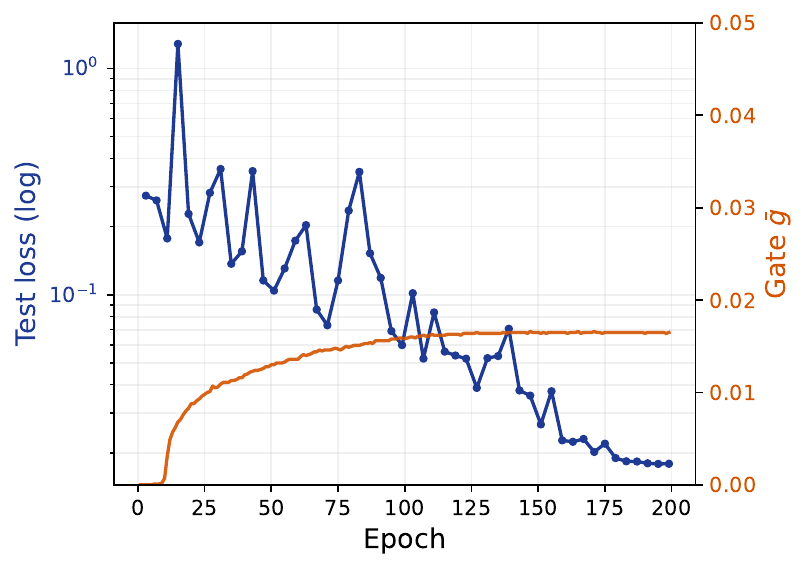}
            \caption{$f=64$}
        \end{subfigure}
        \hfill
        \begin{subfigure}[t]{0.32\textwidth}
            \centering
            \includegraphics[width=\textwidth]{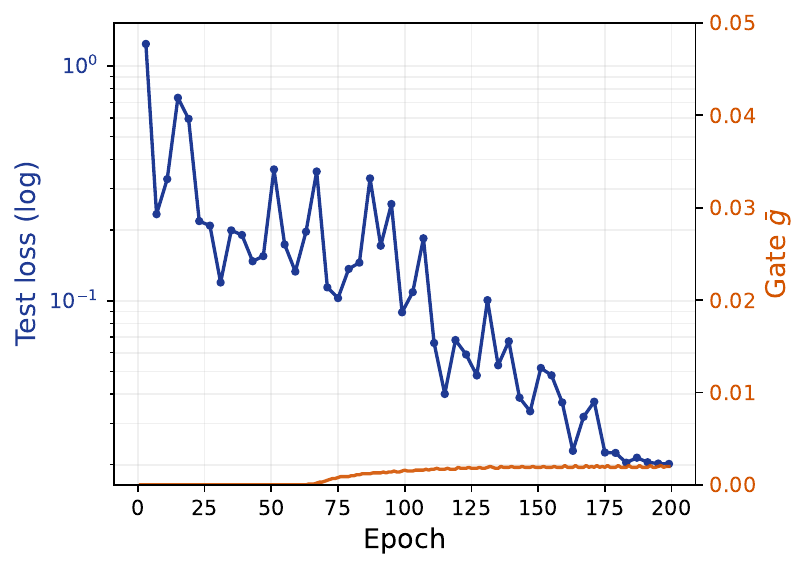}
            \caption{$f=128$}
        \end{subfigure}
        \label{fig:gate_loss_burgers}
    \end{subfigure}

    \caption{Evolution of test loss (blue, left axis, log scale) and adaptive memory gate value
    $\bar{g}$ (orange, right axis) over training epochs for AMGFNO.
    \textit{(a, b, c)} KS equation at resolutions $f \in \{32, 64, 128\}$ for viscosities
    $\nu \in \{0.075, 0.10, 0.125\}$ (solid, dashed, and dotted lines, respectively).
    \textit{(d, e, f)} Burgers' equation at resolutions $f \in \{32, 64, 128\}$ with $\nu = 0.001$.}
    \label{fig:gate_loss}
\end{figure}

Figure~\ref{fig:gate_loss} shows the co-evolution of test loss and adaptive memory gate value $\bar{g}$ over 200 training epochs. Test loss decreases monotonically while the gate converges to a stable level, with gate convergence typically preceding final loss convergence.
The converged gate values reflect the need for memory. For KS, $\bar{g}$ stratifies by resolution and viscosity: at $f=32$, $\bar{g} \approx 0.67$--$0.70$ across all viscosities; at $f=64$, $\bar{g}$ ranges from $0.47$ (low viscosity) to $0.07$ (high viscosity); at $f=128$, $\bar{g} \approx 0$ for all conditions. For Burgers', the gate converges to small values ($\bar{g} \approx 0.027$ at $f=32$, decreasing to $\approx 0.001$ at $f=128$), consistent with uniformly small $\omega_f$ values. These dynamics confirm that the adaptive memory gate discovers appropriate memory usage purely through training.

\section{Experimental Details}
\label{app:exp-details}
\label{app:data}

This appendix provides the experimental details omitted from
Section~\ref{sec:setup}, including PDE equations, dataset construction,
resolution processing, baseline architectures, training hyperparameters,
evaluation protocol, and the computation of the spectral ratio used by AMGFNO.
Unless otherwise stated, our experimental setup follows S4FFNO~\citep{S4FFNO}.
In particular, the Kuramoto--Sivashinsky datasets follow the low-viscosity
benchmark generated in S4FFNO, which is based on the PDE-Refiner/LPSDA
generation pipeline~\citep{lippe2023pde,brandstetter2022lie}. The Burgers'
dataset is taken from the public PDEBench benchmark~\citep{takamoto2022pdebench}.

\subsection{PDE Datasets and Parameters}
\label{app:exp-pdes}

\paragraph{Kuramoto--Sivashinsky equation.}
The Kuramoto--Sivashinsky equation is given by
\begin{equation}
    u_t(t,x)
    +
    u(t,x)u_x(t,x)
    +
    u_{xx}(t,x)
    +
    \nu u_{xxxx}(t,x)
    =
    0,
    \qquad
    (t,x)\in[0,T]\times[0,L],
    \label{eq:app-ks}
\end{equation}
with periodic boundary conditions. Following S4FFNO, we use low-viscosity KS
datasets because low viscosity produces solutions with substantial
high-frequency Fourier components. We use
\begin{equation}
    \nu\in\{0.075,0.10,0.125\}.
\end{equation}
The spatial domain length is
\begin{equation}
    L=64,
\end{equation}
and the final rollout time used in our experiments is
\begin{equation}
    T=2.5.
\end{equation}
Each trajectory is observed at 26 equispaced time points, giving 25 prediction
timesteps with
\begin{equation}
    \Delta t=0.1.
\end{equation}
The high-resolution reference trajectories are generated at spatial resolution
512. We use 2048 training trajectories and 256 held-out evaluation trajectories,
following the S4FFNO setup.

The initial condition is a superposition of sinusoidal waves:
\begin{equation}
    u_0(x)
    =
    \sum_{i=0}^{20}
    A_i
    \sin
    \left(
        \frac{2\pi k_i}{L}x+\phi_i
    \right),
    \label{eq:app-ks-ic}
\end{equation}
where, for each trajectory,
\begin{equation}
    A_i\sim{\rm Unif}[-0.5,0.5],
    \qquad
    k_i\sim{\rm Unif}\{1,\ldots,8\},
    \qquad
    \phi_i\sim{\rm Unif}[0,2\pi].
\end{equation}
The data generation follows the method of lines~\citep{schiesser1991numerical}.
Spatial derivatives are computed pseudospectrally using Fast Fourier transforms
\citep{cooley1969finite}, and the resulting ODE system is solved with an
implicit Runge--Kutta method of the Radau IIA family of order 5
\citep{hairer1996solving}, implemented through the SciPy numerical stack
\citep{virtanen2020scipy}.

\paragraph{Burgers' equation.}
The viscous Burgers' equation is given by
\begin{equation}
    u_t(t,x)
    +
    u(t,x)u_x(t,x)
    =
    \nu u_{xx}(t,x),
    \qquad
    (t,x)\in[0,T]\times[0,L],
    \label{eq:app-burgers}
\end{equation}
with periodic boundary conditions. We use the public PDEBench Burgers' dataset
\citep{takamoto2022pdebench} with viscosity
\begin{equation}
    \nu=0.001.
\end{equation}
The highest available spatial resolution is 1024. Following S4FFNO, we use
2048 samples for training and reserve 10\% of the full dataset for testing. We
evaluate the first 20 prediction timesteps because after this period the
diffusion term attenuates most high-frequency components and the solution
changes slowly. The final time is
\begin{equation}
    T=1.4,
\end{equation}
and the rollout horizon contains 20 prediction timesteps.

\begin{table}[t]
\centering
\caption{Dataset and rollout summary.}
\label{tab:app-data-summary}
\small
\setlength{\tabcolsep}{5pt}
\begin{tabular}{lcccc}
\toprule
Dataset & $\nu$ & high-res. grid & train & rollout steps \\
\midrule
KS
& $\{0.075,0.10,0.125\}$
& 512
& 2048
& 25 \\
Burgers'
& $0.001$
& 1024
& 2048
& 20 \\
\bottomrule
\end{tabular}
\end{table}

\subsection{Resolution Construction}
\label{app:exp-resolution}

For both PDEs, we evaluate all models at
\begin{equation}
    f\in\{32,64,128\}.
\end{equation}
For a domain of length $L$, the equispaced grid at resolution $f$ is
\begin{equation}
    S_f
    :=
    \left\{
        x_r=\frac{rL}{f}
        :
        r=0,\ldots,f-1
    \right\}.
    \label{eq:app-grid}
\end{equation}
The high-resolution reference trajectories are first generated or loaded, and
lower-resolution observations are constructed from the highest-resolution
trajectory. For one-dimensional datasets, S4FFNO constructs lower-resolution
trajectories by cubic interpolation from the highest-resolution grid. We follow
the same procedure. Thus, for each resolution $f$, every model is trained and
tested on the corresponding grid $S_f$.

For trajectory $i$ and timestep $j$, the resolved state is
\begin{equation}
    u_j^{f,(i)}
    :=
    \mathcal I_f
    \left[
        u^{(i)}(t_j,\cdot)
    \right]
    \in\mathbb R^{f\times V},
    \label{eq:app-resolved-state}
\end{equation}
where $\mathcal I_f$ denotes cubic interpolation from the highest-resolution
reference grid to $S_f$, and $V$ is the number of physical channels. All models
use the same resolved states $u_j^{f,(i)}$ and the same grid encoding.

\paragraph{Relation to the theory.}
Appendix~\ref{app:theory} analyzes an ideal Fourier projection $P_f$ in order
to isolate the unresolved spectral-tail mechanism. The experiments follow the
S4FFNO data protocol and construct lower-resolution inputs by cubic
interpolation for comparability with prior work. The empirical quantity
$\omega_f$ is nevertheless computed from high-resolution training trajectories
using Fourier coefficients, as described in Appendix~\ref{app:exp-omega}.

\subsection{Computation of the Spectral Ratio}
\label{app:exp-omega}

By the Nyquist--Shannon sampling theorem~\citep{shannon1949communication}, an
observation at spatial resolution $f$ can resolve Fourier modes only up to
approximately $\lfloor f/2\rfloor$. For a full reference state $\phi$, define
the unresolved-mode ratio
\begin{equation}
    \omega_f(\phi)
    :=
    \frac{
        \sum_{|n|>\lfloor f/2\rfloor}|a_n(\phi)|^2
    }{
        \sum_{n\in\mathbb Z}|a_n(\phi)|^2
    },
    \label{eq:app-exp-omega-state}
\end{equation}
where $a_n(\phi)$ denotes the Fourier coefficient of $\phi$. For vector-valued
states, the squared magnitude is summed over physical channels.

In practice, the continuous Fourier coefficients are approximated using the
discrete Fourier transform of the highest-resolution trajectory available in
the dataset~\citep{cooley1969finite}. For KS, the highest resolution is 512; for
Burgers', it is 1024. For each condition $(f,\rho)$, we compute the empirical
training-set estimate
\begin{equation}
    \widehat\omega_{f,\rho}
    :=
    \frac{1}{N_{\rm train}T_{\rm obs}}
    \sum_{i=1}^{N_{\rm train}}
    \sum_{j=0}^{T_{\rm obs}-1}
    \omega_f
    \left(
        u_j^{{\rm high},(i)}
    \right),
    \label{eq:app-exp-omega-estimate}
\end{equation}
where $u_j^{{\rm high},(i)}$ is the high-resolution training reference state.
This provides a condition-level estimate of how much spectral energy is lost at
each observation resolution. It is computed only from training trajectories and
is held fixed during evaluation; no high-resolution test information is used.

The scalar frequency-aware gate in AMGFNO uses
\begin{equation}
    \sigma
    \left(
        w_1\log(\widehat\omega_{f,\rho}+\varepsilon_{\rm num})
        +
        b_\omega
    \right),
    \label{eq:app-exp-frequency-gate}
\end{equation}
where $\varepsilon_{\rm num}>0$ is a small numerical constant used only for
finite-precision stability. The asymptotic theory in Appendix~\ref{app:theory}
uses the corresponding zero-tail convention.

\paragraph{Note on observability.}
The condition-level estimate $\widehat{\omega}_{f,\rho}$ is computed from 
high-resolution training references and held fixed during training and 
evaluation. Thus, it is a calibration constant for the observation condition 
$(f,\rho)$ rather than a per-sample test-time feature, and no high-resolution 
test information is used. For deployment to new conditions without 
high-resolution training references, this scalar could be replaced by 
observable proxies, such as condition-level summaries of the resolved 
high-band spectrum, or by metadata-based gates conditioned on $f$ and available 
physical parameters $\rho$. We leave the systematic study of such proxies to 
future work.

\subsection{Compared Methods and Implementation Details}
\label{app:exp-baselines}

We compare AMGFNO against both Markovian and memory-augmented neural operators.
The baseline implementations and hyperparameters follow S4FFNO whenever
possible.

\paragraph{Shared input and output processing.}
All models use a spatial grid encoding. In one dimension, for a grid of
resolution $f$ on $[0,L]$, we use the normalized coordinate
\begin{equation}
    g_r:=\frac{x_r}{L},
    \qquad
    x_r=\frac{rL}{f}.
\end{equation}
The encoder maps the concatenation of the PDE state and grid coordinate to the
hidden dimension, and the decoder maps the final hidden representation back to
the PDE state.

\paragraph{FFNO.}
FFNO is based on the Factorized Fourier Neural Operator~\citep{FFNO}, which
builds on the original Fourier Neural Operator~\citep{li2021fourier}. The FFNO
layer combines a spectral convolution path with a pointwise feed-forward path.
The hidden dimension is 128 and the expanded hidden dimension in the FFNO MLP is
$4\cdot128$. We use all resolvable Fourier modes by setting
\begin{equation}
    k_{\max}
    =
    \left\lfloor \frac{f}{2}\right\rfloor.
\end{equation}

\paragraph{S4FFNO.}
S4FFNO instantiates the Memory Neural Operator framework by combining FFNO
layers with an S4 memory layer~\citep{S4FFNO,gu2022efficiently}. All layers
except the memory layer are the same as FFNO. The S4 layer uses state dimension
64 and the diagonal S4D kernel. Following the S4FFNO implementation, the memory
layer operates along the temporal history dimension and is applied
independently at each spatial location. The layer configuration is
\textsc{ffno--ffno--s4--ffno--ffno}.

\paragraph{Multi-Input FFNO.}
Multi-Input FFNO uses the last $K=4$ timesteps of the solution as input to
predict the next timestep, following the multi-step input setting of FNO and
S4FFNO~\citep{li2021fourier,S4FFNO}. The number of FFNO layers and hidden
dimensions are the same as FFNO. The only architectural change is the input
lifting layer, which receives a concatenation of four previous states.

\paragraph{Factformer.}
Factformer is a transformer-based neural operator baseline~\citep{factformer}.
We use the one-dimensional adaptation described in S4FFNO. The model uses four
linear attention layers over the spatial sequence. The hidden dimension is 64,
with four heads and hidden dimension 128 in each attention layer.

\paragraph{Galerkin Transformer.}
The Galerkin Transformer baseline follows~\citet{cao2021choose}. It uses four linear
attention layers over the spatial sequence, concatenates positional information
into the attention inputs, and uses two FNO layers as a spectral regressor. The
hidden dimension is 32.

\paragraph{U-Net.}
The U-Net neural operator baseline follows the U-Net style architecture used for
PDE surrogate modeling~\citep{unet_pde}. It consists of four downsample
convolution blocks, a middle convolution block, and four upsample convolution
blocks with skip connections. The channel multipliers are $[1,2,2,2]$, and no
time embeddings are used.

\paragraph{AMGFNO.}
AMGFNO uses the same FFNO backbone and S4 memory branch as S4FFNO, but replaces
the fixed memory weight with the adaptive memory gate defined in
Section~\ref{subsec:gate}. The gate combines a content-dependent component with
the condition-level spectral scale derived from $\widehat\omega_{f,\rho}$. We
do not redefine the gate here to avoid duplicating Section~\ref{subsec:gate};
this appendix specifies the shared experimental protocol.

\subsection{Training Details}
\label{app:exp-training-details}

All one-dimensional experiments are trained with teacher forcing, following FFNO
and S4FFNO~\citep{FFNO,S4FFNO}. During training, the input for predicting
timestep $j+1$ is the ground-truth resolved state at timestep $j$, rather than
the model's previous prediction. For model $\mathcal G_\theta$, the one-step
prediction is
\begin{equation}
    \widetilde u_{j+1}^{f,(i)}
    =
    \mathcal G_\theta
    \left(
        u_j^{f,(i)},
        g_f
    \right),
    \label{eq:app-one-step}
\end{equation}
where $g_f$ denotes the spatial grid encoding. The mini-batch training loss is
\begin{equation}
    \mathcal L_{\rm train}(\theta)
    =
    \frac{1}{|\mathcal B|T_{\rm pred}}
    \sum_{i\in\mathcal B}
    \sum_{j=0}^{T_{\rm pred}-1}
    \left\|
        \mathcal G_\theta
        \left(
            u_j^{f,(i)},
            g_f
        \right)
        -
        u_{j+1}^{f,(i)}
    \right\|_2^2.
    \label{eq:app-training-loss}
\end{equation}

All models are trained for 200 epochs with initial learning rate
\begin{equation}
    10^{-3},
\end{equation}
using cosine annealing learning-rate scheduling~\citep{loshchilov2017sgdr}.
The batch size is 32 for KS and 64 for Burgers'. Both KS and Burgers'
experiments use 2048 training samples. The maximum memory length is the full
trajectory length used in the rollout: 25 prediction timesteps for KS and 20
prediction timesteps for Burgers'.

\begin{table}[t]
\centering
\caption{Training hyperparameters used in the one-dimensional experiments.}
\label{tab:app-training-hparams}
\small
\setlength{\tabcolsep}{6pt}
\begin{tabular}{lcc}
\toprule
Hyperparameter & KS & Burgers' \\
\midrule
Epochs & 200 & 200 \\
Initial learning rate & $10^{-3}$ & $10^{-3}$ \\
Scheduler & cosine annealing & cosine annealing \\
Batch size & 32 & 64 \\
Training samples & 2048 & 2048 \\
Rollout steps & 25 & 20 \\
\bottomrule
\end{tabular}
\end{table}

We ran our experiments on NVIDIA A6000 GPUs. The one-dimensional experiments
require less than 10GB of GPU memory and take approximately 1--2 hours per run,
depending on the resolution, following the computational condition reported by
S4FFNO.

\subsection{Evaluation}
\label{app:exp-evaluation}

At test time, all models are evaluated autoregressively from the first observed
state. The rollout is initialized by
\begin{equation}
    \widetilde u_0^{f,(i)}
    =
    u_0^{f,(i)},
\end{equation}
and predictions are fed back into the model:
\begin{equation}
    \widetilde u_{j+1}^{f,(i)}
    =
    \mathcal G_\theta
    \left(
        \widetilde u_j^{f,(i)},
        g_f
    \right),
    \qquad
    j=0,\ldots,T_{\rm pred}-1.
    \label{eq:app-rollout}
\end{equation}
Thus, the reported metric evaluates accumulated rollout error rather than
one-step prediction error.

The main evaluation metric is normalized root mean squared error, nRMSE. For
trajectory $i$ and timestep $j$, define
\begin{equation}
    {\rm nRMSE}_{j}^{(i)}(f)
    =
    \frac{
        \left\|
            \widetilde u_j^{f,(i)}
            -
            u_j^{f,(i)}
        \right\|_2
    }{
        \left\|
            u_j^{f,(i)}
        \right\|_2
    }.
    \label{eq:app-step-nrmse}
\end{equation}
The final reported score is averaged over test trajectories and rollout
timesteps:
\begin{equation}
    {\rm nRMSE}(f)
    =
    \frac{1}{N_{\rm test}T_{\rm pred}}
    \sum_{i=1}^{N_{\rm test}}
    \sum_{j=1}^{T_{\rm pred}}
    {\rm nRMSE}_{j}^{(i)}(f).
    \label{eq:app-avg-nrmse}
\end{equation}
The values in Table~\ref{tab:main} are computed using this rollout nRMSE.

For AMGFNO, we additionally measure the mean gate value during rollout. For a
test trajectory, let $g_j^{(i)}\in(0,1)^{f\times H}$ denote the adaptive memory
gate at timestep $j$. We define
\begin{equation}
    \overline g(f,\rho)
    =
    \frac{1}{N_{\rm test}T_{\rm pred}fH}
    \sum_{i=1}^{N_{\rm test}}
    \sum_{j=0}^{T_{\rm pred}-1}
    \left\|
        g_j^{(i)}
    \right\|_1.
    \label{eq:app-mean-gate}
\end{equation}
This value quantifies how strongly AMGFNO uses the memory path under a given
resolution and physical condition. The values reported in Table~\ref{tab:omega_gate}
are computed as the mean gate value over the last 20 epochs.    

\section{Detailed Theory and Proofs for AMGFNO}
\label{app:theory}

This appendix provides the formal details behind the theoretical
motivation in Section~\ref{sec:theory}. The main text keeps the
high-level argument and the frequency-gate quantities
$\omega_f$, $\omega^{\rm pop}_{f,\rho}$, $\widehat\omega_{f,\rho}$,
$\chi_{f,\rho}$, and $\lambda_{f,\rho}$ explicit, while deferring the
full prediction-target, hypothesis-class, approximation-error, and
finite-sample definitions to this appendix.

The comparison treats FFNO, S4FFNO, and AMGFNO as functions of the same
resolved history. FFNO is the restricted subclass that uses only the
current resolved state, S4FFNO uses a fixed residual memory fusion, and
AMGFNO uses the same S4 memory branch with an adaptive memory gate. This
shared-history view separates two effects: the approximation benefit of
memory and the finite-sample complexity cost of maintaining a memory
branch.

The notation is aligned with the AMGFNO gate in
Section~\ref{subsec:gate}. We use $u_t$ for the PDE state, $h_t$ for the
Markovian hidden feature at the AMG insertion point, $z_t$ for the S4
memory output, and $g_t$ for the adaptive memory gate. The symbol $\mathcal F$
is reserved for the Fourier transform in FFNO layers, so predictor
classes are denoted by $\mathcal C$ and loss classes by $\mathfrak L$.
The architectural setting follows FFNO-style neural operators and
S4-based temporal memory models
\citep{gu2020hippo,FFNO,gu2022efficiently,S4FFNO}.

All architecture classes below are identified with their $L^2$ closures
under the population distribution induced by the data-generating process.
This convention lets us use limiting cases such as zero memory output and
zero gate in approximation arguments, even though sigmoid gates take
values in $(0,1)$ before closure.

\paragraph{Proof roadmap.}
Appendix~\ref{app:proof-thm21} proves
Theorem~\ref{thm:gain-convergence}. The proof first bounds the influence
of the unresolved Fourier tail, then transfers approximation of the
projected Markovian target to approximation of the history-conditioned
Bayes target. Appendix~\ref{app:proof-thm22} proves
Theorem~\ref{thm:main-amg-highres}. The proof combines spectral gate
convergence, finite-sample Rademacher bounds, and a sufficient condition
under which the AMGFNO complexity gap closes as its gate budget shrinks.

\subsection{Setup and Common Bayes Target}
\label{app:theory-setup}

We state the argument on the one-dimensional periodic domain $\mathbb T$.
The same notation applies to $[0,L]$ after replacing $e^{in\xi}$ by
$e^{i2\pi n\xi/L}$. Let
\[
    \mathcal H:=L^2(\mathbb T;\mathbb R^V).
\]
For a physical condition $\rho$, let
\begin{equation}
    u_{t+1}^{\rho}
    =
    \Phi_{\Delta t}^{\rho}(u_t^\rho),
    \qquad
    u_0^\rho\sim\mu_\rho,
    \label{eq:app-dynamics}
\end{equation}
where $\Phi_{\Delta t}^{\rho}$ is the time-$\Delta t$ solution map. For
$\phi\in\mathcal H$, write
\begin{equation}
    \phi(\xi)=\sum_{n\in\mathbb Z}a_n(\phi)e^{in\xi}.
\end{equation}
For vector-valued states, $a_n(\phi)\in\mathbb C^V$ and $|a_n(\phi)|^2$
denotes the squared Euclidean norm over channels.

Let $m_f:=\lfloor f/2\rfloor$. The ideal spectral observation at
resolution $f$ is
\begin{equation}
    P_f\phi
    :=
    \sum_{|n|\le m_f}a_n(\phi)e^{in\xi},
    \qquad
    Q_f:=I-P_f,
    \qquad
    \mathcal V_f:=P_f\mathcal H.
    \label{eq:app-projection}
\end{equation}
Thus $Q_f\phi$ is the unresolved Fourier tail. The cutoff $|n|>m_f$
follows the Nyquist--Shannon interpretation of which Fourier modes are
unavailable at resolution $f$ \citep{shannon1949communication}. The
theory uses this idealized spectral projection to isolate the
unresolved-tail mechanism.

The resolved state, resolved history, and one-step resolved target are
\begin{equation}
    u_{t,f}^{\rho}:=P_f u_t^\rho,
    \qquad
    \mathcal U_t^{f,\rho}:=(u_{0,f}^{\rho},\ldots,u_{t,f}^{\rho}),
    \qquad
    Y_t^{f,\rho}:=P_fu_{t+1}^{\rho}.
    \label{eq:app-resolved-objects}
\end{equation}
For clarity, the statements below are written for a fixed finite time
index $t$. If $t$ is sampled uniformly from a finite prediction horizon,
all expectations are replaced by finite averages over $t$, and the
arguments are unchanged.

For a measurable predictor $\psi:\mathcal U_t^{f,\rho}\mapsto\mathcal V_f$,
define the population risk
\begin{equation}
    R_{f,\rho}(\psi)
    :=
    \mathbb E
    \left\|
        Y_t^{f,\rho}
        -
        \psi(\mathcal U_t^{f,\rho})
    \right\|_{\mathcal V_f}^2.
    \label{eq:app-risk}
\end{equation}
The projected Markovian target is
\begin{equation}
    \mathcal T_{f,\rho}(\phi)
    :=
    P_f\Phi_{\Delta t}^{\rho}(\phi),
    \qquad
    \phi\in\mathcal V_f.
    \label{eq:app-projected-target}
\end{equation}
The resolved target decomposes as
\begin{equation}
    Y_t^{f,\rho}
    =
    \mathcal T_{f,\rho}(u_{t,f}^{\rho})
    +
    \delta_{t,f,\rho},
    \label{eq:app-residual-decomposition}
\end{equation}
where
\begin{equation}
    \delta_{t,f,\rho}
    :=
    P_f\Phi_{\Delta t}^{\rho}(u_t^\rho)
    -
    P_f\Phi_{\Delta t}^{\rho}(P_f u_t^\rho).
    \label{eq:app-delta}
\end{equation}
The residual $\delta_{t,f,\rho}$ is the effect of the unresolved tail
$Q_f u_t^\rho$ on the next resolved state.

\begin{proposition}[Bayes decomposition]
\label{prop:bayes-decomposition-final}
Define the history-conditioned Bayes target
\begin{equation}
    b_{f,\rho}(\mathcal U_t^{f,\rho})
    :=
    \mathbb E
    \left[
        Y_t^{f,\rho}
        \mid
        \mathcal U_t^{f,\rho}
    \right]
    \label{eq:app-bayes-target}
\end{equation}
and the irreducible error
\begin{equation}
    \varepsilon_{f,\rho}^2
    :=
    \mathbb E
    \left\|
        Y_t^{f,\rho}
        -
        b_{f,\rho}(\mathcal U_t^{f,\rho})
    \right\|_{\mathcal V_f}^2.
\end{equation}
For every square-integrable predictor $\psi$,
\begin{equation}
    R_{f,\rho}(\psi)
    =
    \varepsilon_{f,\rho}^2
    +
    \mathbb E
    \left\|
        b_{f,\rho}(\mathcal U_t^{f,\rho})
        -
        \psi(\mathcal U_t^{f,\rho})
    \right\|_{\mathcal V_f}^2.
    \label{eq:app-bayes-decomposition}
\end{equation}
\end{proposition}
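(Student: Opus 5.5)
The plan is the standard Hilbert-space Pythagorean argument for conditional expectation, carried out in the finite-dimensional space $\mathcal V_f$ (dimension $V(2m_f+1)$ over $\mathbb R$ after identifying real Fourier pairs), so no infinite-dimensional measurability subtleties arise beyond Bochner integrability.

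\emph{Well-posedness.} We assume, as the risk definition implicitly does, that $\mathbb E\|Y_t^{f,\rho}\|^2_{\mathcal V_f}<\infty$. This follows since $\|Y_t^{f,\rho}\|\le\|u_{t+1}^\rho\|_{\mathcal H}$, and the latter has a finite second moment under the finite-horizon regularity assumptions. Then the conditional expectation $b_{f,\rho}(\mathcal U_t^{f,\rho})$ exists componentwise in $\mathcal V_f$ as a measurable function of the history. By conditional Jensen, $\mathbb E\|b_{f,\rho}\|^2\le\mathbb E\|Y_t^{f,\rho}\|^2<\infty$, so $\varepsilon_{f,\rho}^2$ is finite.

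\emph{Expanding the risk.} Fix a square-integrable $\psi$ and write $Y-\psi=(Y-b)+(b-\psi)$. Expanding the squared norm gives
\[
R_{f,\rho}(\psi)=\varepsilon_{f,\rho}^2+\mathbb E\|b-\psi\|^2+2\,\mathbb E\langle Y-b,\,b-\psi\rangle_{\mathcal V_f}.
\]
All three terms are finite by Cauchy--Schwarz.

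\emph{The cross term.} This is the only real step. Since $W:=b(\mathcal U_t^{f,\rho})-\psi(\mathcal U_t^{f,\rho})$ is $\sigma(\mathcal U_t^{f,\rho})$-measurable and square-integrable, the tower property and "taking out what is known" give
\[
\mathbb E\langle Y-b,W\rangle=\mathbb E\,\langle \mathbb E[Y-b\mid\mathcal U_t^{f,\rho}],W\rangle=0.
\]
To justify this, expand the inner product in an orthonormal basis of $\mathcal V_f$, which reduces the claim to finitely many scalar identities $\mathbb E[(Y_k-b_k)W_k]=0$. Each holds because $W_k\in L^2(\sigma(\mathcal U))$ and $Y_k-b_k$ is orthogonal to that space.

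The main (mild) obstacle is only the integrability bookkeeping needed to license pulling $W$ out of the conditional expectation. Square-integrability of both factors suffices: the product is then $L^1$, and the $L^2$ characterization of conditional expectation as an orthogonal projection applies directly. Finally, if $t$ is sampled from a finite horizon, the same argument is applied for each $t$ and the results are averaged.
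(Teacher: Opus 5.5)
Your proposal is correct and is essentially the paper's proof. You split $Y-\psi(U)=(Y-b(U))+(b(U)-\psi(U))$, expand the square, and eliminate the cross term using the $\sigma(\mathcal U_t^{f,\rho})$-measurability of $b-\psi$ together with $\mathbb E[Y-b(U)\mid U]=0$. The additional integrability checks you supply are left implicit in the paper and tighten the argument without changing the route: finite-dimensionality of $\mathcal V_f$, $\mathbb E\|Y_t^{f,\rho}\|^2_{\mathcal V_f}<\infty$ from the a.s.\ Sobolev bound, and the componentwise $L^2$-projection justification of the cross term.
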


\begin{proof}
Write $U=\mathcal U_t^{f,\rho}$, $Y=Y_t^{f,\rho}$, and
$b(U)=\mathbb E[Y\mid U]$. Then
\[
    Y-\psi(U)
    =
    \bigl(Y-b(U)\bigr)
    +
    \bigl(b(U)-\psi(U)\bigr).
\]
Expanding the squared norm gives
\begin{align}
    \mathbb E\|Y-\psi(U)\|_{\mathcal V_f}^2
    &=
    \mathbb E\|Y-b(U)\|_{\mathcal V_f}^2
    +
    \mathbb E\|b(U)-\psi(U)\|_{\mathcal V_f}^2 \nonumber\\
    &\quad+
    2\mathbb E
    \left\langle
        Y-b(U),
        b(U)-\psi(U)
    \right\rangle_{\mathcal V_f}.
\end{align}
The cross term is zero because $b(U)-\psi(U)$ is $\sigma(U)$-measurable and
\[
    \mathbb E[Y-b(U)\mid U]
    =
    \mathbb E[Y\mid U]-b(U)
    =
    0.
\]
This proves the decomposition.
\end{proof}

\subsection{Architecture Classes and Approximation Gains}
\label{app:theory-classes}

Let
\[
    h_{k,\theta}:=h_k^{(\ell^*-1)}\in\mathbb R^{S\times H}
\]
be the hidden feature produced from $u_{k,f}^{\rho}$ by the encoder and
the FFNO layers before the AMG insertion point. Let
\begin{equation}
    z_{t,\theta}
    :=
    M_\theta(h_{0,\theta},\ldots,h_{t,\theta})
    \in\mathbb R^{S\times H}
    \label{eq:app-s4-memory}
\end{equation}
be the S4 memory output. The S4 branch applies a state-space recurrence
along the temporal axis at each spatial position, treating the spatial
dimension as a batch axis. Let $\mathrm{Dec}_\theta$ denote the remaining
FFNO layers and output decoder.

The FFNO class is
\begin{equation}
    \mathcal C^F_{f,\rho}
    :=
    \left\{
        \psi_\theta(\mathcal U_t^{f,\rho})
        =
        \mathrm{Dec}_\theta(h_{t,\theta})
    \right\}.
    \label{eq:app-class-ffno}
\end{equation}
This class is formally defined on the resolved history but ignores all
states except the current one.

The fixed-fusion S4FFNO class is
\begin{equation}
    \mathcal C^{S4}_{f,\rho}
    :=
    \left\{
        \psi_\theta(\mathcal U_t^{f,\rho})
        =
        \mathrm{Dec}_\theta(h_{t,\theta}+z_{t,\theta})
    \right\}.
    \label{eq:app-class-s4}
\end{equation}

For the content-gated ablation without the frequency-aware gate, define
\begin{equation}
    \mathcal C^{CG}_{f,\rho}
    :=
    \left\{
        \psi_\theta(\mathcal U_t^{f,\rho})
        =
        \mathrm{Dec}_\theta
        \left(
            (1-g_{t,\theta}^{\rm cont})\odot h_{t,\theta}
            +
            g_{t,\theta}^{\rm cont}\odot z_{t,\theta}
        \right)
    \right\},
    \label{eq:app-class-cg}
\end{equation}
where
\begin{equation}
    g_{t,\theta}^{\rm cont}
    :=
    \sigma(W_z z_{t,\theta}+W_h h_{t,\theta}+b)
    \in(0,1)^{S\times H}.
\end{equation}

For a full reference state $\phi\in\mathcal H$, define the unresolved
spectral-energy ratio
\begin{equation}
    \omega_f(\phi)
    :=
    \frac{
        \sum_{|n|>m_f}|a_n(\phi)|^2
    }{
        \sum_{n\in\mathbb Z}|a_n(\phi)|^2
    },
    \label{eq:app-omega-state}
\end{equation}
with value zero when the denominator is zero. For condition $(f,\rho)$,
define
\begin{equation}
    \omega^{\rm pop}_{f,\rho}
    :=
    \mathbb E_{u_0,t}\left[\omega_f(u_t^\rho)\right].
    \label{eq:app-omega-pop}
\end{equation}
In implementation, $\omega^{\rm pop}_{f,\rho}$ is replaced by the
empirical training-set estimate $\widehat\omega_{f,\rho}$ computed from
high-resolution training trajectories, with Fourier coefficients
approximated by the discrete Fourier transform \citep{cooley1969finite}.
The notation $\omega_f$ in Section~\ref{subsec:gate} denotes this
condition-level scalar when $(f,\rho)$ is fixed. It is not computed from
unresolved test-time modes.

Define the scalar frequency gate by
\begin{equation}
    \chi_{f,\rho}
    :=
    \begin{cases}
    \sigma\!\left(w_1\log\omega^{\rm pop}_{f,\rho}+b_\omega\right),
        & \omega^{\rm pop}_{f,\rho}>0,\\
    0,
        & \omega^{\rm pop}_{f,\rho}=0,
    \end{cases}
    \qquad
    w_1>0.
    \label{eq:app-chi}
\end{equation}
The numerical implementation evaluates
$\log(\widehat\omega_{f,\rho}+\varepsilon_{\rm num})$ for finite-precision
stability. The asymptotic analysis uses the zero-tail convention in
\eqref{eq:app-chi}.

The AMGFNO gate and predictor are
\begin{equation}
    g_{t,\theta}
    :=
    \chi_{f,\rho}g_{t,\theta}^{\rm cont},
    \label{eq:app-amg-gate}
\end{equation}
and
\begin{equation}
    \psi_\theta(\mathcal U_t^{f,\rho})
    =
    \mathrm{Dec}_\theta
    \left(
        (1-g_{t,\theta})\odot h_{t,\theta}
        +
        g_{t,\theta}\odot z_{t,\theta}
    \right).
    \label{eq:app-amg-predictor}
\end{equation}
The corresponding AMGFNO class is denoted by $\mathcal C^{AMG}_{f,\rho}$.

For $\psi\in\mathcal C^{AMG}_{f,\rho}$, define the normalized gate budget
\begin{equation}
    \Lambda_{f,\rho}(\psi)
    :=
    \mathbb E
    \left[
        \frac{1}{SH}\|g_{t,\theta}\|_F^2
    \right].
    \label{eq:app-gate-budget}
\end{equation}
For $\lambda\ge0$, define
\begin{equation}
    \mathcal C^{AMG}_{f,\rho}(\lambda)
    :=
    \left\{
        \psi\in\mathcal C^{AMG}_{f,\rho}
        :
        \Lambda_{f,\rho}(\psi)\le\lambda
    \right\}.
    \label{eq:app-amg-budget-class}
\end{equation}
Since $0\le g_{t,\theta}^{\rm cont}\le1$ entrywise,
\begin{equation}
    \Lambda_{f,\rho}(\psi)
    \le
    \chi_{f,\rho}^2.
\end{equation}
We write
\begin{equation}
    \lambda_{f,\rho}
    :=
    \chi_{f,\rho}^2.
    \label{eq:app-lambda}
\end{equation}

\begin{proposition}[Markovian fallback]
\label{prop:markovian-fallback-final}
Assume the S4 parameterization contains a zero-output realization, for
example by setting its readout to zero. Under the $L^2$-closure
convention,
\begin{equation}
    \mathcal C^F_{f,\rho}\subset\mathcal C^{S4}_{f,\rho},
    \qquad
    \mathcal C^F_{f,\rho}\subset\mathcal C^{CG}_{f,\rho},
    \qquad
    \mathcal C^F_{f,\rho}\subset\mathcal C^{AMG}_{f,\rho}(\lambda)
\end{equation}
for every $\lambda\ge0$.
\end{proposition}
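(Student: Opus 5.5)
The plan is to show each inclusion by explicit construction. Take an arbitrary FFNO predictor $\psi_\theta=\mathrm{Dec}_\theta(h_{t,\theta})\in\mathcal C^F_{f,\rho}$ and choose parameters in the larger class whose predictor equals $\psi_\theta$ $\mu$-almost surely, or is an $L^2$ limit of such predictors. The encoder, the two pre-insertion FFNO layers, and $\mathrm{Dec}_\theta$ are shared by all classes, so we keep them identical to those of $\psi_\theta$. Only the memory branch and the gate parameters need to be chosen.

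For $\mathcal C^{S4}_{f,\rho}$ we use the assumed zero-output realization of $M_\theta$, namely readout $C=0$. Then $z_{t,\theta}\equiv0$, so $\mathrm{Dec}_\theta(h_{t,\theta}+z_{t,\theta})=\mathrm{Dec}_\theta(h_{t,\theta})$ exactly, and no closure is required.

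For $\mathcal C^{CG}_{f,\rho}$ we again set $z_{t,\theta}\equiv0$. The fused feature becomes $(1-g^{\rm cont}_{t,\theta})\odot h_{t,\theta}$, which is not $h_{t,\theta}$ because $g^{\rm cont}\in(0,1)$. We therefore take $W_z=W_h=0$ and $b=-k\mathbf 1$ with $k\to\infty$. Then $g^{\rm cont}=\sigma(-k)\to0$ uniformly, so the fused feature converges to $h_{t,\theta}$ pointwise and in $L^2$. Alternatively, one can keep any gate and simply note that with $z\equiv0$ the output is $(1-\sigma(-k))h$.

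For $\mathcal C^{AMG}_{f,\rho}(\lambda)$ we use the same construction. Since $\chi_{f,\rho}\le1$, we have $g_{t,\theta}=\chi_{f,\rho}\sigma(-k)\to0$. The gate budget is $\Lambda_{f,\rho}=\chi_{f,\rho}^2\sigma(-k)^2\to0$, so for any $\lambda>0$ it lies below $\lambda$ once $k$ is large. For $\lambda=0$ the constraint holds only in the limit, and this is exactly where the $L^2$-closure convention is used: we identify the class with the closure of the gate-zero limit. If $\chi_{f,\rho}=0$, the predictor is already exactly FFNO.

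The main obstacle is passing from convergence of the fused feature to convergence of the predictor in $L^2(\mu)$. This requires continuity of $\mathrm{Dec}_\theta$ together with a domination bound. My first approach uses the Lipschitz continuity of $\mathrm{Dec}_\theta$, which is a composition of FFNO layers with bounded spectral and local weights and Lipschitz activations, with some constant $L_{\rm Dec}$. This gives
\[
\mathbb E\|\mathrm{Dec}_\theta((1-g)\odot h)-\mathrm{Dec}_\theta(h)\|^2\le L_{\rm Dec}^2\,\sigma(-k)^2\,\mathbb E\|h_{t,\theta}\|_F^2 .
\]
This tends to zero provided $\mathbb E\|h_{t,\theta}\|_F^2<\infty$, which follows from square-integrability of the resolved states and the linear growth of the Lipschitz encoder and pre-insertion layers. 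Hence $\psi_\theta$ lies in the $L^2$ closure of each memory-augmented class, which establishes all three inclusions.
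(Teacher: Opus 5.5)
Your proposal is correct and follows essentially the same route as the paper. The $S4$ inclusion comes from an exact zero-readout realization, and the gated classes are handled by driving the gate biases to $-\infty$ and invoking the $L^2$-closure convention. Your additions make explicit steps the paper's proof leaves implicit: setting $z_{t,\theta}\equiv0$ and $W_z=W_h=0$, checking that the approximating sequence already lies in $\mathcal C^{AMG}_{f,\rho}(\lambda)$ when $\lambda>0$, and using a Lipschitz bound on $\mathrm{Dec}_\theta$ to turn fused-feature convergence into $L^2$ convergence of the predictors.
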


\begin{proof}
For S4FFNO, choose the S4 readout so that $z_{t,\theta}=0$ for every
history. Then
\[
    \mathrm{Dec}_\theta(h_{t,\theta}+z_{t,\theta})
    =
    \mathrm{Dec}_\theta(h_{t,\theta}),
\]
which is an FFNO predictor.

For the gated classes, take a sequence of gate biases tending to
$-\infty$, so that the gate converges to zero in $L^2$. The fused feature
converges to
\[
    (1-g_{t,\theta})\odot h_{t,\theta}
    +
    g_{t,\theta}\odot z_{t,\theta}
    \to
    h_{t,\theta}.
\]
The resulting predictor is $\mathrm{Dec}_\theta(h_{t,\theta})$, and the
gate budget is zero. Hence the FFNO predictor belongs to the gated
classes in closure.
\end{proof}

Define the approximation errors to the common Bayes target:
\begin{align}
    \mathcal E^F_{f,\rho}
    &:=
    \inf_{\psi\in\mathcal C^F_{f,\rho}}
    \mathbb E
    \left\|
        b_{f,\rho}(\mathcal U_t^{f,\rho})
        -
        \psi(\mathcal U_t^{f,\rho})
    \right\|_{\mathcal V_f}^2,
    \label{eq:app-E-F}\\
    \mathcal E^{S4}_{f,\rho}
    &:=
    \inf_{\psi\in\mathcal C^{S4}_{f,\rho}}
    \mathbb E
    \left\|
        b_{f,\rho}(\mathcal U_t^{f,\rho})
        -
        \psi(\mathcal U_t^{f,\rho})
    \right\|_{\mathcal V_f}^2,\\
    \mathcal E^{CG}_{f,\rho}
    &:=
    \inf_{\psi\in\mathcal C^{CG}_{f,\rho}}
    \mathbb E
    \left\|
        b_{f,\rho}(\mathcal U_t^{f,\rho})
        -
        \psi(\mathcal U_t^{f,\rho})
    \right\|_{\mathcal V_f}^2,\\
    \mathcal E^{AMG}_{f,\rho}(\lambda)
    &:=
    \inf_{\psi\in\mathcal C^{AMG}_{f,\rho}(\lambda)}
    \mathbb E
    \left\|
        b_{f,\rho}(\mathcal U_t^{f,\rho})
        -
        \psi(\mathcal U_t^{f,\rho})
    \right\|_{\mathcal V_f}^2.
    \label{eq:app-E-AMG}
\end{align}
The approximation gains over FFNO are
\begin{align}
    \Gamma^{S4}_{f,\rho}
    &:=
    \mathcal E^F_{f,\rho}
    -
    \mathcal E^{S4}_{f,\rho},\\
    \Gamma^{CG}_{f,\rho}
    &:=
    \mathcal E^F_{f,\rho}
    -
    \mathcal E^{CG}_{f,\rho},\\
    \Gamma^{AMG}_{f,\rho}(\lambda)
    &:=
    \mathcal E^F_{f,\rho}
    -
    \mathcal E^{AMG}_{f,\rho}(\lambda).
\end{align}
By Proposition~\ref{prop:markovian-fallback-final}, these gains are
nonnegative.

Finally, define the FFNO approximation error to the projected Markovian
target:
\begin{equation}
    \mathcal A^F_{f,\rho}
    :=
    \inf_{\psi\in\mathcal C^F_{f,\rho}}
    \mathbb E
    \left\|
        \mathcal T_{f,\rho}(u_{t,f}^{\rho})
        -
        \psi(\mathcal U_t^{f,\rho})
    \right\|_{\mathcal V_f}^2.
    \label{eq:app-A-F}
\end{equation}

\paragraph{Why distinguish $\mathcal A^F_{f,\rho}$ and
$\mathcal E^F_{f,\rho}$?}
The distinction is central to the argument.
$\mathcal A^F_{f,\rho}$ measures how well FFNO can approximate the ideal
projected Markovian target $\mathcal T_{f,\rho}(u^\rho_{t,f})$, which
depends only on the current resolved state. This is the approximation
quantity appearing in Assumption~\ref{ass:ffno-projected-approx}. In
contrast, $\mathcal E^F_{f,\rho}$ measures FFNO's approximation error to
the history-conditioned Bayes target
$b_{f,\rho}(\mathcal U_t^{f,\rho})$, which is the actual partially
observed prediction target and may depend on the entire resolved history.
Therefore, the theory does not assume that FFNO already approximates the
history-conditioned Bayes target. Instead,
Lemma~\ref{lem:bayes-target-convergence-final} uses the unresolved-tail
residual bound to transfer approximation of the projected Markovian
target into a bound on the Bayes-target error.

\subsection{Assumptions}
\label{app:theory-assumptions}

\begin{assumption}[Finite-horizon well-posedness and Sobolev bound]
\label{ass:wellposed-sobolev}
For $\mu_\rho$-almost every initial condition, the PDE admits a unique
solution over the finite prediction horizon. For some $r\ge0$ and
$\beta>0$, there exists $R_{\rho,\beta}<\infty$ such that
\begin{equation}
    \sup_{0\le t\le T_{\rm pred}}
    \|u_t^\rho\|_{H^{r+\beta}(\mathbb T)}
    \le
    R_{\rho,\beta}
    \qquad\text{almost surely}.
    \label{eq:ass-sobolev-bound}
\end{equation}
For dissipative long-time settings, this may be replaced by an
absorbing-ball bound in $H^{r+\beta}$. Such well-posedness,
finite-horizon regularity, and dissipativity assumptions are standard for
semilinear parabolic and dissipative PDEs
\citep{henry1981geometric,pazy1983semigroups,amann1984existence,temam1997infinite,robinson2001infinite}.
For the Kuramoto--Sivashinsky equation, they are supported by the KS
well-posedness and attractor literature
\citep{nicolaenko1985global,tadmor1986wellposed}.
\end{assumption}

\begin{assumption}[Local Lipschitz continuity of the solution map]
\label{ass:solution-lipschitz}
For every $R>0$, there exists $L_{\rho,R}<\infty$ such that
\begin{equation}
    \|\Phi_{\Delta t}^{\rho}(\phi)-\Phi_{\Delta t}^{\rho}(\psi)\|_{L^2(\mathbb T)}
    \le
    L_{\rho,R}\|\phi-\psi\|_{H^r(\mathbb T)}
    \label{eq:ass-lipschitz}
\end{equation}
whenever $\|\phi\|_{H^r}\le R$ and $\|\psi\|_{H^r}\le R$. This is the
finite-time continuous-dependence property used in semigroup treatments
of semilinear parabolic equations
\citep{henry1981geometric,pazy1983semigroups,amann1984existence}.
\end{assumption}

\begin{assumption}[FFNO approximation of the projected Markovian target]
\label{ass:ffno-projected-approx}
The projected-target approximation error satisfies
\begin{equation}
    \mathcal A^F_{f,\rho}\to0
    \qquad
    \text{as }f\to\infty.
    \label{eq:ass-ffno-projected-approx}
\end{equation}
This assumption concerns only the ideal projected Markovian target
$\mathcal T_{f,\rho}$, not the history-conditioned Bayes target
$b_{f,\rho}$. Thus it is not an assumption that FFNO can already solve
the partially observed prediction problem. Rather, it states that, as the
resolved space becomes richer, the FFNO class can approximate the
Markovian solution map restricted to the resolved variables. This is
consistent with neural-operator approximation results for continuous
solution operators on compact sets
\citep{li2021fourier,FFNO,kovachki2023neuraloperator}.
\end{assumption}

\begin{assumption}[Bounded or sub-Gaussian losses]
\label{ass:bounded-loss}
For the finite-sample bounds, assume squared losses are uniformly bounded
by $\ell_\infty$. Equivalently, one may replace this by a standard
sub-Gaussian condition and obtain corresponding high-probability
concentration terms. The bounded-loss form is used only to keep the
comparison concise \citep{Bach}.
\end{assumption}

\begin{assumption}[Trajectory-level sampling]
\label{ass:trajectory-sampling}
Training trajectories are independent. Multiple timesteps from the same
trajectory may be dependent. The finite-sample statements treat each
trajectory as one independent sample and define the empirical loss as the
average over the finite prediction horizon. This follows the standard
multi-trajectory learning setup, where trajectories are independent but
covariates within a trajectory need not be independent~\citep{tu2024learning}.
\end{assumption}

\subsection{Residual Decay and Proof of Theorem~\ref{thm:gain-convergence}}
\label{app:theory-convergence}
\label{app:proof-thm21}

\begin{lemma}[Residual decay]
\label{lem:residual-decay-final}
Under Assumptions~\ref{ass:wellposed-sobolev} and
\ref{ass:solution-lipschitz}, there exists a constant
$\kappa_{{\rm tail},\rho}>0$, independent of $f$, such that
\begin{equation}
    \mathbb E
    \|\delta_{t,f,\rho}\|_{\mathcal V_f}^2
    \le
    \kappa_{{\rm tail},\rho}f^{-2\beta}.
    \label{eq:app-residual-decay}
\end{equation}
\end{lemma}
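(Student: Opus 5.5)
The plan is to bound $\delta_{t,f,\rho}$ pointwise (almost surely) by the Lipschitz estimate and then control the tail $Q_fu_t^\rho$ in $H^r$ through the extra $\beta$ derivatives from Assumption~\ref{ass:wellposed-sobolev}. Taking expectations of a deterministic almost-sure bound then gives the claim.

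\textbf{Step 1: reduce to a Lipschitz estimate.} Since $P_f$ is an orthogonal projection on $L^2(\mathbb T)$, it has operator norm at most one, so
\[
\|\delta_{t,f,\rho}\|_{\mathcal V_f}\le \|\Phi_{\Delta t}^{\rho}(u_t^\rho)-\Phi_{\Delta t}^{\rho}(P_fu_t^\rho)\|_{L^2}.
\]
To apply Assumption~\ref{ass:solution-lipschitz}, both arguments must lie in a common $H^r$ ball. Fourier truncation does not increase any Sobolev norm, because $\|P_f\phi\|_{H^s}^2=\sum_{|n|\le m_f}(1+n^2)^s|a_n|^2\le\|\phi\|_{H^s}^2$. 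Combined with $\|\cdot\|_{H^r}\le\|\cdot\|_{H^{r+\beta}}$ and the almost-sure bound \eqref{eq:ass-sobolev-bound}, both $u_t^\rho$ and $P_fu_t^\rho$ lie in the ball of radius $R:=R_{\rho,\beta}$. Hence, almost surely,
\[
\|\delta_{t,f,\rho}\|\le L_{\rho,R}\|Q_fu_t^\rho\|_{H^r}.
\]

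\textbf{Step 2: spectral tail estimate.} On $|n|>m_f$ we have $(1+n^2)^{r}\le (1+n^2)^{r+\beta}(1+m_f^2)^{-\beta}$, which gives
\[
\|Q_fu_t^\rho\|_{H^r}^2\le (1+m_f^2)^{-\beta}\|u_t^\rho\|_{H^{r+\beta}}^2\le m_f^{-2\beta}R_{\rho,\beta}^2.
\]
For $f\ge2$ we have $m_f=\lfloor f/2\rfloor\ge f/3$, so $m_f^{-2\beta}\le 3^{2\beta}f^{-2\beta}$. For $f<2$ we have $m_f=0$, and the bound $\|Q_fu\|_{H^r}\le R$ is trivial. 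Enlarging the constant by $2^{2\beta}$ covers this case uniformly in $f$.

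\textbf{Step 3: conclude.} Squaring, taking expectations (finite averages over $t$ if $t$ is sampled), and collecting constants gives \eqref{eq:app-residual-decay} with
\[
\kappa_{{\rm tail},\rho}:=\max(3,2)^{2\beta}L_{\rho,R_{\rho,\beta}}^2R_{\rho,\beta}^2.
\]
If this constant is zero, replace it by any positive number to match the statement's $\kappa_{{\rm tail},\rho}>0$.

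\textbf{Main obstacle.} The delicate step is Step 1. The Lipschitz constant in Assumption~\ref{ass:solution-lipschitz} is only local, so it must be uniform in $f$. This holds only because truncation is nonexpansive in $H^r$ and the Sobolev bound is almost sure rather than in expectation. With only a moment bound, one would instead need a growth condition on $L_{\rho,R}$ in $R$ together with Hölder's inequality.

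For the rescaled domain $[0,L]$, the frequency weights change by the factor $2\pi/L$, which only alters the constants. For vector-valued states the argument is componentwise with channel norms.
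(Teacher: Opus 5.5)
Your proposal is correct and follows the paper's proof essentially step for step: you use nonexpansiveness of $P_f$ in $L^2$ and in $H^r$ to put $u_t^\rho$ and $P_fu_t^\rho$ in the same $H^r$ ball, apply the local Lipschitz bound to reduce to $\|Q_fu_t^\rho\|_{H^r}$, and then trade $\beta$ extra derivatives for $f^{-2\beta}$ in the spectral tail. Your explicit bound $m_f\ge f/3$ (for integer $f\ge 2$) and your separate treatment of $f<2$ simply spell out the constant $C_\beta$ that the paper leaves implicit.
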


\begin{proof}
Since $P_f$ is an orthogonal projection in $L^2$,
\begin{align}
    \|\delta_{t,f,\rho}\|_{\mathcal V_f}
    &=
    \left\|
        P_f
        \left(
            \Phi_{\Delta t}^{\rho}(u_t^\rho)
            -
            \Phi_{\Delta t}^{\rho}(P_f u_t^\rho)
        \right)
    \right\|_{L^2} \nonumber\\
    &\le
    \left\|
        \Phi_{\Delta t}^{\rho}(u_t^\rho)
        -
        \Phi_{\Delta t}^{\rho}(P_f u_t^\rho)
    \right\|_{L^2}.
    \label{eq:proof-projection-nonexpansive}
\end{align}
By Assumption~\ref{ass:wellposed-sobolev},
$\|u_t^\rho\|_{H^{r+\beta}}\le R_{\rho,\beta}$ almost surely. Since
$P_f$ is nonexpansive in Sobolev norms,
\[
    \|P_f u_t^\rho\|_{H^r}
    \le
    \|u_t^\rho\|_{H^r}
    \le
    \|u_t^\rho\|_{H^{r+\beta}}
    \le
    R_{\rho,\beta}.
\]
Thus both $u_t^\rho$ and $P_f u_t^\rho$ lie in the same $H^r$ ball.
Applying Assumption~\ref{ass:solution-lipschitz} gives
\begin{equation}
    \|\delta_{t,f,\rho}\|_{\mathcal V_f}
    \le
    L_{\rho,R_{\rho,\beta}}
    \|u_t^\rho-P_f u_t^\rho\|_{H^r}
    =
    L_{\rho,R_{\rho,\beta}}
    \|Q_f u_t^\rho\|_{H^r}.
    \label{eq:proof-lipschitz-tail}
\end{equation}

Using the Fourier definition of the $H^r$ norm,
\begin{align}
    \|Q_f u_t^\rho\|_{H^r}^2
    &=
    \sum_{|n|>m_f}
    (1+|n|^2)^r
    |a_n(u_t^\rho)|^2 \nonumber\\
    &=
    \sum_{|n|>m_f}
    (1+|n|^2)^{r+\beta}
    (1+|n|^2)^{-\beta}
    |a_n(u_t^\rho)|^2 .
\end{align}
For $|n|>m_f$, there exists $C_\beta$ independent of $f$ such that
\[
    (1+|n|^2)^{-\beta}
    \le
    (1+m_f^2)^{-\beta}
    \le
    C_\beta f^{-2\beta}.
\]
Therefore
\begin{equation}
    \|Q_f u_t^\rho\|_{H^r}^2
    \le
    C_\beta f^{-2\beta}
    \|u_t^\rho\|_{H^{r+\beta}}^2
    \le
    C_\beta f^{-2\beta}R_{\rho,\beta}^2.
\end{equation}
Combining this with \eqref{eq:proof-lipschitz-tail} and taking
expectations proves the claim with
\[
    \kappa_{{\rm tail},\rho}
    :=
    C_\beta L_{\rho,R_{\rho,\beta}}^2R_{\rho,\beta}^2.
\]
\end{proof}

\begin{lemma}[From projected-target approximation to Bayes-target approximation]
\label{lem:bayes-target-convergence-final}
Under Assumptions~\ref{ass:wellposed-sobolev} and
\ref{ass:solution-lipschitz},
\begin{equation}
    \mathcal E^F_{f,\rho}
    \le
    2\mathcal A^F_{f,\rho}
    +
    2\kappa_{{\rm tail},\rho}f^{-2\beta}.
    \label{eq:app-E-bound-by-A}
\end{equation}
If Assumption~\ref{ass:ffno-projected-approx} also holds, then
$\mathcal E^F_{f,\rho}\to0$ as $f\to\infty$.
\end{lemma}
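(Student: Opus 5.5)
The plan is to compare every FFNO predictor against the projected Markovian target $\mathcal T_{f,\rho}(u_{t,f}^{\rho})$, using that target as an intermediate. Fix $\psi\in\mathcal C^F_{f,\rho}$ and write $U=\mathcal U_t^{f,\rho}$. Since $\mathcal T_{f,\rho}(u_{t,f}^{\rho})$ depends only on $u_{t,f}^{\rho}$, it is $\sigma(U)$-measurable. Taking conditional expectations in the decomposition \eqref{eq:app-residual-decomposition} therefore gives
\[
    b_{f,\rho}(U)=\mathcal T_{f,\rho}(u_{t,f}^{\rho})+\mathbb E[\delta_{t,f,\rho}\mid U].
\]
This identity is the key step: it expresses the discrepancy between the Bayes target and the Markovian target purely through the projected residual.

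Next, split $b_{f,\rho}(U)-\psi(U)=\bigl(\mathcal T_{f,\rho}(u_{t,f}^{\rho})-\psi(U)\bigr)+\mathbb E[\delta_{t,f,\rho}\mid U]$ and apply $\|a+b\|^2\le 2\|a\|^2+2\|b\|^2$ pointwise before taking expectations. Conditional Jensen (the conditional expectation is an $L^2$ contraction) gives $\mathbb E\|\mathbb E[\delta_{t,f,\rho}\mid U]\|^2\le\mathbb E\|\delta_{t,f,\rho}\|^2$, and Lemma~\ref{lem:residual-decay-final} bounds the latter by $\kappa_{{\rm tail},\rho}f^{-2\beta}$. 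Hence
\[
    \mathbb E\|b_{f,\rho}(U)-\psi(U)\|^2\le 2\,\mathbb E\|\mathcal T_{f,\rho}(u_{t,f}^{\rho})-\psi(U)\|^2+2\kappa_{{\rm tail},\rho}f^{-2\beta}.
\]
Taking the infimum over $\psi\in\mathcal C^F_{f,\rho}$ on both sides gives \eqref{eq:app-E-bound-by-A}: the left side becomes $\mathcal E^F_{f,\rho}$ and the first term on the right becomes $2\mathcal A^F_{f,\rho}$.

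For the convergence claim, Assumption~\ref{ass:ffno-projected-approx} gives $\mathcal A^F_{f,\rho}\to0$, and $f^{-2\beta}\to0$ because $\beta>0$. Since $\mathcal E^F_{f,\rho}\ge0$, it follows that $\mathcal E^F_{f,\rho}\to0$.

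The main subtlety is integrability. We need $\delta_{t,f,\rho}$, $\mathcal T_{f,\rho}(u_{t,f}^{\rho})$ and $Y_t^{f,\rho}$ to be square-integrable so that the conditional expectations are defined in $L^2$. This follows from the almost-sure Sobolev bound in Assumption~\ref{ass:wellposed-sobolev} together with the local Lipschitz bound: taking $\phi=0$ in Assumption~\ref{ass:solution-lipschitz} gives a uniform bound on $\|\Phi^\rho_{\Delta t}\|$ on the $H^r$ ball of radius $R_{\rho,\beta}$.

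We also need measurability of $\mathcal T_{f,\rho}(u_{t,f}^{\rho})$ with respect to $\sigma(U)$. This holds because $\mathcal T_{f,\rho}$ is continuous, hence Borel, on that ball, and $u_{t,f}^{\rho}$ is the last component of $U$.
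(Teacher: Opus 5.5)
Your proof is correct and follows essentially the same route as the paper. Both condition the residual decomposition on $U=\mathcal U_t^{f,\rho}$ to get $b_{f,\rho}(U)=\mathcal T_{f,\rho}(u_{t,f}^{\rho})+\mathbb E[\delta_{t,f,\rho}\mid U]$, then apply the $2\|a\|^2+2\|b\|^2$ split, conditional Jensen, Lemma~\ref{lem:residual-decay-final}, and the infimum over $\mathcal C^F_{f,\rho}$. Your added square-integrability and measurability checks, which compare with the zero state in the local Lipschitz assumption, are sound details that the paper leaves implicit.
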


\begin{proof}
The proof transfers approximation of the projected Markovian target to
approximation of the true history-conditioned Bayes target. From
\eqref{eq:app-residual-decomposition},
\[
    Y_t^{f,\rho}
    =
    \mathcal T_{f,\rho}(u_{t,f}^{\rho})
    +
    \delta_{t,f,\rho}.
\]
Taking conditional expectation with respect to
$\mathcal U_t^{f,\rho}$ gives
\begin{equation}
    b_{f,\rho}(\mathcal U_t^{f,\rho})
    =
    \mathcal T_{f,\rho}(u_{t,f}^{\rho})
    +
    c_{t,f,\rho}(\mathcal U_t^{f,\rho}),
    \label{eq:app-bayes-target-residual}
\end{equation}
where
\[
    c_{t,f,\rho}(\mathcal U_t^{f,\rho})
    :=
    \mathbb E
    \left[
        \delta_{t,f,\rho}
        \mid
        \mathcal U_t^{f,\rho}
    \right].
\]
The term $\mathcal T_{f,\rho}(u_{t,f}^{\rho})$ is
$\sigma(\mathcal U_t^{f,\rho})$-measurable.

For any $\psi\in\mathcal C^F_{f,\rho}$,
\begin{align}
    &
    \mathbb E
    \left\|
        b_{f,\rho}(\mathcal U_t^{f,\rho})
        -
        \psi(\mathcal U_t^{f,\rho})
    \right\|_{\mathcal V_f}^2
    \nonumber\\
    &\quad\le
    2
    \mathbb E
    \left\|
        \mathcal T_{f,\rho}(u_{t,f}^{\rho})
        -
        \psi(\mathcal U_t^{f,\rho})
    \right\|_{\mathcal V_f}^2
    +
    2
    \mathbb E
    \left\|
        c_{t,f,\rho}(\mathcal U_t^{f,\rho})
    \right\|_{\mathcal V_f}^2.
    \label{eq:app-bayes-bound-intermediate}
\end{align}
By Jensen's inequality for conditional expectation,
\begin{equation}
    \mathbb E
    \left\|
        c_{t,f,\rho}(\mathcal U_t^{f,\rho})
    \right\|_{\mathcal V_f}^2
    =
    \mathbb E
    \left\|
        \mathbb E[\delta_{t,f,\rho}\mid\mathcal U_t^{f,\rho}]
    \right\|_{\mathcal V_f}^2
    \le
    \mathbb E
    \|\delta_{t,f,\rho}\|_{\mathcal V_f}^2.
\end{equation}
Taking the infimum over $\psi\in\mathcal C^F_{f,\rho}$ in
\eqref{eq:app-bayes-bound-intermediate} and applying
Lemma~\ref{lem:residual-decay-final} proves
\eqref{eq:app-E-bound-by-A}. If $\mathcal A^F_{f,\rho}\to0$, the
right-hand side converges to zero.
\end{proof}

\begin{proof}[Proof of Theorem~\ref{thm:gain-convergence}]
Lemma~\ref{lem:bayes-target-convergence-final} gives
\begin{equation}
    \mathcal E^F_{f,\rho}
    \le
    2\mathcal A^F_{f,\rho}
    +
    2\kappa_{{\rm tail},\rho}f^{-2\beta}.
\end{equation}
Under Assumption~\ref{ass:ffno-projected-approx},
$\mathcal A^F_{f,\rho}\to0$. Since $f^{-2\beta}\to0$, this implies
\[
    \mathcal E^F_{f,\rho}\to0.
\]
Importantly, the convergence of $\mathcal E^F_{f,\rho}$ is a conclusion
of the residual-transfer bound, not an assumption.

By Proposition~\ref{prop:markovian-fallback-final}, each
memory-augmented class contains the FFNO fallback in closure. Hence
\[
    \mathcal E^{S4}_{f,\rho}
    \le
    \mathcal E^F_{f,\rho},
    \qquad
    \mathcal E^{AMG}_{f,\rho}(\lambda)
    \le
    \mathcal E^F_{f,\rho}
\]
for every $\lambda\ge0$. Therefore the corresponding memory gains satisfy
\[
    0
    \le
    \Gamma^{S4}_{f,\rho}
    =
    \mathcal E^F_{f,\rho}
    -
    \mathcal E^{S4}_{f,\rho}
    \le
    \mathcal E^F_{f,\rho},
\]
and
\[
    0
    \le
    \Gamma^{AMG}_{f,\rho}(\lambda)
    =
    \mathcal E^F_{f,\rho}
    -
    \mathcal E^{AMG}_{f,\rho}(\lambda)
    \le
    \mathcal E^F_{f,\rho}.
\]
Since $\mathcal E^F_{f,\rho}\to0$, both gains converge to zero by
squeezing. This proves Theorem~\ref{thm:gain-convergence}.
\end{proof}

\subsection{Spectral Gate Convergence}
\label{app:theory-gate-convergence}

\begin{lemma}[Condition-level spectral gate convergence]
\label{lem:gate-convergence-final}
Let $\omega_f(\phi)$, $\omega^{\rm pop}_{f,\rho}$, $\chi_{f,\rho}$, and
$\lambda_{f,\rho}$ be defined by
\eqref{eq:app-omega-state}--\eqref{eq:app-lambda}. Then
\begin{equation}
    \omega^{\rm pop}_{f,\rho}\to0,
    \qquad
    \chi_{f,\rho}\to0,
    \qquad
    \lambda_{f,\rho}=\chi_{f,\rho}^2\to0
    \qquad
    \text{as }f\to\infty.
    \label{eq:app-gate-convergence}
\end{equation}
Moreover, every $\psi\in\mathcal C^{AMG}_{f,\rho}$ satisfies
\begin{equation}
    \Lambda_{f,\rho}(\psi)\le\lambda_{f,\rho}.
\end{equation}
\end{lemma}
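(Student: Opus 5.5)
The plan is to prove the three convergences in order, $\omega^{\rm pop}_{f,\rho}\to0$, then $\chi_{f,\rho}\to0$, then $\lambda_{f,\rho}\to0$, and to finish with the budget inequality.

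\textbf{Step 1: $\omega^{\rm pop}_{f,\rho}\to0$.} For a fixed state $\phi\in\mathcal H$ with $\|\phi\|_{L^2}>0$, the numerator of $\omega_f(\phi)$ is the tail $\sum_{|n|>m_f}|a_n(\phi)|^2$ of the convergent Parseval series. Since $m_f=\lfloor f/2\rfloor\to\infty$, this tail tends to zero, so $\omega_f(\phi)\to0$ pointwise. If the denominator vanishes, $\omega_f(\phi)=0$ by convention. In every case $0\le\omega_f(\phi)\le1$, because the numerator is a subsum of the denominator. Applying pointwise convergence to $u_t^\rho$ for almost every $u_0$ and each $t$ in the finite horizon, dominated convergence with dominating constant $1$ gives $\mathbb E_{u_0,t}[\omega_f(u_t^\rho)]\to0$. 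The finite average over $t$ is harmless. This step needs only $u_t^\rho\in L^2$, which holds almost surely under Assumption~\ref{ass:wellposed-sobolev}. A quantitative rate is also available: $\sum_{|n|>m_f}|a_n|^2\le C_\beta f^{-2(r+\beta)}R_{\rho,\beta}^2$. I would not rely on it, because the denominator has no uniform lower bound.

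\textbf{Step 2: $\chi_{f,\rho}\to0$.} I split $f$ into the set where $\omega^{\rm pop}_{f,\rho}=0$ and the set where it is positive. On the first set $\chi_{f,\rho}=0$ by definition. On the second, $\log\omega^{\rm pop}_{f,\rho}\to-\infty$, and since $w_1>0$ we get $w_1\log\omega^{\rm pop}_{f,\rho}+b_\omega\to-\infty$. Because $\sigma(s)\to0$ as $s\to-\infty$, the gate value tends to zero along this subsequence. Combining the two cases gives $\chi_{f,\rho}\to0$. Continuity of the square map then gives $\lambda_{f,\rho}=\chi_{f,\rho}^2\to0$.

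\textbf{Step 3: the budget inequality.} For $\psi\in\mathcal C^{AMG}_{f,\rho}$, each entry of $g_{t,\theta}=\chi_{f,\rho}\,g^{\rm cont}_{t,\theta}$ lies in $[0,\chi_{f,\rho}]$. Hence $\|g_{t,\theta}\|_F^2\le SH\,\chi_{f,\rho}^2$, and taking expectations gives $\Lambda_{f,\rho}(\psi)\le\lambda_{f,\rho}$. For elements obtained through the $L^2$ closure, I would pass to the limit: the entrywise bound is a closed constraint, so it is preserved under the $L^2$ limits of the gates. Alternatively, the bound can be read off from the closed defining constraint.

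\textbf{Main obstacle.} The only delicate points are the interchange of limit and expectation in Step 1, which dominated convergence settles using the trivial bound $\omega_f\le1$, and the zero-tail convention in Step 2. That convention is handled by the case split above.
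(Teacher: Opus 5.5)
Your proposal is correct and matches the paper's proof essentially step for step. Both use pointwise decay of the Parseval tail together with the zero-denominator convention, dominated convergence via $0\le\omega_f\le1$, the case split $\omega^{\rm pop}_{f,\rho}>0$ versus $\omega^{\rm pop}_{f,\rho}=0$ to get $\chi_{f,\rho}\to0$, and the entrywise bound $0\le g_{t,\theta}\le\chi_{f,\rho}$ to get $\Lambda_{f,\rho}(\psi)\le\lambda_{f,\rho}$; your extra remarks on the quantitative tail rate and on closure elements go slightly beyond the paper without changing the argument.
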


\begin{proof}
Fix any $u_t^\rho\in L^2(\mathbb T;\mathbb R^V)$. Its Fourier
coefficients are square summable:
\[
    \sum_{n\in\mathbb Z}|a_n(u_t^\rho)|^2<\infty.
\]
Therefore
\[
    \sum_{|n|>m_f}|a_n(u_t^\rho)|^2
    \to0
    \qquad
    \text{as }f\to\infty.
\]
If the denominator in \eqref{eq:app-omega-state} is positive, this
implies $\omega_f(u_t^\rho)\to0$. If the denominator is zero, then
$\omega_f(u_t^\rho)=0$ for every $f$ by convention. Hence
$\omega_f(u_t^\rho)\to0$ almost surely.

Since $0\le\omega_f(\cdot)\le1$, dominated convergence gives
\[
    \omega^{\rm pop}_{f,\rho}
    =
    \mathbb E_{u_0,t}[\omega_f(u_t^\rho)]
    \to0.
\]
Because $w_1>0$, the quantity
$w_1\log\omega^{\rm pop}_{f,\rho}+b_\omega$ tends to $-\infty$ whenever
$\omega^{\rm pop}_{f,\rho}>0$ along the sequence, while the zero-tail
convention covers the case $\omega^{\rm pop}_{f,\rho}=0$. Thus
$\chi_{f,\rho}\to0$, and $\lambda_{f,\rho}\to0$.

Finally, $g_{t,\theta}=\chi_{f,\rho}g_{t,\theta}^{\rm cont}$ and
$0\le g_{t,\theta}^{\rm cont}\le1$ entrywise. Hence
\[
    \frac{1}{SH}\|g_{t,\theta}\|_F^2
    \le
    \chi_{f,\rho}^2
    =
    \lambda_{f,\rho}.
\]
Taking expectations proves the budget bound.
\end{proof}

\subsection{Finite-sample Bounds}
\label{app:theory-finite-sample}

Following Assumption~\ref{ass:trajectory-sampling}, the independent
sample unit is a trajectory. Let $Z_i$ denote the $i$-th resolved
trajectory, including histories and targets over the finite prediction
horizon. For display, write a single-time loss; the finite-horizon
average can be absorbed into the definition of $\ell_\psi$.

For a predictor $\psi$, define
\begin{equation}
    \ell_\psi(Z)
    :=
    \left\|
        Y_t^{f,\rho}
        -
        \psi(\mathcal U_t^{f,\rho})
    \right\|_{\mathcal V_f}^2.
\end{equation}
For a predictor class $\Theta$, define the induced loss class
\begin{equation}
    \mathfrak L(\Theta)
    :=
    \{Z\mapsto\ell_\psi(Z):\psi\in\Theta\}.
\end{equation}
We write
\begin{equation}
    \mathfrak L^F_{f,\rho}:=\mathfrak L(\mathcal C^F_{f,\rho}),
    \qquad
    \mathfrak L^{S4}_{f,\rho}:=\mathfrak L(\mathcal C^{S4}_{f,\rho}),
\end{equation}
\begin{equation}
    \mathfrak L^{CG}_{f,\rho}:=\mathfrak L(\mathcal C^{CG}_{f,\rho}),
    \qquad
    \mathfrak L^{AMG}_{f,\rho}(\lambda)
    :=
    \mathfrak L(\mathcal C^{AMG}_{f,\rho}(\lambda)).
\end{equation}
Let $R_m(\mathfrak L)$ denote the population Rademacher complexity of a
real-valued loss class $\mathfrak L$ with sample size $m$:
\begin{equation}
    R_m(\mathfrak L)
    :=
    \mathbb E_{Z_{1:m},\epsilon}
    \left[
        \sup_{\ell\in\mathfrak L}
        \frac{1}{m}
        \sum_{i=1}^{m}
        \epsilon_i\ell(Z_i)
    \right],
    \label{eq:app-rademacher}
\end{equation}
where $\epsilon_i$ are independent Rademacher random variables.

Under Assumption~\ref{ass:bounded-loss}, a standard Rademacher ERM bound
implies that, with probability at least $1-\eta$, the empirical risk
minimizer over any class $\Theta$ satisfies \citep{Bach}
\begin{equation}
    R_{f,\rho}(\widehat\psi^\Theta)
    \le
    \inf_{\psi\in\Theta}R_{f,\rho}(\psi)
    +
    4R_m(\mathfrak L(\Theta))
    +
    2\ell_\infty
    \sqrt{\frac{\log(2/\eta)}{2m}}.
    \label{eq:app-standard-erm-bound}
\end{equation}
Applying this bound to the relevant classes and using
Proposition~\ref{prop:bayes-decomposition-final}, define
\begin{align}
    U^F_{f,\rho}(\eta)
    &:=
    \varepsilon_{f,\rho}^2
    +
    \mathcal E^F_{f,\rho}
    +
    4R_m(\mathfrak L^F_{f,\rho})
    +
    2\ell_\infty
    \sqrt{\frac{\log(8/\eta)}{2m}},
    \label{eq:app-U-F}\\
    U^{S4}_{f,\rho}(\eta)
    &:=
    \varepsilon_{f,\rho}^2
    +
    \mathcal E^{S4}_{f,\rho}
    +
    4R_m(\mathfrak L^{S4}_{f,\rho})
    +
    2\ell_\infty
    \sqrt{\frac{\log(8/\eta)}{2m}},\\
    U^{CG}_{f,\rho}(\eta)
    &:=
    \varepsilon_{f,\rho}^2
    +
    \mathcal E^{CG}_{f,\rho}
    +
    4R_m(\mathfrak L^{CG}_{f,\rho})
    +
    2\ell_\infty
    \sqrt{\frac{\log(8/\eta)}{2m}},\\
    U^{AMG}_{f,\rho}(\lambda;\eta)
    &:=
    \varepsilon_{f,\rho}^2
    +
    \mathcal E^{AMG}_{f,\rho}(\lambda)
    +
    4R_m(\mathfrak L^{AMG}_{f,\rho}(\lambda))
    +
    2\ell_\infty
    \sqrt{\frac{\log(8/\eta)}{2m}}.
    \label{eq:app-U-AMG}
\end{align}
The common irreducible Bayes error and concentration terms cancel when
subtracting the FFNO bound:
\begin{align}
    U^{S4}_{f,\rho}(\eta)-U^F_{f,\rho}(\eta)
    &=
    -\Gamma^{S4}_{f,\rho}
    +
    4
    \left(
        R_m(\mathfrak L^{S4}_{f,\rho})
        -
        R_m(\mathfrak L^F_{f,\rho})
    \right),
    \label{eq:app-U-diff-S4}\\
    U^{CG}_{f,\rho}(\eta)-U^F_{f,\rho}(\eta)
    &=
    -\Gamma^{CG}_{f,\rho}
    +
    4
    \left(
        R_m(\mathfrak L^{CG}_{f,\rho})
        -
        R_m(\mathfrak L^F_{f,\rho})
    \right),\\
    U^{AMG}_{f,\rho}(\lambda;\eta)-U^F_{f,\rho}(\eta)
    &=
    -\Gamma^{AMG}_{f,\rho}(\lambda)
    +
    4
    \left(
        R_m(\mathfrak L^{AMG}_{f,\rho}(\lambda))
        -
        R_m(\mathfrak L^F_{f,\rho})
    \right).
    \label{eq:app-U-diff-AMG}
\end{align}
These identities separate the approximation gain of memory from its
effective finite-sample complexity cost.

\begin{theorem}[Low-resolution preference condition]
\label{thm:low-resolution-preference-final}
At any fixed resolution $f$, a memory class has a smaller upper bound
than FFNO whenever its approximation gain exceeds four times its
Rademacher-complexity gap. In particular, S4FFNO satisfies
$U^{S4}_{f,\rho}(\eta)<U^F_{f,\rho}(\eta)$ if
\begin{equation}
    \Gamma^{S4}_{f,\rho}
    >
    4
    \left(
        R_m(\mathfrak L^{S4}_{f,\rho})
        -
        R_m(\mathfrak L^F_{f,\rho})
    \right),
\end{equation}
and AMGFNO satisfies
$U^{AMG}_{f,\rho}(\lambda;\eta)<U^F_{f,\rho}(\eta)$ if
\begin{equation}
    \Gamma^{AMG}_{f,\rho}(\lambda)
    >
    4
    \left(
        R_m(\mathfrak L^{AMG}_{f,\rho}(\lambda))
        -
        R_m(\mathfrak L^F_{f,\rho})
    \right).
\end{equation}
\end{theorem}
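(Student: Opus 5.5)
The proof follows directly from the bound-difference identities \eqref{eq:app-U-diff-S4} and \eqref{eq:app-U-diff-AMG}, which were already derived from the definitions \eqref{eq:app-U-F}--\eqref{eq:app-U-AMG}. We will argue the S4FFNO case first and then note that the AMGFNO case is identical with $\mathcal C^{AMG}_{f,\rho}(\lambda)$ in place of $\mathcal C^{S4}_{f,\rho}$. The main point to check is that the upper bounds $U^F$, $U^{S4}$, $U^{AMG}$ share exactly the same irreducible term $\varepsilon_{f,\rho}^2$ and the same concentration term $2\ell_\infty\sqrt{\log(8/\eta)/(2m)}$. This holds because every class is defined on the same resolved history and is evaluated against the same Bayes target via Proposition~\ref{prop:bayes-decomposition-final}. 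These terms therefore cancel exactly, leaving only the approximation-error difference and the complexity difference.

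Concretely, subtracting \eqref{eq:app-U-F} from the $U^{S4}$ definition gives
\[
U^{S4}_{f,\rho}(\eta)-U^F_{f,\rho}(\eta)=-\Gamma^{S4}_{f,\rho}+4\bigl(R_m(\mathfrak L^{S4}_{f,\rho})-R_m(\mathfrak L^F_{f,\rho})\bigr),
\]
using $\Gamma^{S4}_{f,\rho}=\mathcal E^F_{f,\rho}-\mathcal E^{S4}_{f,\rho}$. The right-hand side is strictly negative exactly when $\Gamma^{S4}_{f,\rho}>4\bigl(R_m(\mathfrak L^{S4}_{f,\rho})-R_m(\mathfrak L^F_{f,\rho})\bigr)$, which is the stated sufficient condition; in fact the condition is also necessary for the comparison of upper bounds. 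The same one-line rearrangement of \eqref{eq:app-U-diff-AMG} yields the AMGFNO statement for any fixed $\lambda\ge0$. The general sentence ("a memory class has smaller upper bound whenever its gain exceeds four times its complexity gap") is the same computation written for an arbitrary class $\Theta\supseteq\mathcal C^F_{f,\rho}$ whose bound is built from \eqref{eq:app-standard-erm-bound} with the common confidence level.

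The only subtle point, and the step requiring care, is finiteness and well-definedness. We will note that Assumption~\ref{ass:bounded-loss} makes all Rademacher complexities and approximation errors finite, so the subtractions are legitimate and involve no $\infty-\infty$. By Proposition~\ref{prop:markovian-fallback-final} the gains $\Gamma$ are nonnegative, so the condition is non-vacuous. The comparison is between upper bounds rather than true risks, and the statement asserts no more than that.
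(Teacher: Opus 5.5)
Your proposal is correct and follows the paper's proof: subtract the definitions of the upper bounds, note that $\varepsilon_{f,\rho}^2$ and the concentration term cancel, and observe that $-\Gamma+4\,(\text{complexity gap})$ is negative exactly under the stated condition. Your finiteness and nonnegativity remarks are harmless but not needed. Note, though, that $\Gamma\ge 0$ does not by itself make the condition non-vacuous, since the complexity gap is also nonnegative by monotonicity.
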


\begin{proof}
Both statements follow directly from
\eqref{eq:app-U-diff-S4} and \eqref{eq:app-U-diff-AMG}. If the gain is
larger than four times the complexity gap, the corresponding upper-bound
difference is negative.
\end{proof}

\subsection{A Sufficient Condition for AMGFNO Complexity Convergence}
\label{app:theory-complexity-convergence}

This subsection gives a sufficient condition under which the AMGFNO
loss-class complexity returns to the FFNO loss-class complexity as the
spectral gate closes. The condition is not necessary; it makes the
conditional finite-sample comparison explicit.

Use the normalized feature norm
\begin{equation}
    \|A\|_{S,H}^2
    :=
    \frac{1}{SH}\|A\|_F^2.
\end{equation}

\begin{proposition}[AMGFNO complexity convergence under a closing gate]
\label{prop:amg-complexity-convergence}
Suppose there exist constants
$L_{\rm Dec},B_h,B_z,B_o<\infty$, independent of $f$, such that for all
considered parameters and samples,
\begin{align}
    \|\mathrm{Dec}_\theta(a)-\mathrm{Dec}_\theta(b)\|_{\mathcal V_f}
    &\le
    L_{\rm Dec}\|a-b\|_{S,H},
    \label{eq:prop-decoder-lipschitz}\\
    \|h_{t,\theta}\|_{S,H}
    &\le
    B_h,
    \qquad
    \|z_{t,\theta}\|_{S,H}
    \le
    B_z,
    \label{eq:prop-feature-bounds}\\
    \|Y_t^{f,\rho}\|_{\mathcal V_f}
    &\le
    B_o,
    \qquad
    \|\psi(\mathcal U_t^{f,\rho})\|_{\mathcal V_f}
    \le
    B_o
    \quad
    \text{for all predictors considered}.
    \label{eq:prop-output-bounds}
\end{align}
Then, for $\lambda_{f,\rho}=\chi_{f,\rho}^2$,
\begin{equation}
    0
    \le
    R_m(\mathfrak L^{AMG}_{f,\rho}(\lambda_{f,\rho}))
    -
    R_m(\mathfrak L^F_{f,\rho})
    \le
    C_{\rm comp}\sqrt{\lambda_{f,\rho}},
    \label{eq:app-complexity-convergence}
\end{equation}
where $C_{\rm comp}$ is independent of $f$. Consequently, this complexity
gap converges to zero as $f\to\infty$.
\end{proposition}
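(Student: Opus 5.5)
The plan is to prove the two inequalities in \eqref{eq:app-complexity-convergence} separately. The lower bound will come from class inclusion, and the upper bound from a pointwise coupling between each AMGFNO predictor and an FFNO predictor with the same parameters.

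\textbf{Lower bound.} By Proposition~\ref{prop:markovian-fallback-final}, $\mathcal C^F_{f,\rho}\subset\mathcal C^{AMG}_{f,\rho}(\lambda)$ for every $\lambda\ge0$, and hence $\mathfrak L^F_{f,\rho}\subset\mathfrak L^{AMG}_{f,\rho}(\lambda_{f,\rho})$. The Rademacher complexity is a supremum over the class, so it is monotone under inclusion. The closure convention causes no trouble here: the supremum over an $L^2$-closure is approximated by suprema over dense members, and the losses are bounded, so the bound passes to the limit.

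\textbf{Upper bound.} Fix $\psi_\theta\in\mathcal C^{AMG}_{f,\rho}(\lambda_{f,\rho})$ and pair it with $\psi^F_\theta:=\mathrm{Dec}_\theta(h_{t,\theta})\in\mathcal C^F_{f,\rho}$, which uses the same encoder and decoder parameters. The fused feature differs from $h_{t,\theta}$ by $g_{t,\theta}\odot(z_{t,\theta}-h_{t,\theta})$.

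Since $g_{t,\theta}=\chi_{f,\rho}g^{\rm cont}_{t,\theta}$ with $0\le g^{\rm cont}\le1$ entrywise (Lemma~\ref{lem:gate-convergence-final}), each entry satisfies $|g|\le\chi_{f,\rho}$. Therefore
\[
\|g\odot(z-h)\|_{S,H}\le\chi_{f,\rho}(B_z+B_h)=\sqrt{\lambda_{f,\rho}}\,(B_z+B_h).
\]
By \eqref{eq:prop-decoder-lipschitz},
\[
\|\psi_\theta(\mathcal U)-\psi^F_\theta(\mathcal U)\|\le L_{\rm Dec}(B_h+B_z)\sqrt{\lambda_{f,\rho}}
\]
uniformly over samples and parameters.

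We then transfer this to the losses. For the squared loss,
\[
|\ell_\psi-\ell_{\psi^F}|=|\langle \psi^F-\psi,\,2Y-\psi-\psi^F\rangle|\le 4B_o\|\psi-\psi^F\|,
\]
using \eqref{eq:prop-output-bounds}. For the finite-horizon average, the same bound holds termwise. Set $\Delta:=4B_oL_{\rm Dec}(B_h+B_z)\sqrt{\lambda_{f,\rho}}$, so that $|\ell_\psi(Z)-\ell_{\psi^F}(Z)|\le\Delta$ for every $Z$.

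Now write $\sum_i\epsilon_i\ell_\psi(Z_i)\le\sum_i\epsilon_i\ell_{\psi^F}(Z_i)+\sum_i|\ell_\psi(Z_i)-\ell_{\psi^F}(Z_i)|$. Take the supremum over $\psi$, noting that $\psi^F$ ranges inside $\mathcal C^F_{f,\rho}$, and then take expectations. This gives
\[
R_m(\mathfrak L^{AMG})\le R_m(\mathfrak L^F)+\Delta,
\]
which yields the claim with $C_{\rm comp}=4B_oL_{\rm Dec}(B_h+B_z)$. This constant is independent of $f$ by hypothesis. Convergence to zero then follows from $\lambda_{f,\rho}\to0$ (Lemma~\ref{lem:gate-convergence-final}).

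\textbf{Main obstacle.} The delicate step is justifying the coupling $\psi\mapsto\psi^F$: the map must actually land in $\mathcal C^F_{f,\rho}$, which requires that the pre-insertion features $h_{t,\theta}$ and the decoder are shared between architectures. It must also remain valid for elements of the closure. I expect to handle closure elements by approximating them with genuine parametrized predictors and passing to the limit using the uniform bound $\Delta$ and boundedness of the losses. I also expect to note that the bound uses only the pointwise gate bound $\chi_{f,\rho}$, not the budget $\Lambda\le\lambda$; this is why $\sqrt{\lambda}$ appears.
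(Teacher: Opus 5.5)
Your proposal is correct and matches the paper's proof essentially step for step. The lower bound comes from the Markovian-fallback inclusion together with monotonicity of $R_m$. The upper bound pairs each AMGFNO predictor with its same-parameter FFNO fallback, bounds the feature gap by $\sqrt{\lambda_{f,\rho}}(B_h+B_z)$ using the pointwise gate bound, and passes this through the decoder Lipschitz constant and the $4B_o$ squared-loss estimate, giving the same constant $C_{\rm comp}=4B_oL_{\rm Dec}(B_h+B_z)$. The only cosmetic difference is that you do the Rademacher step inline and pointwise, whereas the paper introduces the difference class $\mathfrak D_{f,\rho}$ and invokes subadditivity; the two amount to the same argument.
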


\begin{proof}
The lower bound follows from monotonicity of Rademacher complexity
because Proposition~\ref{prop:markovian-fallback-final} gives
$\mathcal C^F_{f,\rho}\subset\mathcal C^{AMG}_{f,\rho}(\lambda_{f,\rho})$
in closure, and therefore
$\mathfrak L^F_{f,\rho}\subset\mathfrak L^{AMG}_{f,\rho}(\lambda_{f,\rho})$
in closure.

For the upper bound, take any
$\psi_\theta^{AMG}\in\mathcal C^{AMG}_{f,\rho}(\lambda_{f,\rho})$ and
define its FFNO fallback with the same encoder, pre-AMG layers, and
decoder:
\begin{equation}
    \psi_\theta^F(\mathcal U_t^{f,\rho})
    :=
    \mathrm{Dec}_\theta(h_{t,\theta}).
\end{equation}
The AMGFNO fused feature differs from this fallback feature by
\begin{align}
    &
    (1-g_{t,\theta})\odot h_{t,\theta}
    +
    g_{t,\theta}\odot z_{t,\theta}
    -
    h_{t,\theta}
    \nonumber\\
    &\qquad
    =
    g_{t,\theta}\odot(z_{t,\theta}-h_{t,\theta}).
\end{align}
Since $g_{t,\theta}=\chi_{f,\rho}g_{t,\theta}^{\rm cont}$ and
$0\le g_{t,\theta}^{\rm cont}\le1$ entrywise,
\[
    \|g_{t,\theta}\|_\infty
    \le
    \chi_{f,\rho}
    =
    \sqrt{\lambda_{f,\rho}}.
\]
Therefore, using \eqref{eq:prop-feature-bounds},
\begin{align}
    \|g_{t,\theta}\odot(z_{t,\theta}-h_{t,\theta})\|_{S,H}
    &\le
    \sqrt{\lambda_{f,\rho}}
    \|z_{t,\theta}-h_{t,\theta}\|_{S,H} \nonumber\\
    &\le
    \sqrt{\lambda_{f,\rho}}(B_h+B_z).
\end{align}
By the decoder Lipschitz condition \eqref{eq:prop-decoder-lipschitz},
\begin{equation}
    \left\|
        \psi_\theta^{AMG}(\mathcal U_t^{f,\rho})
        -
        \psi_\theta^F(\mathcal U_t^{f,\rho})
    \right\|_{\mathcal V_f}
    \le
    L_{\rm Dec}(B_h+B_z)\sqrt{\lambda_{f,\rho}}.
    \label{eq:app-prediction-difference}
\end{equation}

For any target $y$ and predictions $p,q$ with norms bounded by $B_o$,
\begin{align}
    \left|
        \|y-p\|^2-\|y-q\|^2
    \right|
    &=
    \left|
        \langle q-p,2y-p-q\rangle
    \right| \nonumber\\
    &\le
    \|p-q\|
    \left(
        2\|y\|+\|p\|+\|q\|
    \right) \nonumber\\
    &\le
    4B_o\|p-q\|,
\end{align}
where the last step uses \eqref{eq:prop-output-bounds}. Combining this
with \eqref{eq:app-prediction-difference}, there exists
\[
    C_{\rm loss}
    :=
    4B_oL_{\rm Dec}(B_h+B_z)
\]
such that
\begin{equation}
    \left|
        \ell_{\psi_\theta^{AMG}}(Z)
        -
        \ell_{\psi_\theta^F}(Z)
    \right|
    \le
    C_{\rm loss}\sqrt{\lambda_{f,\rho}}
    \label{eq:app-loss-difference}
\end{equation}
for every sample $Z$.

Define the difference class
\begin{equation}
    \mathfrak D_{f,\rho}
    :=
    \left\{
        Z\mapsto
        \ell_{\psi_\theta^{AMG}}(Z)
        -
        \ell_{\psi_\theta^F}(Z)
        :
        \psi_\theta^{AMG}\in
        \mathcal C^{AMG}_{f,\rho}(\lambda_{f,\rho})
    \right\}.
\end{equation}
By \eqref{eq:app-loss-difference}, every function in
$\mathfrak D_{f,\rho}$ has supremum norm at most
$C_{\rm loss}\sqrt{\lambda_{f,\rho}}$, so
\[
    R_m(\mathfrak D_{f,\rho})
    \le
    C_{\rm loss}\sqrt{\lambda_{f,\rho}}.
\]
Each AMGFNO loss can be written as an FFNO fallback loss plus a function
in $\mathfrak D_{f,\rho}$. By subadditivity of Rademacher complexity,
\begin{align}
    R_m(\mathfrak L^{AMG}_{f,\rho}(\lambda_{f,\rho}))
    &\le
    R_m(\mathfrak L^F_{f,\rho})
    +
    R_m(\mathfrak D_{f,\rho}) \nonumber\\
    &\le
    R_m(\mathfrak L^F_{f,\rho})
    +
    C_{\rm loss}\sqrt{\lambda_{f,\rho}}.
\end{align}
Taking $C_{\rm comp}=C_{\rm loss}$ proves the upper bound. Since
$\lambda_{f,\rho}\to0$ by Lemma~\ref{lem:gate-convergence-final}, the
gap converges to zero. The subadditivity and bounded-difference steps are
standard Rademacher-complexity arguments \citep{Bach}.
\end{proof}

\subsection{Proof of Theorem~\ref{thm:main-amg-highres}}
\label{app:theory-main-proof}
\label{app:proof-thm22}

\begin{proof}[Proof of Theorem~\ref{thm:main-amg-highres}]
By Lemma~\ref{lem:gate-convergence-final},
\[
    \lambda_{f,\rho}
    =
    \chi_{f,\rho}^2
    \to0.
\]
By Proposition~\ref{prop:amg-complexity-convergence},
\begin{equation}
    0
    \le
    R_m(\mathfrak L^{AMG}_{f,\rho}(\lambda_{f,\rho}))
    -
    R_m(\mathfrak L^F_{f,\rho})
    \le
    C_{\rm comp}\sqrt{\lambda_{f,\rho}}
    \to0.
    \label{eq:proof-main-complexity-gap}
\end{equation}

From the finite-sample decomposition in \eqref{eq:app-U-diff-AMG},
\begin{align}
    &
    U^{AMG}_{f,\rho}(\lambda_{f,\rho};\eta)
    -
    U^F_{f,\rho}(\eta)
    \nonumber\\
    &\quad
    =
    -\Gamma^{AMG}_{f,\rho}(\lambda_{f,\rho})
    +
    4
    \left(
        R_m(\mathfrak L^{AMG}_{f,\rho}(\lambda_{f,\rho}))
        -
        R_m(\mathfrak L^F_{f,\rho})
    \right).
    \label{eq:proof-amg-uf-diff}
\end{align}
Theorem~\ref{thm:gain-convergence} gives
\[
    0
    \le
    \Gamma^{AMG}_{f,\rho}(\lambda_{f,\rho})
    \le
    \mathcal E^F_{f,\rho}
    \to0.
\]
Together with \eqref{eq:proof-main-complexity-gap}, this proves
\[
    U^{AMG}_{f,\rho}(\lambda_{f,\rho};\eta)
    -
    U^F_{f,\rho}(\eta)
    \to0.
\]

For fixed-fusion S4FFNO, \eqref{eq:app-U-diff-S4} gives
\begin{equation}
    U^{S4}_{f,\rho}(\eta)
    -
    U^F_{f,\rho}(\eta)
    =
    -\Gamma^{S4}_{f,\rho}
    +
    4
    \left(
        R_m(\mathfrak L^{S4}_{f,\rho})
        -
        R_m(\mathfrak L^F_{f,\rho})
    \right).
\end{equation}
Again, Theorem~\ref{thm:gain-convergence} gives
\[
    0
    \le
    \Gamma^{S4}_{f,\rho}
    \le
    \mathcal E^F_{f,\rho}
    \to0.
\]
If the fixed-fusion S4FFNO complexity gap satisfies
\[
    R_m(\mathfrak L^{S4}_{f,\rho})
    -
    R_m(\mathfrak L^F_{f,\rho})
    \to
    \Delta^{S4}_{m,\rho}>0,
\]
then
\[
    U^{S4}_{f,\rho}(\eta)
    -
    U^F_{f,\rho}(\eta)
    \to
    4\Delta^{S4}_{m,\rho}
    >
    0.
\]
At the same time,
\[
    U^{AMG}_{f,\rho}(\lambda_{f,\rho};\eta)
    -
    U^F_{f,\rho}(\eta)
    \to0.
\]
Therefore, for all sufficiently large $f$,
\[
    U^{AMG}_{f,\rho}(\lambda_{f,\rho};\eta)
    <
    U^{S4}_{f,\rho}(\eta).
\]
\end{proof}

\subsection{Interpretation and Scope}
\label{app:theory-scope}

The theory compares FFNO, S4FFNO, and AMGFNO as predictors defined on
the same resolved history $\mathcal U_t^{f,\rho}$. FFNO is the restricted
subclass that ignores history, S4FFNO uses fixed residual memory fusion,
and AMGFNO uses the same S4 memory branch with the adaptive memory gate.

Theorem~\ref{thm:gain-convergence} isolates the memory benefit induced
by projection-based loss of Fourier modes. At low resolution, unresolved
spatial modes can still influence the next resolved state, so the
history-conditioned Bayes target may depend on temporal history. This is
the condition in which the low-resolution preference condition in
Theorem~\ref{thm:low-resolution-preference-final} can favor memory
models. At high resolution, under the stated projection-induced partial
observation assumptions, the unresolved tail $Q_f u_t^\rho$ vanishes and
the projected Markovian target becomes sufficient. The additional
approximation gain attributable to this spectral-projection mechanism
therefore converges to zero.

Theorem~\ref{thm:main-amg-highres} gives a conditional finite-sample
comparison. Fixed-fusion S4FFNO may retain a nonvanishing effective
complexity cost, while AMGFNO drives its gate budget
$\lambda_{f,\rho}$ to zero through the frequency-aware gate. Under the
sufficient conditions of
Proposition~\ref{prop:amg-complexity-convergence}, the AMGFNO
Rademacher-complexity gap is $O(\sqrt{\lambda_{f,\rho}})$ and therefore
vanishes in the high-resolution limit.

The analysis controls one-step resolved prediction risk. It does not
directly bound accumulated autoregressive rollout error, which is
evaluated empirically in Section~\ref{sec:experiment}. The analysis also
isolates only the spectral projection mechanism. It does not rule out
persistent memory benefits from other sources of partial observability,
such as latent parameters, stochastic forcing, measurement noise,
temporal subsampling, or unresolved physical variables. This is
consistent with the Mori--Zwanzig viewpoint that unresolved variables
can induce non-Markovian closure terms \citep{Mori-Zwanzig}.



\end{document}